\newtheorem{theorem}{Theorem}[section]
\newtheorem{cor}{Corollary}[theorem]
\newtheorem{lemma}[theorem]{Lemma}
\newtheorem{thm}[theorem]{Theorem}
\newtheorem{pro}[theorem]{Proposition}
\newtheorem{aspt}{Assumption}[section]
\def\Prob{{\mathbb P}}
\def\E{{\mathbb E}}
\def\phat{\hat{p}}
\def\Otilde{\widetilde{O}}
\def\calM{\mathcal{M}}
\def\Zhat{\widehat{Z}}
\newcommand{\Prj}[1]{\Pi_{\mathcal{M}}(#1)}
\newcommand{\IDM}{\textnormal{IDM}}
\begin{document}

\title{Resolving memorization in empirical diffusion model for manifold data in high-dimensional spaces}
\author{
    Yang Lyu$^{\dagger}$,
    Tan Minh Nguyen$^{\S}$,
    Yuchun Qian$^{\ddagger}$, 
    Xin T. Tong$^{*}$\\
    \\
    Department of Mathematics, National University of Singapore
}

\date{}

\begingroup
\renewcommand\thefootnote{}
\footnotetext{$^{\dagger}$\texttt{yang.lyu@u.nus.edu}}
\footnotetext{$^{\S}$\texttt{tanmn@nus.edu.sg}}
\footnotetext{$^{\ddagger}$\texttt{yuchun\_qian@u.nus.edu}}
\footnotetext{$^{*}$Corresponding author. \texttt{xin.t.tong@nus.edu.sg}}
\endgroup

\maketitle

\begin{abstract}
Diffusion models are popular tools for generating new data samples, using a forward process that adds noise to data and a reverse process to denoise and produce samples. However, when the data distribution consists of $n$ points, empirical diffusion models tend to reproduce existing data points—a phenomenon known as the memorization effect. Current literature often addresses this with complex machine learning techniques. This work shows that the memorization issue can be solved simply by applying an inertia update at the end of the empirical diffusion simulation. Our inertial diffusion model requires only the empirical score function and no additional training. We demonstrate that the distribution of samples from this model approximates the true data distribution on a $C^2$ manifold of dimension $d$, within a Wasserstein-1 distance of $O(n^{-\frac{2}{d+4}})$. This bound significantly shrinks the Wasserstein distance between the population and empirical distributions, confirming that the inertial diffusion model produces new, diverse samples. Remarkably, this estimate is independent of the ambient space dimension, as no further training is needed. Our analysis leverages the fact that the inertial diffusion samples resemble Gaussian kernel density estimations on the manifold, revealing a novel connection between diffusion models and manifold learning.    
\end{abstract}

\textbf{Keywords:} Generative model, Diffusion model,  Manifold learning, Kernel Method, Curse of dimension, Sample complexity
\\
\\
\textbf{AMS Classification:} 60H30, 65Y20
58J65, 62D05

\section{Introduction}
Generative models are computational mechanisms trained to produce new synthetic data points from an unknown distribution where a limited amounts of training data is readily available. 
They have been implemented in various scenarios for the generation of images, audio and texts. 
Well known members of generative models include 
Generative Adversarial Networks (GANs), Variational Autoencoders (VAEs), and normalizing flows. Diffusion model (DM), also known as the denoising diffusion probabilistic model (DDPM)~\cite{ho2020denoising}, is a new addition to the family. It has gained great popularity due to its capability in generating high quality samples~\cite{dhariwal2021diffusion} and model reproducibility ~\cite{zhang2023emergence}. 

Mathematically, the idea behind DM can be explained using simple stochastic analysis~\cite{song2020score}.  Consider an Ornstein--Unlenbeck (OU) process starting from $X_0\sim p_0$, with 
\begin{equation}
\label{eqn:forward}
dX_t=-X_tdt+\sqrt{2}dW_t,     
\end{equation}
where $W_t$ is a $D$-dimensional Brownian motion.
The Fokker Plank equation (FPE) describes the distribution of $X_t$: $\tfrac{d}{dt}p_t=\nabla \cdot (x p_t)+\Delta p_t$. Due to the linearity of the OU process, we can write
\begin{equation}
\label{eqn:popdensity}
p_t(x)=\int K_{\sigma_t}(x, \alpha_t y)p_0(dy),    
\end{equation}
using $\alpha_t=e^{-t},\sigma^2_t=1-e^{-2t}$ along with Gaussian kernel 
\[
K_{\sigma}(x,y)=(2\pi \sigma^2)^{-D/2} \exp\left(-\frac{1}{2\sigma^2}\|x-y\|^2\right). 
\]
The reversed distribution $q_{t}=p_{T-t}$ follows 
\[
\frac{d}{dt}q_t=-\nabla \cdot (x q_t)-\Delta q_t=\nabla \cdot (-(x+\nabla \log q_t)q_t)=\nabla \cdot (-(x+2\nabla \log q_t)q_t)+\Delta q_t. 
\]
Now consider an ordinary differential equation (ODE)
\begin{equation}
\label{eqn:ODE}
\frac{d}{dt}Y_t=Y_t+\nabla \log p_{T-t}(Y_t). 
\end{equation}
or a stochastic differential equation (SDE)
\begin{equation}
\label{eqn:SDE}
d\widetilde{Y}_t=(\widetilde{Y}_t+2\nabla \log p_{T-t}(\widetilde{Y}_t)) dt+\sqrt{2}dB_t,
\end{equation}
where $B_t$ is another $D$-dimensional Brownian
Suppose $Y_0,\widetilde{Y}_0\sim q_{0}=p_T$, using continuity equation and FPE, it is straight forward to check that the distributions of $Y_t$ and $\widetilde{Y}_t$ follow $q_t=p_{T-t}$. In particular, if we simulate \eqref{eqn:ODE} or \eqref{eqn:SDE}, $Y_T$ and $\widetilde{Y}_T$ will be samples from $p_0$. For the simplicity of exposition, we will focus on the ODE version when discussion DM. We remark that most of our results also apply to the SDE version. 

\subsection{Empirical DM and memorization}
One significant challenge confronting all generative models is \emph{memorization}, where the generated data closely resembles—or occasionally replicates exactly—a training data point. Memorization not only undermines the goal of generating truly ``new" data points but can also lead to legal concerns related to plagiarism or privacy violations. Several studies ~\cite{pidstrigach2022score,li2024good, lu2023mathematical} attribute this phenomena to the empirical DM, which can be viewed a full capacity limit of deep neural network (NN) trained DM used in practice.

Empirical DM can be easily obtained by replacing $p_0$ with its empirical estimate in the DM derivation.
In practice, we cannot access the population distribution $p_0$ directly. Rather, we have $n$ i.i.d. samples $X_{(i)}\sim p_0$. We denote their empirical distribution as 
\[
\phat_0=\frac{1}{n}\sum_{i=1}^n \delta_{X_{(i)}},
\]
where $\delta_x$ stands for the Dirac measure at point $x$. Replacing $p_0$ with $\phat_0$ in \eqref{eqn:popdensity}, the empirical DM distribution is given by 
\begin{equation}
\label{eqn:emDMdist}
\phat_t(x)=\frac1n\sum_{j=1}^n K_{\sigma_t}(x, \alpha_t X_{(j)}).
\end{equation}
One can consider the DM ODE \eqref{eqn:ODE} with the empirical score, i.e. 
\begin{equation}
\label{eqn:emDM}
\frac{d}{dt}Z_t=Z_t+\nabla \log \phat_{T-t}(Z_t).    
\end{equation}
It is not difficult to show that if $Z_0$ is generated from $\phat_T$, the distribution of $Z_t$ will be the same as $\phat_{T-t}$. In particular, the distribution of the final outcome $Z_T$ will follow $\phat_0$, which is the training data distribution. 
While mathematically this is elegant, it also means $Z_T$ does not generate a new data point, but rather, it will be one of the training data point. 
This is often referred to as the \emph{memorization} phenomena.

In practice, $Z_0$ is usually generated from an approximation of $\phat_T$ when $T$ is very large. Due to the OU setup, this distribution is often conveniently as $\mathcal{N}(0, I_D)$. But this modification does not resolve much of the memorization issue. See Lemma \ref{lem:inipatch} for a detailed discussion. 

Empirical diffusion models (DM) are not typically implemented directly in practice. Instead, it is more common to train an DNN to approximate the score function. When the DNN reaches its full capacity, the empirical score can be viewed as the global minimizer of denoising score matching \cite{li2024good, lu2023mathematical}. In this context, memorization within the empirical DM can be interpreted as a fundamental cause. Therefore, addressing memorization in empirical DM is of critical importance.

\subsection{Generalization with inertia update} 
Since the discovery of memorization phenomena, there has been growing interest in finding ways to resolve it. One line of works  focus on the fact that DM with neural network implementation does not show memorization when the data size is rich. The main goal of this line is showing that if the empirical score $\nabla \log \phat_{T-t}$ is replaced by an appropriate neural network architecture approximation, memorization will not take effect \cite{oko2023diffusion,tang2024adaptivity,azangulov2024convergence}. Another line of works focus on modifying the empirical DM score function so $Z_T$ will not be any of the existing data point. Some examples include 
regularization schemes~\cite{baptista2025memorization}
and Wasserstein proximal operator~\cite{zhang2024wasserstein}.

By looking at these existing works, one may think that empirical score by itself is not enough to generate bonafide new data samples. One of the main goal of this work is to debunk this fallacy. 

We consider a simple adjustment to the empirical DM process \eqref{eqn:emDM} that consists of two step. In the first step, we generate the empirical DM process \eqref{eqn:emDM} up to time $T-h^2$. In the second step we update 
\begin{equation}
\label{eqn:iniup}
\Zhat_T=\alpha_{h^2}^{-1}(Z_{T-h^2}+\sigma^2_{h^2} \nabla \log \phat_{T-h^2}(Z_{T-h^2})). 
\end{equation}
Here $h$ is a small tuning parameter, which plays the role similar to the bandwidth in kernel based learning methods. We will refer \eqref{eqn:iniup} as an \emph{inertia} update, since it only uses the empirical score at time $T-h^2$ to generate an update for the final time period $[T-h^2,T]$. We will also refer the data generation process as \emph{inertia diffusion model} (IDM) in subsequent discussion. Its pseudo code can be found in Algorithm \ref{alg:iDM}. We name it IDM as its trajectory is similar to running the DM ODE \eqref{eqn:ODE} in the terminal time period $[T-h^2, T]$ with a fixed direction $\nabla \log \phat_{T-t}(Z_{T-t})$. A simple illustrate of the IDM procedure with $S^1$ being the latent manifold can be found in Figure \ref{fig:idm_illustration}. It is clear that inertia update pushes the samples onto the manifold and interpolates between the training data points.

\begin{algorithm}
\caption{Inertia diffusion model}\label{alg:iDM}
\begin{algorithmic}
\Require Empirical score $\nabla \log \phat_t(x), 0<t\leq T,$ bandwidth $h$. 
\State Sample $Z_0 \sim \phat_T$ or $\mathcal{N}(0,I_D)$
\State Solve ODE $\frac{d}{dt}Z_t=Z_t+\nabla \log \phat_{T-t}(Z_t)$ up to $T-h^2$. 
\State Update $\Zhat_T=\alpha^{-1}_{h^2}(Z_{T-h^2}+\sigma^2_{h^2} \nabla \log \phat_{T-h^2}(Z_{T-h^2}))$.
\State Output $\Zhat_T$ as a sample. 
\end{algorithmic}
\end{algorithm}

\begin{figure}[htbp]
    \centering
    \includegraphics[width = 0.6\textwidth]{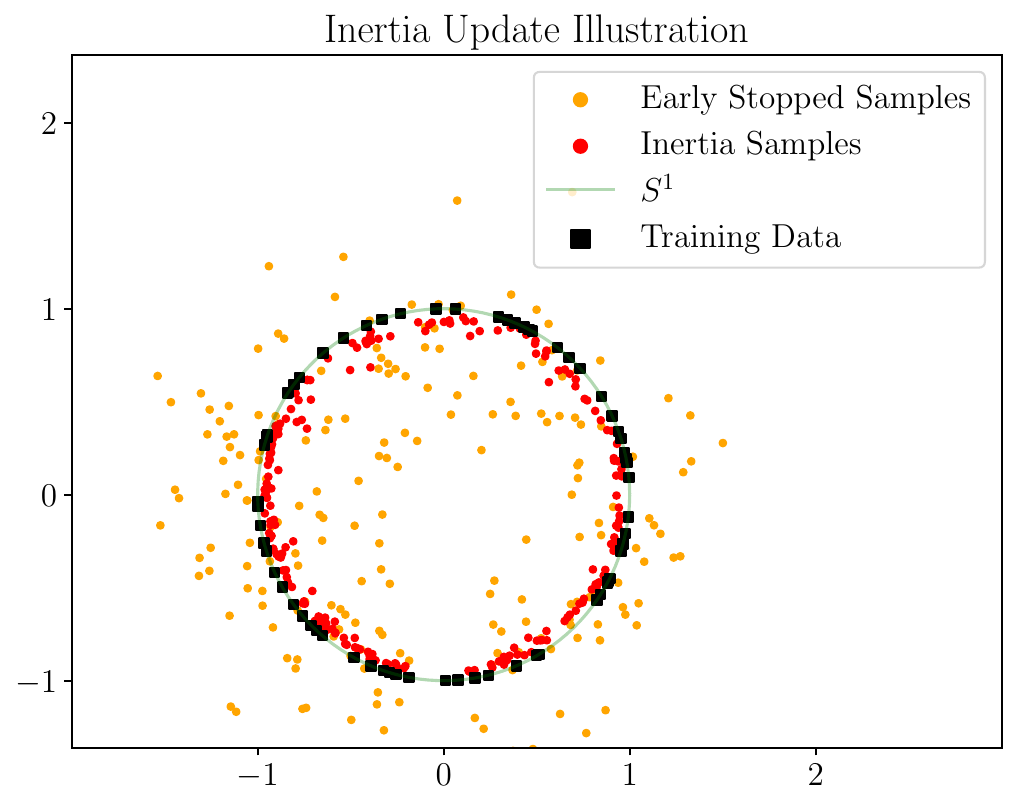}
    \caption{Illustration of the inertia update. 
    Training data (black boxes)  are $70$ i.i.d. samples uniformly drawn from $S^1$. 
    $200$ early stopped samples (yellow dots) are generated using Equation~\eqref{eqn:emDMdist} with $t=0.05$, which are then updated using Equation~\eqref{eqn:iniup} to IDM samples (red dots). 
    } \label{fig:idm_illustration}
\end{figure}

Comparing with other existing methods, the implementation of IDM requires only the empirical score function and no further training. It involves no requirement of DNN architecture specification or training modification like regularization. In particular, analysis of IDM does not require the assumption that the training loss optimization--which is often nonconvex--can be solved.

On the other hand, the training-free IDM does not leverage amortization like other DNN-based diffusion models. Specifically, generating a new data sample requires processing the entire training dataset, which can be computationally expensive in practice. To improve efficiency, a more practical implementation of IDM would involve replacing the empirical score $\hat{p}_t$ with an accurate approximation $\tilde{p}_t$, achieved using standard DNNs or other machine learning architectures. The primary advantage of IDM is that it eliminates concerns about the DM memorizing training data at full capacity.

To rigorously illustrate that IDM's capability on generalization, we consider the popular manifold data assumption, which has widely been seen as the reason why many machine learning methods can beat the curse of dimensionality. The detailed definition can be found in Section 2.  

\begin{theorem}
\label{thm:main1}[Informal]
Suppose $p_0$ is a regular $C^2$ density supported on a $C^2$ $d$-dimensional compact manifold $\calM$.  Suppose $X_{(1)},\ldots, X_{(n)}$ are i.i.d. samples from $p_0$. Choose bandwidth $h=n^{-\frac{1}{d+4}}$, let $\phat_{\text{IDM}}$ be the output distribution of Algorithm \ref{alg:iDM} with $Z_0\sim \phat_T$. Then with high probability 
\[
W_1(\phat_{\text{IDM}},p_0)\leq \Otilde(h^2). 
\]    
Here $W_1$ is the Wasserstein-1 distance, and $\widetilde{O}$ hides some polynomials of $\log n, d$ and information of $\calM$. But it is independent of $D$. 
\end{theorem}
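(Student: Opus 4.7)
The plan is to represent the IDM output $\Zhat_T$ as a Nadaraya--Watson (NW) regression on the training points, show that its law is essentially a Gaussian kernel density estimate projected onto $\calM$, and bound the Wasserstein distance via a bias-variance decomposition familiar from manifold kernel density estimation.

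\textbf{Step 1 (closed form).} Direct substitution of $\nabla_x K_\sigma(x,y)=-\sigma^{-2}(x-y)K_\sigma(x,y)$ into \eqref{eqn:iniup} and cancellation of the $\alpha_{h^2}^{\pm1}$ factors give
\[
\Zhat_T=\frac{\sum_{j=1}^n X_{(j)}\,K_{\sigma_{h^2}}(Z_{T-h^2},\alpha_{h^2}X_{(j)})}{\sum_{j=1}^n K_{\sigma_{h^2}}(Z_{T-h^2},\alpha_{h^2}X_{(j)})}.
\]
Since the reverse ODE from $Z_0\sim\phat_T$ is solved exactly, Fokker--Planck gives $Z_{T-h^2}\sim\phat_{h^2}$, representable as $\alpha_{h^2}X_{(J)}+\sigma_{h^2}\xi$ for $J$ uniform on $\{1,\dots,n\}$ and $\xi\sim\mathcal N(0,I_D)$. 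Crucially, the factor $(2\pi\sigma_{h^2}^2)^{-D/2}$ cancels in the NW ratio, and the exponent $\tfrac12\|\xi\|^2$ is common to every weight; this double cancellation is what will ultimately make the final rate $D$-independent.

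\textbf{Step 2 (bias).} Introduce the population analog
\[
m(y)=\frac{\int x\,K_{\sigma_{h^2}}(y,\alpha_{h^2}x)\,p_0(dx)}{\int K_{\sigma_{h^2}}(y,\alpha_{h^2}x)\,p_0(dx)}.
\]
Taylor expanding $\calM$ and $p_0$ in normal coordinates at $\Prj{y}$, and using that odd tangential Gaussian moments vanish, one obtains $\|m(y)-\Prj{y}\|=O(h^2)$ uniformly over $y$ within an $O(h\sqrt{\log n})$ tubular neighborhood of $\calM$, with constants depending on curvature and $\nabla p_0$.

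\textbf{Step 3 (variance).} Write $\Zhat_T=\hat N(Z_{T-h^2})/\hat D(Z_{T-h^2})$ and $m(y)=N(y)/D(y)$. Since $D(y)\asymp h^d$ on the tubular neighborhood, Bernstein's inequality applied separately to $\hat N,\hat D$ yields a pointwise bound $\|\Zhat_T-m(Z_{T-h^2})\|=\Otilde(\sqrt{\log n/(nh^d)})$ with probability $1-n^{-c}$. A covering of the neighborhood, combined with Lipschitz estimates on $\hat N,\hat D$, upgrades this to a uniform-in-$y$ bound. Combined with Step~2, on a high probability event
\[
\|\Zhat_T-\Prj{Z_{T-h^2}}\|=\Otilde\!\bigl(h^2+\sqrt{\log n/(nh^d)}\bigr).
\]

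\textbf{Step 4 ($W_1$ bound and main obstacle).} It remains to bound $W_1(\mathrm{Law}(\Prj{Z_{T-h^2}}),p_0)$. The pushforward $\Prj{\cdot}\#\phat_{h^2}$ is a $\calM$-valued measure which, by Step~1's sampling representation, is a Gaussian-kernel density estimator of $p_0$ lifted to $\calM$ with bandwidth $h$. By the standard manifold-KDE bias/variance rates, its total-variation distance to $p_0$ is $\Otilde(h^2+\sqrt{\log n/(nh^d)})$, and since $\calM$ has bounded diameter this transfers to an identical $W_1$-bound. Stringing everything together,
\[
W_1(\phat_{\IDM},p_0)=\Otilde\!\bigl(h^2+\sqrt{\log n/(nh^d)}\bigr)=\Otilde(h^2)\quad\text{at }h=n^{-1/(d+4)}.
\]
The main technical obstacle is the uniform variance bound of Step~3: because $\hat N/\hat D$ is evaluated at the random, training-data-dependent argument $Z_{T-h^2}$, pointwise concentration is insufficient, and one must combine a covering with a careful truncation of the ``bad'' events where training points are atypically sparse near $Z_{T-h^2}$. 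Keeping all constants $D$-free throughout -- despite $\|\xi\|\sim\sqrt D$ -- hinges on the double exponent cancellation exposed in Step~1 and on confining the geometric analysis to the tangential component of $\xi$ when analyzing the projection in Step~4.
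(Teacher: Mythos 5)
Your overall skeleton matches the paper's: write $\Zhat_T$ as a Nadaraya--Watson estimator, argue its law is close to a manifold KDE, then invoke the $\Otilde(h^2)$ rate for manifold KDE. You also correctly flag that $D$-independence must come from cancellations in the NW ratio. However, the proof as written has a genuine gap that the paper itself explicitly warns about.

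The problem is in Steps~2--4. Your bias and variance bounds are stated for $y$ in an $O(h\sqrt{\log n})$ tubular neighborhood of $\calM$, and Step~4 passes through $\Prj{Z_{T-h^2}}$. But $Z_{T-h^2}=\alpha_{h^2}X_{(J)}+\sigma_{h^2}\xi$ is typically at distance $\sqrt{D}\,\sigma$ from $\calM$, which is far outside any $O(h\sqrt{\log n})$ tube once $D\gg\log n$; for large $D$ it can exceed the reach, so $\Prj{\cdot}$ is not even well defined there, and $\hat D(y)\asymp h^d$ also fails (the denominator carries an additional $\exp(-\|\sigma\xi_\bot\|^2/2\sigma^2)$ factor). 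Noting that $(2\pi\sigma^2)^{-D/2}$ and a common exponent cancel in the NW ratio is necessary but not sufficient: the exponent you call ``common to every weight'' is $\|\sigma\xi_\bot\|^2$, and what remains after cancelling it is the cross term $\langle x-X_{(i)},\xi_\bot\rangle$, which must itself be controlled before any on-manifold bias/variance analysis can be applied.

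The paper closes this gap with a separate concentration argument (its Theorem~\ref{thm:NWproj}): a Pythagorean decomposition $\|x+\sigma\xi-X_{(i)}\|^2 = \|x+\sigma\xi_{\mathcal T}-X_{(i)}\|^2 + \|\sigma\xi_\bot\|^2 + 2\sigma\langle x-X_{(i)},\xi_\bot\rangle$, in which the middle term cancels in the ratio, and the cross term is a mean-zero Gaussian with standard deviation $O(r^2)$ because $x-X_{(i)}$ is tangential to second order (hence nearly orthogonal to $\xi_\bot$). A union bound over the $O(n)$ relevant indices then gives $\|F_{X,\sigma}(x+\sigma\xi)-F_{X,\sigma}(\exp_x(\sigma\xi_{\mathcal T}))\|=\Otilde(\sigma^2)$ with high probability over $\xi$, conditionally on $X$. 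Only after this reduction to the on-manifold point $\exp_x(\sigma\xi_{\mathcal T})$ does the paper apply the on-manifold $L_\infty$ bound $\sup_{x\in\calM}\|F_{X,\sigma}(x)-x\|=\Otilde(\sigma^2)$ and the KDE comparison. Your Step~3 variance concern (random evaluation point) is also resolved this way, since the conditional-on-$X$ concentration in $\xi$ plus the on-manifold uniform bound in $X$ avoid any need to apply pointwise NW concentration at a data-dependent argument. To make your argument correct you would need to insert an explicit version of this Pythagorean/cross-term concentration step before Steps~2--4, and replace $\Prj{Z_{T-h^2}}$ by $\exp_{X_J}(\sigma\xi_{\mathcal T})$ throughout.
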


To see why Theorem \ref{thm:main1} indicates IDM generates a new sample from $p_0$, we recall that with high probability $W_1(\phat_0,p_0)=\Otilde(n^{-\frac1d})$ \cite{fournier2015rate}, which we will referred to as the \emph{memorization} learning rate. So if $d\geq 5$, then $W_1(\phat_{\text{IDM}},p_0)\ll W_1(\phat_0,p_0)$, which indicates IDM has significantly better generalization capability than pure memorization. 

Another important remark is that the upper bound in Theorem \ref{thm:main1} is completely independent of $D$ and the sample size does need to surpass $D$. This is different from other existing DM generalization analysis.  

Finally, our work has a close connection with manifold learning literature. In particular  1) we can show the inertia update have high probability "projecting" points onto $\calM$ approximately; 2) we can also show $W_1(\phat_{\text{IDM}},\phat_{\text{KDE}})=\Otilde(h^2)$ where $\phat_{\text{KDE}}$ is the kernel density estimation (KDE) on $\calM$ with Gaussian kernel bandwidth $h$. We delay the detailed discussion to Section \ref{sec:method}.

\subsection{Literature review}
Since the invention of DM \cite{ho2020denoising,song2020score}, it has gained great interest in the community. There have been many works aiming to understand its performance \cite{chen2022sampling} or designing  modifications for various purposes \cite{graikos2022diffusion}. It is unlikely that we can cover every line of this fast growing literature, and we will only discuss works that are  relevant to the memorization phenomena. 

One line of works  focus on statistical learning rate of DM for the estimation of the data distribution $p_0$. Earlier works like \cite{chen2022sampling} indicates that the accuracy of DM depends on the accuracy of the population score $\nabla \log p_t$ estimation. \cite{oko2023diffusion} indicates the indicates the learning rate of deep neural network (DNN) based DM is  $\Otilde(C_D n^{-\frac{s+1}{2s+D}})$ in $W_1$ distance, where $s$ is the smoothness of the density $p_0$ and $C_D$ is a constant that depends on $D$. While this rate is minimax optimal, it is also subject to the curse of ambient dimension $D$, which can scale up to millions for practical problems. 

To explain why DM in practice can have good performance despite the curse of dimension, several works have focused on the popular manifold hypothesis~\cite{fefferman2016testing}, where the data is assumed to lie on a $d$-dimensional manifold  $\calM$ with smoothness $\beta$. $d$ is assumed to be significantly smaller than $D$ and often viewed as a $O(1)$ constant. Recently, 
\cite{tang2024adaptivity} shows that with appropriate DNN structure, DM can achieve $\Otilde(D^{s+d/2}\max\{n^{-\frac{s+1}{2s+d}},n^{-\frac{\beta}{2s+d}}\})$ rate. Later \cite{azangulov2024convergence} improves estimate to $\Otilde(\sqrt{D}n^{-\frac{s+1}{2s+d}})$ but the manifold smoothness $\beta$ needs to be sufficiently high and the sample size $n\gtrsim D$. Both these rates are better than the memorization rate $W_1(\phat_0, p_0)=\Otilde(n^{-\frac{1}{d}})$ \cite{fournier2015rate}, so they indicate DM can generate new data under the manifold hypothesis.
Meanwhile, 
both these works leverage on a specific DNN structure to approximate the population score (see equation (15) in \cite{azangulov2024convergence}), and this structure need to be configured according to the training data. In other words, implementation such DNN structure can be impractical. 
Besides manifold hypothesis, one can also leverage the locality of the distribution to resolve memorization and improve estimation accuracy \cite{kamb2024analytic,gottwald2025localized}. But this is a different framework and the results are difficult to compare with ours.

Another line of works focus on modifying the empirical DM score function so $Z_T$ will not be any of the existing data point. Some examples include 
regularization schemes~\cite{baptista2025memorization}, Wasserstein proximal operator~\cite{zhang2024wasserstein}, and local manifold dimension fitting ~\cite{ross2024geometric}
. While these works demonstrate convincing empirical evidence, there has been no rigorous analysis showing they are better estimator of $p_0$ than pure memorization.

One very interesting connection this paper create is the link between DM and manifold learning. In particular, as one important step, we show that IDM will generate samples that are approximately following the kernel density estimator (KDE) on $\calM$. Manifold KDE is known to have $O(n^{-\frac{2}{d+4}})$ rate for $C^2$ manifold and $p_0$ \cite{ozakin2009submanifold}. But to implement it, one needs information of $\calM$, e.g. a proper parameterization on $\calM$. In particular, manifold KDE does not provide a direct way to generate new samples from $p_0$. On the other hand, there is also a big and constantly growing literature on inference of manifold \cite{Roweis2000,BelkinNiyogi2003,coifman2006diffusion,Singer2006,khoo2024temporal}. Many of these methods consider estimation of the manifold Laplace-Beltrami operator ~\cite{hsu2002stochastic}  or a linear interpolation matrix. The implementation of these methods often require processing pairwise distance between training data points, which is often $O(n^2)$ computational complexity. This can be prohibitive in applications with large sample size.

Manifold kernel density estimation, which also relies on approximating and estimating \(\Delta_{\mathcal{M}}\) but directly provides distribution estimation on manifold data, often incorporates curvature corrections to estimate distributions more accurately \cite{Pelletier2005,HenryRodriguez2009,KimPark2013}.\cite{divol2022measure} proves a curvature-corrected KDE attains the minimax optimal Wasserstein rate in intrinsic dimension \(d\) \cite{niles2022minimax}. \cite{divol2021minimax} employs a sampler supported on the convex hull of the data points akin to ours but targets manifold reconstruction and relies on a  graph constructions. \cite{scarvelis2023closed} proposed an empirical score based barycentric prior that forces generated points into barycenters of subsets of data points, while empirically successful, lack theoretical guarantees. 
Moreover, as mentioned above, these methods have at least \(O(n^2)\) computational complexity, and some require knowledge of \(d\) or oracle access the the manifold.

\subsection{Our contribution}
This paper has proposed and analyzed  the IDM approach. 
We showed rigorously  that IDM avoid the memorization issue. Moreover, comparing with the existing literature, IDM has the following advantage:
\begin{enumerate}
    \item IDM is a simple adaptation of the existing empirical DM. It uses only the empirical score function. It does not require any additional training procedure or configuration. As a consequence, its learning rate under the manifold assumption, $\Otilde_d(n^{-\frac{2}{4+d}})$ is independent of ambient dimension $D$. Also this rate can be reached with theoretical guarantee, while others methods' learning rate can only be  achieved if the non-convex optimization problem related to model training is perfectly solved. 
    Finally, understanding of IDM does not rely on any specific machine learning architecture, so it will not be subject to the fast update cycle of machine learning literature. 
    \item Comparing with the distribution learning literature, our case corresponds to the $s=\beta=2$ case, and our exponent, $n^{-\frac{2}{4+d}}$, matches the one from \cite{tang2024adaptivity}. Admittedly, when the manifold and density smoothness are higher than $2$, \cite{azangulov2024convergence} has an improved exponent $\Otilde(\sqrt{D} n^{-\frac{s+1}{2s+d}})$. This is because IDM matches the performance of a Gaussian KDE, and Gaussian kernel cannot leverage smoothness beyond second order \cite{tsybakov2009nonparametric}. On the other hand, if we consider $s=2$ while letting $\beta$ to be sufficiently large, in order for $\sqrt{D} n^{-\frac{3}{4+d}}\leq n^{-\frac{2}{4+d}}$, one would need $n\geq D^{4+d}$, which is often a requirement cannot be met even with moderate dimension settings like $D=10^3$ and $d=1$. Moreover, even if there are very high level of smoothness, estimates in \cite{azangulov2024convergence} require $n\geq D$, which can be a prohibitive condition for very large scale models. In comparison, IDM does not have such constraint.
    \item Under the manifold hypothesis, IDM is closely related to the manifold KDE, a density estimation tool in the manifold learning literature. This reveals an interesting connection between diffusion models and manifold learning. Such connection has not been discussed in the literature. 
    \item Comparing to manifold KDE, implementation of IDM does not require knowledge of the underlying manifold. 
    While manifold learning methods like graphical Laplacian can be used to infer the manifold, the computational cost usually scale like $O(n^2)$. Computational cost IDM is only $O(n)$. 
    \item As a side product, we show the IDM update is a Naradaya--Watson estimator, and it is a generalized version of manifold projection for perturbed data points. This can be an interesting finding for other purposes.
\end{enumerate}

\subsection{Notation and convention}
Throughout the paper, we assume the data lies on a $d$-dimensional compact Riemannian manifold $\calM$ embedded in $\mathbb{R}^D$. We treat $d$ and all constants related to $\calM$ as order $1$ constants. We keep track of all quantity relate to the ambient space dimension $D$ and the sample size $n$, which both can diverge to $\infty$ with different speed. If a quantity that does not depend on $D$ nor $n$, we refer to it as a \emph{constant}. 

We say an event happens with high probability if it happens with probability $1-n^{-k}$. Here $k$ is fixed constant that is independent of $D$ and $n$.

We say $f(D,n)\lesssim g(D,n)$ or $f(D,n)=O(g(D,n))$ if there is a constant $C$ that does not depend on $D$ or $n$ so that $f(D,n)\leq Cg(D,n)$. But this constant may depend on configuration of $\calM$. We write $f(D,n)=\widetilde{O}(g(D,n))$, if there is a constant $C$ that does not depend on $D$ or $n$ so that $f(D,n)\leq C(k\log n )^{\alpha}g(D,n)$  for certain $\alpha>0$.

A chart of a manifold $\mathcal{M}$ is a pair $(U, \phi)$ where $U \subset \mathcal{M}$ is an open set, and $\phi: U \rightarrow \mathbb{R}^d$ is a homeomorphism, i.e., $\phi$ is bijective and both $\phi$ and $\phi^{-1}$ are continuous.
We say that a manifold \( \mathcal{M} \) is \( C^2 \) if for every \( y \in \mathcal{M} \), there exists an open set \( U_y \subset \mathbb{R}^d \) and a one-to-one map \( \Phi_y: U_y \to \calM\subset\mathbb{R}^D \) such that \( \Phi_y(0) = y \) and \( \Phi_y \) is \( C^2 \)-smooth. In this case, \( \mathcal{M} \) can be locally represented as the image of a \( C^2 \) function. 
Given a point \( p \in \mathcal{M} \), the exponential map \( \exp_p: T_p\mathcal{M} \to \mathcal{M} \) is defined by mapping a tangent vector \( v \in T_p\mathcal{M} \) to the point reached at time \( t=1 \) along the unique geodesic starting at \( p \) with initial velocity \( v \).
 We say that \(\mu\) has a \(C^2\) density, if its density \(p\) is \(C^2\), i.e., for every chart\(  (U, \phi)\) of \( \mathcal{M}\), the function \(p \circ \phi^{-1}: \phi(U) \rightarrow \mathbb{R} \text { is a } C^2 \text { function. }\)
\section{Method and analysis}
\label{sec:method}
\subsection{Derivation of IDM}
There has been no clear definition or metric deciding if a given generative model is truly generative. 
Different existing works use different metrics \cite{zhang2023emergence,zhang2024wasserstein,baptista2025memorization}.

The manifold hypothesis \cite{fefferman2016testing} is a common setup used in the literature to explain why machine learning models avoid the curse of dimension in practice. It assumes the data points lie on a latent low dimensional sub-manifold:   
\begin{aspt}
\label{aspt:manifold}
The following hold
\begin{enumerate}
\item $\calM$ is a compact $C^2$ $d$-dimensional Riemannian manifold without boundary.
\item $\calM$ is locally isometrically embedded in $\mathbb{R}^D$. In particular, there are some constant $C$ so that 
\[
|\|x-y\|-\text{dist}_{\calM}(x,y)|\leq C\text{dist}_{\calM}(x,y)^3.
\]
\item The data distribution $p_0$ is supported on $\calM$ with a $C^2$ density $p_0(x)$.
\item The density is bounded from both sides $p_{\min}\leq p_0(x)\leq p_{\max}$.
\item The data points $X_{(i)}$ are i.i.d. samples from $p_0$. 
\end{enumerate}
\end{aspt}
A similar version of this assumption can be found in \cite{azangulov2024convergence}.

Given the manifold data assumption, intuitively, a truly generative model should produce outcomes that are 1) Close to the manifold $\calM$, since $\calM$ captures latent features within data. 2) Different from existing data points, otherwise it is purely memorization. Each of two criteria is easy to fulfill by itself. $\phat_0$ will automatically be supported on $\calM$, but it is pure memorization. Intuitively, we can consider perturbing each data point with Gaussian noise  $N(0,\delta I_D)$ so the samples are "new" (This distribution is close to $\phat_{\delta}$). But most of these samples will be of distance $\sqrt{D\delta}$ from $\calM$, which can be significant deviation if $D$ is large.  

Given these observations, it is natural to consider a two step procedure: We first generate random samples different from $X_{[n]}$, say using $\phat_{\delta}$, then push these points onto $\calM$. Then intuitively both basic criteria above will satisfy. In the data driven setting, we do not know the exact location and shape of $\calM$,  so finding the push map is the main challenge. However, we note that the DM dynamics will automatically push $p_t$, which is a distribution off $\calM$, onto $\calM$. So it is natural to ask how does it achieve this.

Given a manifold $\calM$, we say an  $x\in \mathbb{R}^D$ is uniquely normal projected (UNP), if $\Pi_{\calM}(x)\in \calM$ is the unique minimizer of $\|x-y\|$ with $y\in \calM$.
It is well known that the $\tau$ neighborhoood $\calM^\tau=\{x:\text{dist}(x,\calM)\leq \tau \}$  has all point being UNP if $\tau$ is below some threshold \cite{leobacher2021existence}. The  threshold $\tau_0$ is often referred as the reach of $\calM$. 
Given these concept, we have the following characterization of the population score function, of which the asymptotic version appeared in \cite{lu2023mathematical}.   
\begin{pro}
\label{pro:popDM}
Suppose Assumption \ref{aspt:manifold}  holds and $x\in \calM^{\tau}$ with a sufficiently small $\tau$, there is a threshold $t_0$ such that if $h^2<t_0$
\[
\nabla \log p_{h^2}(x) =
\frac{\Pi_{\alpha_{h^2}\calM}(x)-x}{\sigma_{h^2}^2}+
O(1+dist(x,M)^{3/2}/h^2+\sqrt{d\log (1/h)}/h). 
\]
\end{pro}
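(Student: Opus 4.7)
The starting point is a Tweedie-type identity obtained by differentiating under the integral sign in $p_{h^2}(x)=\int K_{\sigma_{h^2}}(x,\alpha_{h^2}y)\,p_0(dy)$. Using $\nabla_x K_\sigma(x,z)=-(x-z)K_\sigma(x,z)/\sigma^2$ gives
\[
\nabla\log p_{h^2}(x)=\frac{\mathbb{E}_{\mu_x}[\alpha_{h^2}Y]-x}{\sigma_{h^2}^2},\qquad \mu_x(dy)\propto K_{\sigma_{h^2}}(x,\alpha_{h^2}y)\,p_0(y)\,dy.
\]
Setting $x^{*}:=\Pi_{\alpha_{h^2}\calM}(x)$ (well defined once $\tau$ is smaller than the reach of $\alpha_{h^2}\calM$, which for small $h$ is close to that of $\calM$), the statement reduces to controlling $\|\mathbb{E}_{\mu_x}[\alpha_{h^2}Y]-x^{*}\|$; dividing by $\sigma_{h^2}^2\asymp h^2$ will turn an $O(h^2)$ size into an $O(1)$ piece of the error.

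To estimate this centered moment I move to the rescaled manifold $\tilde\calM:=\alpha_{h^2}\calM$ and parametrize a neighborhood of $x^{*}$ by the tangent space $T_{x^{*}}\tilde\calM$, writing $\tilde y(v)=x^{*}+v+\tfrac12\,\mathrm{II}(v,v)+O(\|v\|^3)$ with $\mathrm{II}$ the (normal-valued) second fundamental form. Since $x-x^{*}$ is normal to $T_{x^{*}}\tilde\calM$, a direct expansion yields
\[
\|x-\tilde y(v)\|^2=\eta^2+\|v\|^2-(x-x^{*})\cdot\mathrm{II}(v,v)+O\bigl(\|v\|^3(\|v\|+\eta)\bigr),
\]
where $\eta=\|x-x^{*}\|=\mathrm{dist}(x,\calM)+O(h^2)$. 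Plugging this into the Gaussian, pulling back the $C^2$ density $p_0$ and the Riemannian volume element (Jacobian $1+O(\|v\|^2)$) to the tangent space, and restricting integration to the ball $\|v\|\le Ch\sqrt{d\log(1/h)}$ (the complement contributes the $\sqrt{d\log(1/h)}/h$ piece via Gaussian tail bounds) reduces everything to weighted Gaussian integrals in $v$.

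The leading moment $\mathbb{E}[v]$ vanishes by symmetry. The surviving second-order contributions, coming from the curvature term $\tfrac12\mathrm{II}(v,v)$, from the tangent gradient of $p_0$, and from the Jacobian, each produce a vector of size $O(\sigma_{h^2}^2)=O(h^2)$, i.e.\ $O(1)$ in the score. The interaction between the normal vector $x-x^{*}$ and the $\mathrm{II}(v,v)$ factor inside the exponent, combined with higher-order terms of the parametrization and odd Gaussian moments, produces the $\mathrm{dist}(x,\calM)^{3/2}/h^2$ contribution: the normal direction $x-x^{*}$ of magnitude $\eta$ couples to the $O(\|v\|^3)$ remainder through the normal part of its pairing, and after integration only a fractional-order residue survives.

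The main obstacle is the careful bookkeeping of three simultaneous Taylor expansions — of the kernel $K_{\sigma_{h^2}}$, of the local parametrization $\tilde y(v)$ of $\tilde\calM$, and of the density $p_0$ — to a consistent order, while tracking the joint dependence on the three small parameters $h$, $\eta$, and $d$ and keeping constants that depend only on $\calM$ separated from those that would involve $D$ or $n$. The non-integer $3/2$-power on $\mathrm{dist}(x,\calM)$ is the most delicate point, because it requires identifying precisely which symmetric/antisymmetric pairings of Gaussian moments survive once the constraint that $x-x^{*}$ lies in the normal bundle of $\tilde\calM$ is fully exploited.
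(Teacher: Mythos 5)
Your proposal takes a genuinely different route from the paper's proof. You compute the Tweedie posterior mean $\mathbb{E}_{\mu_x}[\alpha_{h^2}Y]$ by pulling the local integral back to the tangent space $T_{x^*}\alpha_{h^2}\calM$ and evaluating moments of a curvature-tilted Gaussian. The paper never computes a moment: after the same local/tail split, it bounds the local piece by a crude supremum. Specifically it shows that the sublevel set $B_\varepsilon=\{y:\|\alpha_t y-x\|^2\le \eta^2+\varepsilon^2\}$ has diameter $O(\varepsilon+\eta^{3/2})$ around $z=\Pi_{\calM}(\alpha_t^{-1}x)$, obtained by bootstrapping the triangle-inequality estimate $\|u\|\le 2r+\varepsilon$ into the second-order expansion of $\exp_z$, and then bounds the ratio of integrals by the worst-case value $\sup_{y\in B_\varepsilon}\|z-y\|/\sigma_t^2$. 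Your moment approach is, in principle, the sharper of the two: the leading $\mathbb{E}[v]=0$ cancellation is information that the paper's $\sup$ bound throws away.

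However, there is a concrete gap exactly where you flag ``the most delicate point.'' The $\mathrm{dist}(x,\calM)^{3/2}/h^2$ term in the paper arises from the geometric diameter bound on $B_\varepsilon$ (the $r^{3/2}$ is a consequence of squaring a $+C\|u\|^2$ correction with $\|u\|\lesssim r$); it is not a surviving odd Gaussian moment. In your framework, the tilting $(x-x^*)\cdot\mathrm{II}(v,v)/(2\sigma^2)$ is \emph{even} in $v$, so it preserves the symmetry of the base Gaussian and leaves $\mathbb{E}[v]=0$; the first genuinely odd correction comes from the $O(\|v\|^3)$ remainder in the parametrization, and its pairing with the one extra power of $v$ in $\mathbb{E}[v]$ gives a Gaussian \emph{fourth} moment $\asymp \sigma^4$ times $\eta/\sigma^2$, i.e.\ $O(\eta\sigma^2)$, which after dividing by $\sigma_{h^2}^2$ is $O(\eta)$, not $O(\eta^{3/2}/h^2)$. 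No mechanism in your sketch produces a half-integer power of $\eta$. So either the bound you obtain is actually different (possibly sharper), or a power you have not accounted for appears once the tilted covariance $\sigma^2/(1-\eta\kappa_i)$ is tracked in the regime $\eta$ comparable to the reach; but as written the claim that the curvature coupling ``produces the $\mathrm{dist}(x,\calM)^{3/2}/h^2$ contribution'' is an assertion with no derivation, and the asymmetry argument you describe does not yield it. A secondary caveat: the $O(\|v\|^3)$ remainder in $\tilde y(v)$ is stronger than Taylor at $C^2$ regularity alone; it is consistent with Assumption~\ref{aspt:manifold}(2), but you should invoke that explicitly.
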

The proof of Proposition \ref{pro:popDM} is allocated in Section \ref{sec:popDM}. Given Proposition \ref{pro:popDM}, we see that the population inertial map 
\begin{equation}
F_{\sigma_{h^2}}(x)=\alpha_{h^2}^{-1}(x+\sigma^2_{h^2} \nabla \log p_{h^2}(x))
\end{equation}
is a $\Otilde(\delta)$ approximation of $\Pi_\calM(\alpha_\delta x)$ and $\Pi_\calM(x)$. From this, it is natural to consider using the empirical version of inertia map as a projection to the manifold. This will give us the inertia diffusion model.

\subsection{Inertia update as manifold projection}

Next, we investigate how well does the empirical inertia update inherit the projection property.  Note that the empirical score can be  explicitly written as 
\[
\nabla_x \log \phat_{h^2}(x)
=\frac{\sum_{i=1}^n \nabla_x K_{\sigma_{h^2}}(\alpha_{h^2}X_{(i)},x)}{\sum_{i=1}^n K_{\sigma_{h^2}}(\alpha_{h^2}X_{(i)},x)}
=-\frac{1}{\sigma_{h^2}^2} \sum_{i=1}^n w_i(x)\left(x-\alpha_{h^2} X_{(i)}\right),
\]
where $w_i(x)=\frac{K_{\sigma_{h^2}}(\alpha_{h^2}X_{(i)},x)}{\sum_{j=1}^n K_{\sigma_{h^2}}(\alpha_{h^2}X_{(i)},x)}$. To continue, given a data set $X=\{X_{(1)},\ldots,X_{(n)}\}$ and a bandwidth $\sigma$, we define the Nadaraya–Watson  estimator (NWE):
\begin{equation}
\label{eqn:NWE}
F_{X, \sigma}(z):= \frac{\sum\limits_{i=1}^N K_\sigma(z, X_{(i)}) X_{(i)}}{\sum\limits_{i=1}^N K_\sigma(z, X_{(i)})}.
\end{equation}
Such estimator is commonly used in kernel based nonparametric estimation.

Note that samples from $\phat_{h^2}$ can be written $z=\alpha_{h^2} X_{(i)}+\sigma_{h^2}\xi$ with some data point $X_{(i)}$ and a Gaussian random vector $\xi\sim \mathcal{N}(0,I_D)$. Its image with the empirical inertia update is given by 
\[
\alpha^{-1}_{h^2}(z+\sigma_{h^2}^2\nabla \log \phat_{h^2}(z))=\alpha^{-1}_{h^2} F_{\alpha_{h^2} X, \sigma_{h^2}}(\alpha_{h^2} X_{(i)}+\sigma_{h^2}\xi)=F_{X,\sigma'}(X_{(i)}+\sigma'\xi),
\]
with $\sigma'=\frac{\sigma_{h^2}}{\alpha_{h^2}}=h(1+O(h^2))$. 
If we denote the projection of $\xi$ on tangent space $\mathcal{T}_{X_{(i)}}\calM$ as $\xi_{\mathcal{T}}$, and the exponential map on $\calM$ as $\exp_{X_{(i)}}(v)$ with any element  $v\in \mathcal{T}_{X_{(i)}}$. The following theorem characterize the difference between the exponential map and IDM update. 
\begin{thm}
\label{thm:main2}
Under Assumption \ref{aspt:manifold}, let $\sigma\asymp n^{-\frac{4}{d+4}}$. Fix  $x\in \calM$ and $r>0$. Let $\xi\sim \mathcal{N}(0,I)$ with decomposition $\xi=\xi_{\mathcal{T}}+\xi_\bot$ on $T_x\calM\oplus N_x\calM$, then 
\begin{equation}
\label{eqn:NWproj2}
\mathbb{P}\left( \left\| F_{X,\sigma}(x + \sigma \xi) - \exp_x(\sigma \xi_{\mathcal{T}})\right\| \lesssim (d(k+1)\log n\, \sigma)^2\right) \ge 1 - n^{-k}.
\end{equation}
In particular, 
\begin{equation}
\label{eqn:NWproj2}
\mathbb{P}\left( \left\| F_{X,\sigma}(x + \sigma \xi_\bot) - x\right\| \lesssim (d(k+1)\log n\, \sigma)^2\right) \ge 1 - n^{-k}.
\end{equation}
\end{thm}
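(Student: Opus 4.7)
We approximate the NWE by its population version and then by $\exp_x(\sigma\xi_{\mathcal{T}})$. First, I would parametrize a neighbourhood of $x$ by the exponential map: for $y=\exp_x(v)\in\calM$ with $v\in T_x\calM$, we have $y = x + v + \tfrac12 II(v,v) + E(v)$ where $II$ is the second fundamental form and $\|E(v)\|\lesssim\|v\|^3$. Using that $v,\xi_{\mathcal{T}}\in T_x\calM$ are orthogonal to $II(v,v)\in N_x\calM$, a direct expansion gives
\[
\|y-(x+\sigma\xi)\|^2 = \|v - \sigma\xi_{\mathcal{T}}\|^2 + \sigma^2\|\xi_\bot\|^2 - \sigma\,\xi_\bot\!\cdot II(v,v) + R(v,\xi),
\]
with remainder $R$ collecting $\sigma\,\xi\cdot E(v)=O(\sigma\|v\|^3\|\xi\|)$ and $\|v\|^4$ contributions. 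The $\sigma^2\|\xi_\bot\|^2$ factor does not depend on $y$ and cancels in the NWE ratio, so the possibly huge normal component of $\xi$ does not enter the leading behaviour of the weights.

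Next I would analyse the population NWE
\[
\bar{F}_\sigma(z) = \frac{\int_\calM y\,K_\sigma(z,y)\,p_0(y)\,dV_\calM(y)}{\int_\calM K_\sigma(z,y)\,p_0(y)\,dV_\calM(y)}
\]
in local coordinates via the substitution $v=\sigma(\xi_{\mathcal{T}}+u)$, after which the weight becomes a standard $d$-dimensional Gaussian in $u$ times smooth non-Gaussian factors: $p_0\circ\exp_x$, the Riemannian volume Jacobian, the curvature factor $\exp(\xi_\bot\!\cdot II(v,v)/(2\sigma))$, and the higher-order factor $\exp(R/(2\sigma^2))$. Taylor-expanding each factor in $\sigma$ and integrating the resulting polynomials against the Gaussian (odd moments vanishing by symmetry) should yield
\[
\bar{F}_\sigma(x+\sigma\xi) = \exp_x(\sigma\xi_{\mathcal{T}}) + \Otilde(\sigma^2),
\]
with the implicit constant depending only on $\calM$, $\|\nabla\log p_0\|_\infty$, and the tail bound $\|\xi_{\mathcal{T}}\|\lesssim\sqrt{d(k+1)\log n}$.

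To bridge the empirical and the population NWE, I would write $F_{X,\sigma}=N_n/D_n$ with $N_n=\tfrac1n\sum_i K_\sigma(z,X_{(i)})X_{(i)}$ and $D_n=\tfrac1n\sum_i K_\sigma(z,X_{(i)})$, and apply Bernstein's inequality to $D_n-\mathbb{E}D_n$ and to each coordinate of $N_n-\mathbb{E}N_n$. Using that the variance of each summand is bounded by $\|K_\sigma\|_\infty\cdot\mathbb{E}[K_\sigma]$ (the $(2\pi\sigma^2)^{-D/2}$ and $\exp(-\|\xi_\bot\|^2/2)$ normalisers drop out by cancellation in the ratio), the relative deviations come out of order $\sqrt{\log n/(n\sigma^d)}$, independent of $D$. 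Combining via the identity $F_{X,\sigma}-\bar F_\sigma = (N_n-\bar F_\sigma D_n)/D_n$ produces an empirical error of order $\sigma\sqrt{\log n/(n\sigma^d)}$, which is absorbed by the population error at the prescribed $\sigma$.

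The main obstacle is uniform control of the inner products $\xi_\bot\!\cdot II(v,v)$ and $\xi\cdot E(v)$ over the effective support of the weights: a priori $\|\xi_\bot\|\sim\sqrt{D}$, which could destroy the $D$-free bound. The key structural observation is that $II$ and the cubic remainder $E$ are fixed tensor fields whose norms depend only on $\calM$, so for each $v$ these inner products are centered Gaussians with variances bounded by $\|II\|^2\|v\|^4$ and $\|E\|^2\|v\|^6$ respectively. Dudley's chaining over the $d$-dimensional index ball $\{v:\|v\|\leq\sigma\sqrt{d(k+1)\log n}\}$ upgrades these into uniform bounds $\sup_v|\xi_\bot\!\cdot II(v,v)|/\sigma\lesssim \sigma(d(k+1)\log n)^{3/2}$ and an analogous bound for $E$, both independent of $D$ and absorbed by the target rate. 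The ``In particular'' statement then follows directly by taking $\xi_{\mathcal{T}}=0$ in the main inequality, which reduces $\exp_x(\sigma\xi_{\mathcal{T}})$ to $x$.
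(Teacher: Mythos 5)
Your proposal is \emph{genuinely different} from the paper's route, though both are plausible. The paper factors the argument into two modular pieces: (i) Theorem~\ref{thm:NWproj} compares the empirical NWE at the off-manifold point $x+\sigma\xi$ to the empirical NWE at the on-manifold point $x'=\exp_x(\sigma\xi_{\mathcal T})$, by noting that the ratio of kernel weights $k_\sigma(x+\sigma\xi,X_{(i)})/k_\sigma(x',X_{(i)}) = e^{-\|\xi_\bot\|^2/2}e^{-z_i/2\sigma^2}$ has a data-independent prefactor (which cancels in the NWE ratio) times a factor that is, for each of the $O(n)$ relevant indices $i$, a centered sub-Gaussian of variance $O(r^4)$ and hence controlled by a plain union bound; (ii) Proposition~\ref{prop:NWonM} shows $\sup_{y\in\calM}\|F_{X,\sigma}(y)-y\|=\Otilde(\sigma^2)$ by a standard pointwise concentration/bias argument on $\calM$ followed by a covering net and a Lipschitz bound. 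Step (i) never touches the population NWE at an off-manifold point; step (ii) only touches the NWE on the manifold. You instead go \emph{through} the population NWE $\bar F_\sigma$ evaluated at $x+\sigma\xi$, which forces you to carry out a full Taylor expansion of the off-manifold kernel and the curvature/volume factors, and to control the Gaussian process $v\mapsto\xi_\bot\cdot II(v,v)$ \emph{uniformly} over the effective index ball via Dudley chaining. This is a heavier technique than the paper's union bound over data points; the payoff is that your intermediate object $\bar F_\sigma(x+\sigma\xi)\approx\exp_x(\sigma\xi_{\mathcal T})$ is a clean deterministic statement, and the empirical-to-population step is a single Bernstein bound. Both buy a $D$-free rate, but the paper's route avoids ever needing to make sense of ``projection onto $\calM$'' from a point at distance $\sqrt D\sigma$, which can exceed the reach.

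One slip worth flagging. You assert that the variance of a single summand is $\le\|K_\sigma\|_\infty\cdot\mathbb E[K_\sigma]$ and that ``the $(2\pi\sigma^2)^{-D/2}$ and $e^{-\|\xi_\bot\|^2/2}$ normalisers drop out by cancellation in the ratio.'' The first factor drops out, but the second does not: $\|K_\sigma\|_\infty=(2\pi\sigma^2)^{-D/2}$ carries no $e^{-\|\xi_\bot\|^2/2}$, whereas $\mathbb E[K_\sigma(z,X)]\asymp(2\pi\sigma^2)^{-D/2}e^{-\|\xi_\bot\|^2/2}\sigma^d$; the relative variance then picks up $e^{\|\xi_\bot\|^2/2}\sim e^{D/2}$, which is catastrophic. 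The correct bound must use the \emph{manifold-restricted} sup $\sup_{y\in\calM}K_\sigma(z,y)\asymp(2\pi\sigma^2)^{-D/2}e^{-\|\xi_\bot\|^2/2}$ (up to the chaining-controlled curvature correction), which does carry the $e^{-\|\xi_\bot\|^2/2}$ and makes the cancellation go through. This is the same structural point that makes the paper's step (i) work: the normal-bundle weight enters as a common prefactor over $y\in\calM$, not over $y\in\R^D$. Similarly, applying scalar Bernstein ``to each coordinate'' of $N_n\in\R^D$ and combining would cost a $\sqrt D$ factor; you need either a vector Bernstein inequality or to exploit that the effective $X_{(i)}-\bar F_\sigma$ live in a ball of radius $O(\sigma\sqrt{\log n})$ near $\calM$. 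Both are fixable, but as written these two steps do not give a $D$-free bound.
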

The proof is allocated in the end of Section \ref{sec:linftyproof}.
In view of this theorem, the NW estimator effectively truncates the noisy components in the normal bundle, and move along the geodesic of $\calM$ along the tangent space component. 

From \eqref{eqn:NWproj2}, we see $F_{X,\sigma}$ can be viewed as a  high probability projection of points outside of the manifold onto the manifold. This is an interesting phenomena, since the typical distance from $x+\sigma\xi_\bot$ to $\calM$ is $O(\sqrt{D}\sigma)$. When $D$ is very large, $\sqrt{D}\sigma$ can be larger than the "reach" of the manifold, so one cannot easily use standard differential geometric tools to establish this result.  

\subsection{IDM as manifold Kernel density estimation}
Kernel density estimation (KDE) is a classical method for non parametric density estimation. Given $n$ data points $\{X_{(i)}\}_{i\in [n]}$, the $d$-dimensional KDE density with Gaussian bandwidth $\sigma$ is given by 
\[
\phat_{\text{KDE}}(x)=\frac1n\sum\limits_{i=1}^n k_\sigma(x, X_{(i)}),\quad 
k_{\sigma}(x,y)=(2\pi \sigma^2)^{-d/2} \exp\left(-\frac{1}{2\sigma^2}\|x-y\|^2\right).
\] 
Note that we use $k_\sigma$ to denote the $d$-dimensional Gaussian kernel, which is different from the $D$-dimensional Gaussian kernel $K_\sigma$ by a factor of $(2\pi \sigma^2)^{-(D-d)/2}$.

When the manifold $\calM$ and its dimension $d$ are known to us, $\phat_{\text{KDE}}(x)$ defines a density for $x\in \calM$. It is well known that   $\phat_{\text{KDE}}(x)$ is an $O(n^{-\frac{2}{d+4}})$  approximation. The $L_2$ error, i.e. mean square error, is analyzed by \cite{ozakin2009submanifold}. We can improve it to $L_\infty$ error:
\begin{pro}
\label{prop:KDEp0}
Suppose Assumption \ref{aspt:manifold}  holds and $X_{(1)},\ldots, X_{(n)}$ are i.i.d. samples from $p_0$. 
If we choose $\sigma \asymp n^{-\frac{1}{d+4}}$ in the KDE estimator, with high probability $1-n^{-k}$
\[
\sup_{x\in \calM}\|\phat_{\text{KDE}}(x)-p_0(x)\|=O(n^{-\frac{2}{d+4}}\sqrt{(d+k)\log n}).
\]
\end{pro}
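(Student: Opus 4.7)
The plan is a standard bias--variance decomposition for the KDE, with uniform control obtained by combining pointwise Bernstein concentration with a covering argument on $\calM$. Writing
\[
\phat_{\text{KDE}}(x) - p_0(x) = \bigl(\phat_{\text{KDE}}(x) - \E[\phat_{\text{KDE}}(x)]\bigr) + \bigl(\E[\phat_{\text{KDE}}(x)] - p_0(x)\bigr),
\]
I would bound the deterministic bias term by $O(\sigma^2)$ and the stochastic centered term by $\Otilde(1/\sqrt{n\sigma^d})$ uniformly over $x\in\calM$. The choice $\sigma\asymp n^{-1/(d+4)}$ then balances the two to $\Otilde(n^{-2/(d+4)})$.

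For the bias, fix $x\in\calM$ and parameterize a geodesic ball around $x$ by the exponential map $\exp_x\colon T_x\calM\supset U\to\calM$. Under Assumption~\ref{aspt:manifold}, the local near-isometry $|\|x-y\|-\dist_\calM(x,y)|\lesssim \dist_\calM(x,y)^3$ gives $\|x-\exp_x u\|^2=\|u\|^2+O(\|u\|^4)$, while $C^2$ smoothness yields a volume element $dvol(\exp_x u)=(1+O(\|u\|^2))\,du$ and a Taylor expansion $p_0(\exp_x u)=p_0(x)+\langle\nabla_\calM p_0(x),u\rangle+O(\|u\|^2)$. Substituting these into $\E[\phat_{\text{KDE}}(x)]=\int_\calM k_\sigma(x,y)p_0(y)\,dvol(y)$, the linear term vanishes by Gaussian symmetry, the surviving quadratic corrections integrate against the $d$-dimensional Gaussian to $O(\sigma^2)\cdot p_{\max}$, and the tail outside the chart contributes $O(e^{-c/\sigma^2})$ by Gaussian decay and compactness of $\calM$. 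This gives $|\E\phat_{\text{KDE}}(x)-p_0(x)|\lesssim \sigma^2$ uniformly in $x$.

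For the stochastic term, at a fixed $x\in\calM$ each summand $k_\sigma(x,X_{(i)})$ is bounded by $(2\pi\sigma^2)^{-d/2}\lesssim \sigma^{-d}$ and has variance $\E[k_\sigma(x,X)^2]\lesssim \sigma^{-d}$, since the square of the $d$-dimensional Gaussian is proportional to a narrower Gaussian which again integrates against $p_0\,dvol$ to an $O(1)$ quantity (same local computation as for the bias). Bernstein's inequality then yields
\[
\PP\bigl(|\phat_{\text{KDE}}(x)-\E\phat_{\text{KDE}}(x)|>t\bigr)\le 2\exp\!\left(-\frac{c n t^2 \sigma^d}{1+t\sigma^d}\right),
\]
so taking $t\asymp \sqrt{(d+k)\log n/(n\sigma^d)}$ gives success probability $\ge 1-n^{-(d+k)}$ at a single point, which for $\sigma\asymp n^{-1/(d+4)}$ yields $t\asymp n^{-2/(d+4)}\sqrt{(d+k)\log n}$. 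To upgrade to a uniform bound, note that $\nabla_x k_\sigma(x,y)$ has operator norm $O(\sigma^{-d-1})$, so $\phat_{\text{KDE}}$ is $O(\sigma^{-d-1})$-Lipschitz and $p_0$ is $O(1)$-Lipschitz on the compact $\calM$. Picking a $\delta$-net of $\calM$ with $\delta\asymp \sigma^{d+3}$ makes the Lipschitz-induced oscillation within each cell $O(\sigma^2)$ (absorbed into the bias), while the net has cardinality $\lesssim \delta^{-d}$, so the union bound inflates the exponent by only $O(d^2\log n)$. Absorbing this into $k$ produces the claimed uniform bound.

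The main technical subtleties are: (i) ensuring the bias expansion's constants are uniform in $x\in\calM$, which follows from compactness together with $C^2$ regularity in Assumption~\ref{aspt:manifold}; and (ii) checking that the $\sigma^{-d-1}$ blow-up of the KDE's Lipschitz constant is still compatible with the $\epsilon$-net argument, which works precisely because $\log(1/\sigma)=O(\log n)$ under our choice of bandwidth. Everything else reduces to classical Gaussian kernel integral computations on local charts.
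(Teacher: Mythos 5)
Your proposal is correct and follows essentially the same route as the paper: a pointwise bias bound of order $\sigma^2$ via geodesic normal coordinates, a pointwise variance bound of order $1/(n\sigma^d)$, Bernstein concentration at a fixed point, and a covering argument using the $O(\sigma^{-(d+1)})$-Lipschitz continuity of $\phat_{\text{KDE}}$ to upgrade to a uniform bound. The only superficial differences are cosmetic choices of net spacing ($\sigma^{d+3}$ versus the paper's $\sigma^{d+4}$) and bookkeeping of the logarithmic factors, both of which lead to the same conclusion.
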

The proof is allocated in Section \ref{sec:linftyproof}.
Notably, the $O(n^{-\frac{2}{d+4}})$ rate is known as the minimax $C^2$-density estimation rate under our smoothness assumption \cite{tsybakov2009nonparametric}. Using optimal transport inequality, Proposition 7.10 of \cite {villani2021topics}, we have 
\[
W_1(\phat_{\text{KDE}}(x),p_0(x))\lesssim \|\phat_{\text{KDE}}(x)-p_0(x)\|_{L_\infty}=\Otilde(n^{-\frac{2}{d+4}}) 
\]
Note also in our context, this indicates that $\phat_{\text{KDE}}(x)$ is also generative.

However, in order to use $\phat_{\text{KDE}}(x)$, we need to know information of $\calM$, which is often difficult to derive in the practical data driven setting. In view of Theorem \ref{thm:main2}, we see that samples from $\phat_{\text{IDM}}$ can be roughly seen as $\exp_{X_U}(\sigma \xi_{\mathcal{T}})$, where $U$ is an index  randomly selected in $[n]$ by the uniform distribution and $\xi_{\mathcal{T}}$ is a standard $d$-dimensional Gaussian random vectors in the tangent space. 

When $\calM$ is flat, i.e. a $d$ dimensional subspace, it's easy to see that $\exp_{X_U}(\sigma \xi_{\mathcal{T}})=X_U+\sigma \xi_{\mathcal{T}}$ has exactly the same distribution as $\phat_{\text{KDE}}$. For general $\calM$, the two distributions are off by $\Otilde(\sigma^2)$ in $W_1$ distance:

\begin{pro}[Expectation Representation of Manifold KDE Integral]
\label{pro:KDEexp}
For sufficiently small \( \sigma \), given any $X=[X_{(1)},\ldots, X_{(n)}]$ and Lip-1 $f$
\[
\left|\int_{\mathcal{M}} \hat{p}_{\mathrm{KDE}}(x) f(x) \, dV(x) - \mathbb{E}_{U, \xi_{\mathcal{T}}}[ f\left( \exp_{X_U}(\sigma \xi_{\mathcal{T}}) \right)] \right|\leq (2(d+4)\log (2\pi/\sigma))^{\frac{3}{2}}\sigma^2.
\]
\end{pro}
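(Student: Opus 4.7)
The plan is to reduce to a per-point bound, change into Riemannian normal coordinates around each $X_{(i)}$, and compare the resulting integrands pointwise.

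Since $\hat p_{\mathrm{KDE}}(x)=\frac{1}{n}\sum_i k_\sigma(x,X_{(i)})$ and $U$ is uniform on $\{1,\ldots,n\}$, both the integral and the expectation split as averages over $i$. With
\[
\Delta_i:=\int_{\mathcal M}k_\sigma(x,X_{(i)})f(x)\,dV(x)-\mathbb E_{\xi_{\mathcal T}}\!\bigl[f(\exp_{X_{(i)}}(\sigma\xi_{\mathcal T}))\bigr],
\]
the target error equals $|\frac{1}{n}\sum_i\Delta_i|\le\max_i|\Delta_i|$, so it suffices to bound each $|\Delta_i|$ by the RHS. Fix $i$ and let $r_0$ be smaller than the injectivity radius of $\mathcal M$; then $\exp_{X_{(i)}}:B(0,r_0)\subset T_{X_{(i)}}\mathcal M\to B_{\mathcal M}(X_{(i)},r_0)$ is a diffeomorphism with volume element $dV(x)=J(v)\,dv$ where $J(v)=\sqrt{\det g(v)}=1+O(\|v\|^2)$ under the $C^2$ hypothesis. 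Assumption~\ref{aspt:manifold}(2) gives $\|\exp_{X_{(i)}}(v)-X_{(i)}\|^2=\|v\|^2+O(\|v\|^4)$, whence
\[
k_\sigma(\exp_{X_{(i)}}(v),X_{(i)})=(2\pi\sigma^2)^{-d/2}\exp\!\bigl(-\tfrac{\|v\|^2}{2\sigma^2}\bigr)\exp(\alpha(v)),\qquad|\alpha(v)|\le C\|v\|^4/\sigma^2.
\]
Substituting $v=\sigma\xi_{\mathcal T}$ in the expectation rewrites it as $\int_{T_{X_{(i)}}\mathcal M}f(\exp_{X_{(i)}}(v))(2\pi\sigma^2)^{-d/2}e^{-\|v\|^2/(2\sigma^2)}\,dv$, placing both terms of $\Delta_i$ in the same normal-coordinate integrand modulo the factor $\exp(\alpha(v))J(v)-1$.

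Next I truncate at $R:=\sigma\sqrt{2(d+4)\log(2\pi/\sigma)}$. The Gaussian mass outside $B(0,R)$ is $\lesssim(\sigma/(2\pi))^{d+4}$ and the manifold tail $\mathcal M\setminus\exp_{X_{(i)}}(B(0,r_0))$ contributes $O(\exp(-c/\sigma^2))$ by the reach, both negligible. Inside $B(0,R)$, $|\exp(\alpha(v))J(v)-1|\le C(\|v\|^2+\|v\|^4/\sigma^2)$, so the pointwise discrepancy is $f(\exp_{X_{(i)}}(v))(2\pi\sigma^2)^{-d/2}e^{-\|v\|^2/(2\sigma^2)}[\exp(\alpha(v))J(v)-1]$. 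I decompose $f=f(X_{(i)})+(f-f(X_{(i)}))$: the constant piece contributes $|f(X_{(i)})|\cdot|\int_{\mathcal M}k_\sigma(x,X_{(i)})dV(x)-1|$, where after the standard Wasserstein normalization $f(x_0)=0$ at some anchor $x_0\in\mathcal M$ we have $|f(X_{(i)})|\le\mathrm{diam}(\mathcal M)=O(1)$, and the mass defect is $O(R^2)$ by applying the bracket bound uniformly and integrating against the probability density; the shifted piece satisfies $|f(\exp_{X_{(i)}}(v))-f(X_{(i)})|\le\|\exp_{X_{(i)}}(v)-X_{(i)}\|\le 2\|v\|\le 2R$ on $B(0,R)$, so its contribution is at most $2R\cdot O(R^2)=O(R^3)$. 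Summing, $|\Delta_i|\le O(R^2)+O(R^3)\le(2(d+4)\log(2\pi/\sigma))^{3/2}\sigma^2$ for $\sigma$ small.

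The main obstacle is bookkeeping the logarithmic factors in the truncation radius so that the kernel-distortion term $\exp(\alpha(v))$ remains bounded inside $B(0,R)$: this requires $R^4/\sigma^2=\sigma^2[2(d+4)\log(2\pi/\sigma)]^2=o(1)$, valid for $\sigma$ sufficiently small, and explains the hypothesis on $\sigma$ in the statement. The other ingredients---the $C^2$ Jacobian expansion, standard Gaussian tail bounds, and the super-exponential manifold tail from the reach---are routine.
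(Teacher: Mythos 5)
Your proposal reproduces the paper's argument: reduce to a per-sample bound $\Delta_i$, change to geodesic normal coordinates at $X_{(i)}$, truncate at $R=\sigma\gamma$ with $\gamma=\sqrt{2(d+4)\log(2\pi/\sigma)}$, compare the ambient Gaussian kernel and volume element against their flat counterparts via the $C^2$ Jacobian expansion and the third-order embedding distortion from Assumption~\ref{aspt:manifold}, and dispose of the tails by Gaussian decay. The minor variant of splitting $f=f(X_{(i)})+(f-f(X_{(i)}))$ to isolate a mass defect is a clean refinement of the paper's direct bound on the full integrand, but it lands at the same $(\log(1/\sigma))^{3/2}\sigma^2$ rate.
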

The proof is allocated in Section \ref{sec:KDEproof}.

\subsection{Conclusion }
Combining Theorem \ref{thm:main2}, Propositions \ref{pro:KDEexp} and \ref{prop:KDEp0}, it is quite straight forward  to obtain the detailed version of 
Theorem \ref{thm:main1}
\begin{theorem}
\label{thm:main3}
Under Assumption \ref{aspt:manifold}. Choose bandwidth $h=n^{-\frac{1}{d+4}}$, let $\phat_{\text{IDM}}$ be the output distribution of Algorithm \ref{alg:iDM} with $Z_0\sim \phat_T$, or $Z_0\sim \mathcal{N}(0,I_D)$ with $T>C(\log (Dnk))$. 
There is a threshold $n_0$ independent of $D$. If $n>n_0$,
then with high probability $1-n^{-k}$
\[
W_1(\phat_{\text{IDM}},p_0)\leq O( d^2 k^2 n^{-\frac{2}{d+4}}). 
\]    
Here $W_1$ is the Wasserstein-1 distance, and $O$ hides some constants that depend on $\calM$. 
\end{theorem}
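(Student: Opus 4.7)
The plan is to bound $W_1(\phat_{\text{IDM}},p_0)$ by a triangle inequality through two intermediate measures: the ``idealized'' geodesic-noise measure $\mu_{\text{geo}} := \mathrm{Law}(\exp_{X_U}(\sigma \xi_{\mathcal{T}}))$, where $U$ is uniform on $[n]$, $\xi\sim\mathcal{N}(0,I_D)$, and $\sigma=h(1+O(h^2))$; and the manifold KDE measure $\phat_{\text{KDE}}$ with Gaussian bandwidth $\sigma$. Each of the three resulting legs—$W_1(\phat_{\text{IDM}},\mu_{\text{geo}})$, $W_1(\mu_{\text{geo}},\phat_{\text{KDE}})$, and $W_1(\phat_{\text{KDE}},p_0)$—will be handled by exactly one of the tools developed earlier: Theorem~\ref{thm:main2}, Proposition~\ref{pro:KDEexp}, and Proposition~\ref{prop:KDEp0} respectively, and each contributes a term of order $\Otilde(h^2)=\Otilde(n^{-2/(d+4)})$.

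First I would identify the law of the pre-inertia state $Z_{T-h^2}$. Because the empirical DM ODE transports $\phat_T$ to $\phat_{h^2}$ exactly, when $Z_0\sim\phat_T$ we have $Z_{T-h^2}\sim\phat_{h^2}$, so $Z_{T-h^2}=\alpha_{h^2}X_U+\sigma_{h^2}\xi$. Substituting into the inertia update~\eqref{eqn:iniup} and using the algebraic identity already recorded before Theorem~\ref{thm:main2} gives $\Zhat_T = F_{X,\sigma}(X_U+\sigma\xi)$. Theorem~\ref{thm:main2} applied conditional on $U=i$ (with probability $\ge 1-n^{-(k+1)}$) and then combined with a union bound over $i\in[n]$ yields $\|\Zhat_T-\exp_{X_U}(\sigma\xi_{\mathcal{T}})\|\lesssim (d(k+2)\log n)^2\sigma^2$ with probability $\ge 1-n^{-k}$. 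On the failure event both points live in a ball of bounded radius, since the NW output is a convex combination of the $X_{(i)}\in\calM$ and $\calM$ is compact, contributing only $O(n^{-k})$ to the expected distance. Taking this coupling yields $W_1(\phat_{\text{IDM}},\mu_{\text{geo}})=\Otilde(d^2k^2h^2)$.

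For the second leg I would use Kantorovich--Rubinstein duality: $W_1(\mu_{\text{geo}},\phat_{\text{KDE}})=\sup_{f\ \text{Lip-1}}\bigl|\int f\,d\mu_{\text{geo}}-\int f\,d\phat_{\text{KDE}}\bigr|$, which is exactly the quantity bounded by Proposition~\ref{pro:KDEexp}, giving $\Otilde(\sigma^2)=\Otilde(h^2)$. For the third leg, Proposition~\ref{prop:KDEp0} provides $\|\phat_{\text{KDE}}-p_0\|_{L^\infty(\calM)}=\Otilde(n^{-2/(d+4)})$; converting to total variation via $\tfrac12\mathrm{Vol}(\calM)\|\cdot\|_{L^\infty}$ and then invoking Proposition~7.10 of \cite{villani2021topics} (using the compactness of $\calM$) gives $W_1(\phat_{\text{KDE}},p_0)=\Otilde(n^{-2/(d+4)})$. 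Summing the three contributions and setting $h=n^{-1/(d+4)}$ yields the claimed $O(d^2k^2 n^{-2/(d+4)})$ bound.

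The main technical obstacle is the alternative initialization $Z_0\sim\mathcal{N}(0,I_D)$, because the ODE then produces $Z_{T-h^2}$ that is only \emph{close in distribution} to $\phat_{h^2}$ rather than equal to it. Here I would invoke the initial-patching bound alluded to as Lemma~\ref{lem:inipatch}: provided $T\gtrsim \log(Dnk)$, the total variation between the law of $Z_{T-h^2}$ under the two initializations is polynomially small in $n$, and the deviation can be absorbed into the $\Otilde(h^2)$ term through a standard coupling plus compactness of $\mathrm{supp}(\phat_{\text{IDM}})$. A secondary subtlety is that Theorem~\ref{thm:main2} is stated for a deterministic $x\in\calM$ while here $X_U$ is itself a training point; one needs to verify that its proof proceeds conditional on the dataset $X$ so that applying it at each $X_{(i)}$ and then taking a union bound is legitimate—an inspection of the argument in Section~\ref{sec:linftyproof} should confirm this.
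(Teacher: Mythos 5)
Your proposal matches the paper's proof: both decompose $W_1(\phat_{\text{IDM}}, p_0)$ via the same two intermediate measures $\mu_{\text{geo}} = \mathrm{Law}(\exp_{X_U}(\sigma\xi_{\mathcal{T}}))$ and $\phat_{\text{KDE}}$, invoke the same three ingredients for the three legs, and handle the $\mathcal{N}(0,I_D)$ initialization by the coupling in Lemma~\ref{lem:inipatch}. The conditioning subtlety you flagged—that $X_U$ is itself a data point, so Theorem~\ref{thm:main2} cannot be applied literally at a random $x=X_{(i)}$—is exactly why the paper instead works with Theorem~\ref{thm:NWproj} (a $\xi$-only probability statement valid for any fixed realization of $X$ and any $x\in\calM$) plus the uniform-over-$\calM$ bound of Proposition~\ref{prop:NWonM}, which together yield what you need after a union bound over $j\in[n]$.
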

The proof is allocated in Section \ref{sec:proofmain}.

\def\CZeroValue{0.8}
\def\CZeroValueSecond{0.8}
\def\EmpiricalIDMRate{-0.201}
\def\EmpiricalMemorizedDMRate{-0.168}

\section{Numerical Experiments}

In this section, we numerically verify behaviors of data generation error $W_1(\hat{p}_{\text{IDM}}, p_0)$ using synthetic data generated from distributions supported on hidden low-dimensional manifolds embedded in (moderately) high-dimensional ambient spaces. 

To evaluate $W_1(\hat{p}_{\text{IDM}}, p_0)$, we use $M = 10^{6}$ i.i.d. samples from both distributions as proxies and measure the $W_1$ distance between these empirical measures. 
We approximate this distance with the debiased Sinkhorn divergence using \texttt{Geomloss}~\cite{feydy2019interpolating} package with regularization parameter $\epsilon = 10^{-3}$ and set the \texttt{scaling} parameter, which balances the trade-off between accuracy and computational cost, to $0.9$. This setting ensures a relatively accurate approximation of $W_1$ distances within a reasonable computational budget.

\begin{figure}[htbp]
  \centering
  \includegraphics[width=1.0\textwidth]{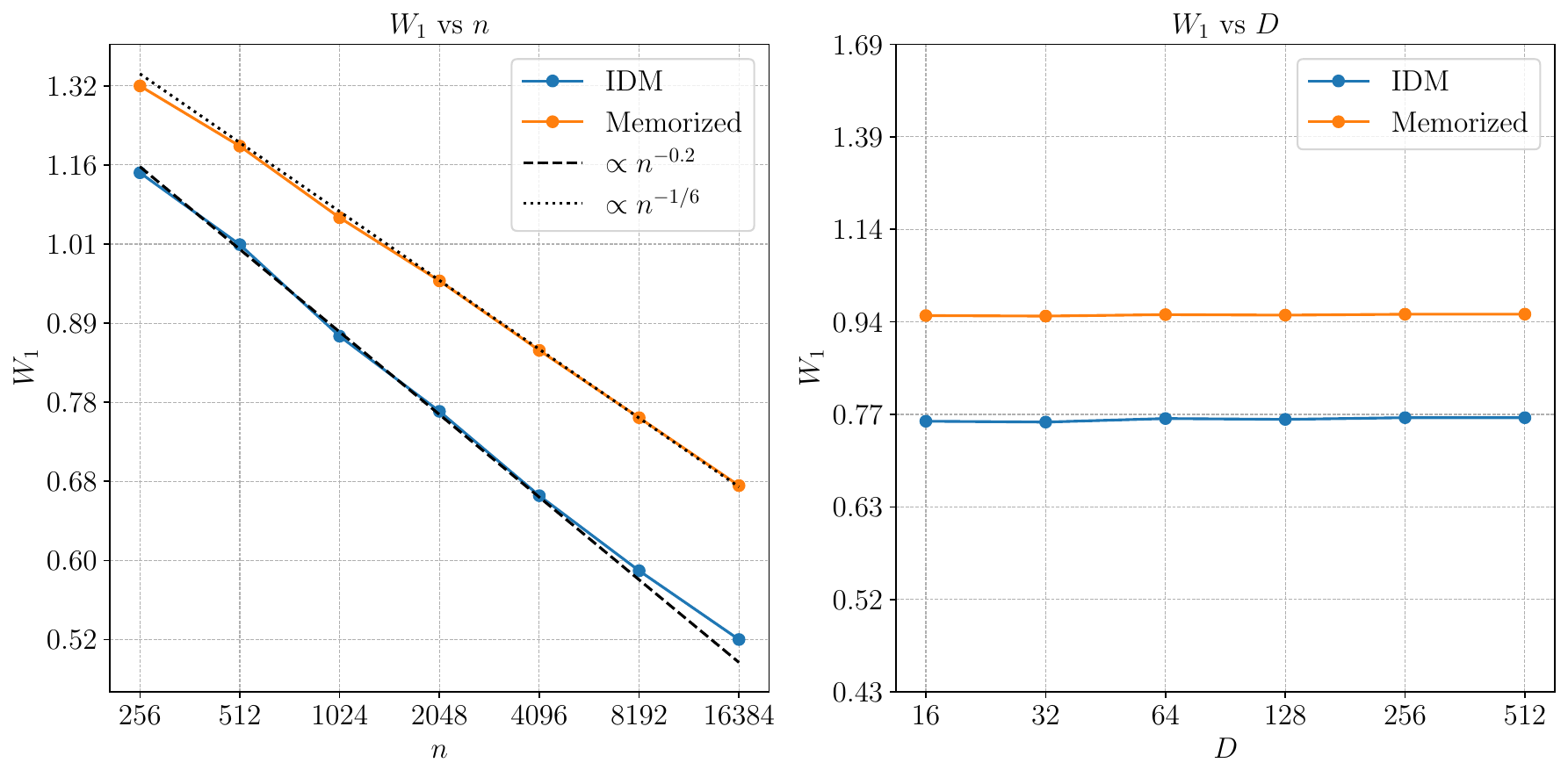}
  \caption{$W_1$ error of IDM vs memorized DM for data generated from the Haar measure on $\text{SO}(4)$ embedded in $\mathbb{R}^D$. Both plots are in log-log scale. (Left) $W_1$ loss w.r.t. sample size $n$ with $D=50$. Theoretical predictions for the algorithms are shown as dotted and dashed reference lines. (Right) $W_1$ error w.r.t. embedding dimension $D$ with fixed $n=2048$.}
  \label{fig:exp:combined}
\end{figure}

\subsection{Generation performance with different sample size}

We first validate that IDM can attain the $\tilde{O}_{d}(n^{-\frac{2}{d+4}})$ convergence rate for $W_1(\hat{p}_{\IDM}, p_0)$, which is the main claim of this paper.

We set the ground truth data to be generated from the Haar measure on the special orthogonal group $\text{SO}(4)$, a compact Riemannian manifold with intrinsic dimension $d = 6$ and naturally embedded in $\mathbb{R}^{4 \times 4}$, using algorithm described in~\cite{mezzadri2006generate} and provided by the \texttt{scipy} package.
This $16$-dimensional data was then embedded into $\mathbb{R}^D$ for $D = 50$ using a random orthogonal matrix.

We then measured $W_1(\hat{p}_{\IDM}, p_0)$ for $n$ ranging from $2^8$ to $2^{14}$ in powers of $2$. 
The IDM algorithm was configured with a bandwidth scheme $\displaystyle \frac{\sigma_{h^2}}{\alpha_{h^2}} =  \sigma = C_0 n^{-\frac{1}{d+4}}$ for $C_0 = \CZeroValue$. More specifically, the IDM samples were first generated from Equation~\eqref{eqn:emDMdist} with $t=h^2$, then updated using Equation~\eqref{eqn:iniup} with the same $h^2$ value.
In practice one may want to tune the value of $C_0$ to obtain an optimal performance on different problems.
As a baseline comparison, we also measured $W_1(\hat{p}_{\text{0}}, p_0)$, where $\hat{p}_{\text{0}}$ is the memorized version of DM, simulated with the empirical distribution $\phat_0=\frac{1}{n}\sum_{i=1}^n \delta_{X_{(i)}}$  which is known to converge in a rate of $\tilde{O}_{d}(n^{-1/d})$~\cite{weed2019sharp}.

The left panel of Figure~\ref{fig:exp:combined} plots the estimated $W_1$ distances against the sample size $n$ on a log-log scale. To verify the theoretical convergence rates, we include reference lines with slopes corresponding to the predicted exponents. For IDM, the theoretical rate is $\tilde{O}_{d}(n^{-\frac{2}{d+4}})$, which for $d=6$ corresponds to a slope of $-2/(d+4) = -0.2$. For the memorized DM, the theoretical rate is $\tilde{O}_{d}(n^{-\frac{1}{d}})$, corresponding to a slope of $-1/d \approx -0.167$. The empirical results for both methods align well with their respective theoretical prediction lines, demonstrating the expected power-law dependence. We observe a slight deviation for the largest two values of $n$ for IDM, which is attributable to the error in the $W_1$ distance estimation, as $n$ becomes comparable to the number of proxy samples $M$.
These results empirically validate the prediction of Theorem~\ref{thm:main3}.

\subsection{Generation performance with different embedding dimension}

We further investigate the influence of the ambient embedding dimension $D$ on $W_1(\hat{p}_{\IDM}, p_0)$.

To maintain consistency with the previous experiment, we again use ground truth data generated from the Haar measure on $\text{SO}(4)$. With a fixed sample size of $n=2048$, we embedded the data into ambient spaces $\mathbb{R}^D$ of dimensions $D$ from $16$ to $512$ in powers of $2$ using random orthogonal matrices.

For each dimension $D$, we measured $W_1(\hat{p}_{\IDM}, p_0)$ and $W_1(\hat{p}_{0}, p_0)$. The IDM bandwidth was set as before, with the same $C_0 = \CZeroValueSecond$.

The right panel of Figure~\ref{fig:exp:combined} shows that the $W_1$ errors for both IDM and the memorized DM are largely unaffected by the ambient dimension $D$. Importantly, this demonstrates the  truncation of noise in the normal bundle via the inertia projection, a key aspect of our method predicted by Theorem~\ref{thm:main2}.
Combined with the findings from the previous section, these experiments confirm that the convergence of $W_1(\hat{p}_{\IDM}, p_0)$ is independent of the ambient dimension $D$ in both its rate and constant factor, thus successfully bypassing the curse of dimensionality.

\section{Proofs of the theoretical results}
\label{sec:proof}
We provide the proofs for the results in this section.

\subsection{NWE truncates normal perturbation}
Our first step is showing that NWE will effectively truncate perturbation in the normal tangent bundle. 
\begin{lemma}
\label{lem:ball}
Under Assumption \ref{aspt:manifold}, let $X_{(1)}, \dots, X_{(n)} \overset{\text{iid}}{\sim} p_0$. 
For all measurable $\displaystyle A \subseteq \mathcal{M}$, $\mu(A) = \int_A p_0(y) \, dV(y)$.
Fix any point \( x \in \mathcal{M} \) and radius \( r > 0 \), define \[I_{X,r} = \left\{ i \in [n] : \|X_{(i)} - x\| \le r \right\}\text{, and } p' := p_0(X \in B_{\mathcal{M}}(x, r)) \asymp  r^d.\] 

There is some universal constant \( c_0 > 0 \), so that if 
\( \log n \le c_0 n p' \), then for any fixed \( k > 0 \), there exists \( N_0 = N_0(k) \) such that for all \( n \ge N_0 \),
\[
\mathbb{P}\left( \left| \lvert I_{X,r} \rvert - n p' \right| \le \sqrt{2 n p' (1 - p') k \log n} + \frac{2}{3} k \log n \right) \ge 1 - \frac{2}{n^k}.
\]
In particular, if \( n p' \gg \log n \), then
\[
\left| \lvert I_{X,r} \rvert - n p' \right| \leq \sqrt{2 n p' k \log n}, \quad \text{with probability at least } 1 - \frac{2}{n^k}.
\]
\end{lemma}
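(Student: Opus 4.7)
The lemma is a standard Bernstein-type concentration bound for a binomial random variable, disguised with a geometric setup. My plan is to recognize $|I_{X,r}|$ as a sum of i.i.d. Bernoullis and invert Bernstein's inequality for the desired probability level.

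First, I would define $Y_i = \mathbf{1}[\|X_{(i)} - x\| \le r]$. Under Assumption \ref{aspt:manifold}, each $Y_i$ is Bernoulli with parameter exactly $p' = p_0(X \in B_\mathcal{M}(x,r))$, where $B_\mathcal{M}(x,r) = \{y \in \mathcal{M} : \|y-x\| \le r\}$ (the statement of the lemma is consistent with this reading since the indicator uses the Euclidean norm). Hence $|I_{X,r}| = \sum_{i=1}^n Y_i$ is $\mathrm{Binomial}(n, p')$, and the centered summands satisfy $|Y_i - p'| \le 1$ with $\mathrm{Var}(Y_i) = p'(1-p')$.

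Second, I would apply Bernstein's inequality in its classical one-sided-to-two-sided form:
\[
\mathbb{P}\!\left(\bigl||I_{X,r}| - np'\bigr| \ge t\right) \le 2\exp\!\left(-\frac{t^2/2}{np'(1-p') + t/3}\right).
\]
Setting the right-hand side equal to $2 n^{-k}$ and solving the resulting inequality $t^2 \ge 2k\log n \,\bigl(np'(1-p') + t/3\bigr)$ via the quadratic formula gives $t \ge \tfrac{1}{3}k\log n + \sqrt{\tfrac{1}{9}(k\log n)^2 + 2np'(1-p')k\log n}$. Using the elementary bound $\sqrt{a+b} \le \sqrt{a} + \sqrt{b}$ on this shows that the choice
\[
t \;=\; \sqrt{2np'(1-p')k\log n} \;+\; \tfrac{2}{3}k\log n
\]
is sufficient, which is exactly the stated threshold. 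This proves the first bound.

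Third, for the ``in particular'' statement, the hypothesis $\log n \le c_0 n p'$ together with $np' \gg \log n$ forces the Gaussian term $\sqrt{2np'k\log n}$ to dominate the linear term $\tfrac{2}{3}k\log n$: indeed, $\tfrac{2}{3}k\log n \le \tfrac{2}{3}\sqrt{c_0}\sqrt{k\log n\cdot np'}$, so a sufficiently large $N_0(k)$ absorbs the constant and gives $\bigl||I_{X,r}| - np'\bigr| \le \sqrt{2np'k\log n}$ with the same probability.

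The main obstacle here is essentially bookkeeping: there is no serious conceptual difficulty since the randomness is i.i.d.\ Bernoulli and Bernstein applies directly. The only mildly delicate step is the algebra converting ``variance plus bounded-increment'' into the two-term threshold stated in the lemma, together with ensuring the ``in particular'' clause follows cleanly under the standing assumption $\log n \le c_0 np'$; the threshold $N_0(k)$ only serves to absorb a universal constant that arises when merging the two Bernstein terms.
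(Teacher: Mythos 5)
Your proposal is correct and takes essentially the same approach as the paper: recognize $|I_{X,r}|$ as a $\mathrm{Binomial}(n,p')$ sum of indicators, apply Bernstein's inequality, and invert to read off the threshold. The paper's proof is terser — it simply invokes Bernstein and states the resulting bound — while you spell out the quadratic inversion and the use of $\sqrt{a+b}\le\sqrt{a}+\sqrt{b}$; this is useful bookkeeping but not a different argument. You also correctly note that $B_\mathcal{M}(x,r)$ must be read as the Euclidean-metric ball intersected with $\mathcal{M}$ for the indicator event to match $p'$, which is the paper's implicit reading. One small point: for the ``in particular'' clause the paper's own proof actually concludes with $2\sqrt{2np'k\log n}$ (a factor of $2$ that the lemma statement drops), and your absorption argument via $N_0(k)$ is the cleaner way to get the stated form.
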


\begin{proof}
Note that
\[
\lvert I_{X,r} \rvert = \sum_{i=1}^N \mathbf{1}_{X_{(i)} \in B_{\mathcal{M}}(x, r)},
\quad \text{where } \mathbf{1}_{X_{(i)} \in B_{\mathcal{M}}(x, r)} \overset{\text{iid}}{\sim} \mathrm{Bern}(p').
\]
Applying Bernstein’s inequality for Bernoulli sums, we obtain
\[
\mathbb{P}\left( \left| \lvert I_{X,r} \rvert - n p' \right| \le \sqrt{2 n p' (1 - p') k \log n} + \frac{2}{3} k \log n \right) \ge 1 - \frac{2}{n^k}.
\]

Finally, when \( n p' \gg \log n \), the first term dominates:
\[
\sqrt{2 n p' k \log n} \geq \frac{2}{3} k \log n,
\]
so the bound simplifies to
\[
\left| \lvert I_{X,r} \rvert - n p' \right| \leq  2\sqrt{2 n p' k \log n}.
\]
\end{proof}

\begin{cor}
Under condition of Lemma \ref{lem:ball}, if $np'\gg 1$ and $n$ is sufficiently large, 
with probability $\ge 1 - \frac{2}{n^k}$, $\lvert I_{X,r} \rvert \asymp n p'$.
\end{cor}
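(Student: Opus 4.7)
The corollary should follow almost immediately from the ``in particular'' clause of Lemma \ref{lem:ball}, so my plan is just to confirm that the additive concentration bound becomes a multiplicative one once the deviation is of smaller order than the mean $np'$.

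First, I would invoke Lemma \ref{lem:ball} under the stated hypotheses. The Lemma's condition $\log n \leq c_0 np'$ together with the corollary's assumption $np' \gg 1$ (and $n$ sufficiently large) places us in the regime $np' \gg \log n$, so the ``in particular'' conclusion applies and yields
\[
\bigl| |I_{X,r}| - np'\bigr| \;\leq\; 2\sqrt{2\, np'\, k \log n}
\]
with probability at least $1 - 2/n^k$. Rewriting the right-hand side as $np'\cdot 2\sqrt{2k\log n/(np')}$, the relative error is $2\sqrt{2k\log n/(np')}$, which tends to $0$ as $n\to\infty$ because $np'/\log n\to\infty$.

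Consequently, on the same high-probability event, there exists a threshold $N_0(k)$ (absorbing the constant from $c_0$) such that for all $n\geq N_0(k)$ the relative error is at most $1/2$, giving
\[
\tfrac{1}{2}\, np' \;\leq\; |I_{X,r}| \;\leq\; \tfrac{3}{2}\, np',
\]
which is precisely the claim $|I_{X,r}|\asymp np'$. There is no real obstacle here: the only subtlety is checking that the corollary's weaker assumption ``$np'\gg 1$'' is consistent with the Lemma's quantitative condition, and this is handled by choosing $n$ large enough so that $\log n /(np')$ is small. No further probabilistic argument is needed beyond the Bernstein bound already exploited in Lemma \ref{lem:ball}.
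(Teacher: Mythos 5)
Your approach is exactly what the paper intends---the corollary is left without proof precisely because it is meant to follow by rewriting Lemma~\ref{lem:ball}'s additive Bernstein bound as a multiplicative one, and your reorganization of $\bigl||I_{X,r}|-np'\bigr|\lesssim\sqrt{np'k\log n}$ into a relative error $\sqrt{2k\log n/(np')}$ that is eventually below $1/2$ captures that correctly.

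One point is worth flagging, although it traces back to the paper's own loose wording rather than to a defect specific to your argument. You write that $\log n\le c_0 np'$ combined with $np'\gg 1$ ``places us in the regime $np'\gg\log n$,'' but $\log n\le c_0 np'$ only yields $np'\gtrsim\log n$, and $np'\gg 1$ by itself is strictly weaker than $np'\gg\log n$. If $np'$ actually sits at the boundary scale $np'\asymp\log n$, the deviation $\sqrt{2np'k\log n}$ is of order $\sqrt{k}\,np'$ and can exceed $np'$ for large $k$, so the lower half of the claimed two-sided bound would degenerate. In other words, the passage ``absorbing the constant from $c_0$'' does not actually drive the relative error $\sqrt{2k\log n/(np')}$ to zero; under $\log n\le c_0 np'$ it is only bounded by $\sqrt{2kc_0}$, which need not be small. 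What saves the corollary in the paper's actual use is that $p'\asymp r^d$ with $r\asymp\sigma\asymp n^{-1/(d+4)}$, so $np'\asymp n^{4/(d+4)}$ is polynomial in $n$ and dominates $k\log n$ for any fixed $k$. If you want your proof to be airtight as a standalone statement, replace ``$np'\gg 1$ implies $np'\gg\log n$'' with an explicit hypothesis of the form $np'\ge C k\log n$ for a sufficiently large constant $C$ (depending on the desired $\asymp$ constants), which is the condition the Bernstein bound genuinely requires.
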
 

\begin{cor}
Under condition of Lemma \ref{lem:ball}, there is a $\sigma_0>0$, if $\sigma<\sigma_0$ and  $n\sigma^{d}\gg 1$, then with probability $1-\sigma^k$, $\inf_{y\in\calM}|I_{y,\sigma}|\geq  1.$
\end{cor}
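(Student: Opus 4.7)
The plan is to upgrade the pointwise lower bound from Lemma~\ref{lem:ball} (and its first corollary) to a uniform bound over $\calM$ via a net argument, exploiting the fact that $\calM$ is a compact $d$-dimensional manifold so its covering number at scale $\sigma$ scales like $\sigma^{-d}$.

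First I would construct a $(\sigma/2)$-net $\{y_1,\dots,y_N\}\subset \calM$ in the ambient metric. By compactness of $\calM$ together with Assumption~\ref{aspt:manifold} (the near-isometric embedding, so ambient balls of radius $\sigma/2$ contain intrinsic balls of comparable radius), the covering number satisfies $N\lesssim \sigma^{-d}$ for $\sigma$ below some $\sigma_0$. The key geometric observation is that for any $y\in \calM$, picking $y_{j(y)}$ with $\|y-y_{j(y)}\|\leq \sigma/2$ gives $B(y_{j(y)},\sigma/2)\subseteq B(y,\sigma)$, hence $|I_{y,\sigma}|\geq |I_{y_{j(y)},\sigma/2}|$. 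So it suffices to lower-bound $|I_{y_j,\sigma/2}|$ simultaneously for all $j\in[N]$.

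Next I apply Lemma~\ref{lem:ball} at radius $\sigma/2$ to each net point $y_j$. Under Assumption~\ref{aspt:manifold} (density bounded below by $p_{\min}$) we have $p'_j := p_0(X\in B_\calM(y_j,\sigma/2))\asymp \sigma^d$, so the hypothesis $\log n\leq c_0 np'_j$ is satisfied when $n\sigma^d\gg \log n$; in particular the preceding corollary yields $|I_{y_j,\sigma/2}|\asymp n\sigma^d\gg 1$ with probability at least $1-2n^{-k'}$, for any fixed $k'>0$ and $n$ large enough. A union bound over the $N\lesssim \sigma^{-d}$ net points gives
\[
\PP\Bigl(\inf_{j\in[N]}|I_{y_j,\sigma/2}|\geq 1\Bigr)\geq 1-C\sigma^{-d}n^{-k'}.
\]
Combined with the containment $B(y_{j(y)},\sigma/2)\subseteq B(y,\sigma)$ this yields $\inf_{y\in\calM}|I_{y,\sigma}|\geq 1$ on the same event.

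Finally I need to convert the failure probability into the claimed $\sigma^k$ bound. Because $n\sigma^d\gg 1$ we have $\sigma^{-d}\lesssim n$, hence $\sigma^{-d}n^{-k'}\lesssim n^{1-k'}\lesssim \sigma^{-d(1-k')}=\sigma^{d(k'-1)}$. Choosing $k'=\lceil 1+k/d\rceil$ makes $d(k'-1)\geq k$, so the failure probability is at most $\sigma^k$ as required. The main (minor) obstacle is simply matching the net-size exponent against the probability budget while keeping track of the $n\sigma^d\gg 1$ regime; once the net is built, the geometric inclusion does all the work and no differential-geometric refinement is needed.
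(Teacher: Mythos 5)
Your proposal is correct and follows essentially the same route as the paper's proof: build a $(\sigma/2)$-net of $\calM$ with cardinality $\lesssim\sigma^{-d}$, apply Lemma~\ref{lem:ball} at each net point, union bound, and then use the inclusion $B(y_{j(y)},\sigma/2)\subseteq B(y,\sigma)$ to transfer to all $y\in\calM$. The only cosmetic difference is that you explicitly convert the $n^{-k'}$ failure bound into the stated $\sigma^k$ form via $\sigma^{-d}\lesssim n$, whereas the paper simply raises the pointwise exponent to $k+1$ and absorbs the net cardinality to land at $1-n^{-k}$, which is already stronger than $1-\sigma^k$.
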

\begin{proof}
We pick a $\tfrac12\sigma$-net of $\calM$, which consists of point $\{y_j, j\in [M_\sigma]\}$. The cardinality $|M_\sigma|\leq 2^d\sigma^{-d}|\calM|$ by Proposition 3 of \cite{azangulov2024convergence}, where $|\calM|$ is the volume of the manifold. Then we apply Lemma \ref{lem:ball}, $|I_{y_j,\tfrac12\sigma}|>1$ with probability $1-n^{-(k+1)}$. So 
$\min_j\{ |I_{y_j,\tfrac12\sigma}|>1\}$ has probability $1-n^{-k}$. Finally we note that for any $y$, take $y_j$ being the closest point in the net to it, $I_{y_j,\frac12\sigma}\subset I_{y,\sigma}$.
\end{proof}

\begin{thm}
\label{thm:NWproj}
Given any fixed realization $X=[X_{(1)},\ldots,X_{(n)}]$ and $x\in \calM$ and $r>0$. Suppose 
\[
\inf_{y\in\calM}|I_{y,\sigma}|\geq  1, 
\]
and $n,\sigma^{-1}$ are sufficiently large.
Let $\xi\sim \mathcal{N}(0,I)$ with decomposition $\xi=\xi_{\mathcal{T}}+\xi_\bot$ on $T_x\calM\oplus N_x\calM$, then 
\[
\mathbb{P}\left( \left\| F_{X,\sigma}(x + \sigma \xi) - F_{X,\sigma}(\exp_x(\sigma \xi_{\mathcal{T}}))\right\| \lesssim (d(k+1)\log n\, \sigma)^2|X \right) \ge 1 - n^{-k}. 
\]
In particular, 
\[
\mathbb{P}\left( \left\| F_{X,\sigma}(x + \sigma \xi_\bot) - F_{X,\sigma}(x)\right\| \lesssim (d(k+1)\log n\, \sigma)^2|X \right) \ge 1 - n^{-k}. 
\]
\end{thm}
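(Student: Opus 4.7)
The plan is to exploit the fact that the NW estimator depends only on the kernel ratios $K_\sigma(X_{(i)}, z)/K_\sigma(X_{(i)}, y)$, since any $i$-independent multiplicative factor cancels in the normalization. Setting $z = x + \sigma\xi$ and $y = \exp_x(\sigma\xi_{\mathcal{T}})$, I would parametrize $\calM$ near $x$ via the $C^2$ embedding as $\Phi(u) = x + u + Q(u)$ with $u \in T_x\calM$, $Q(u) \in N_x\calM$, and $\|Q(u)\| \lesssim \|u\|^2$, and write $X_{(i)} = \Phi(v_i)$, $y = \Phi(\sigma\xi_{\mathcal{T}})$. A direct expansion of squared distances yields
\[
\frac{K_\sigma(X_{(i)}, z)}{K_\sigma(X_{(i)}, y)} = e^{-\|z-y\|^2/(2\sigma^2)} \cdot e^{b_i}, \qquad b_i := \frac{(X_{(i)} - y) \cdot (z - y)}{\sigma^2},
\]
so the leading factor drops out and $w_i(z) = w_i(y) e^{b_i} / \sum_j w_j(y) e^{b_j}$.

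The next step is to simplify $b_i$ via orthogonality. Since $z - y = \sigma\xi_\bot - Q(\sigma\xi_{\mathcal{T}}) \in N_x\calM$ and $X_{(i)} - y = (v_i - \sigma\xi_{\mathcal{T}}) + (Q(v_i) - Q(\sigma\xi_{\mathcal{T}}))$ splits into a tangential and a normal piece, the tangential--normal cross terms vanish and
\[
\sigma^2 b_i = [Q(v_i) - Q(\sigma\xi_{\mathcal{T}})] \cdot [\sigma\xi_\bot - Q(\sigma\xi_{\mathcal{T}})].
\]
Given $X$ and $\xi_{\mathcal{T}}$, the vector $Q_i := Q(v_i) - Q(\sigma\xi_{\mathcal{T}})$ is a deterministic direction in $N_x\calM$ with $\|Q_i\| \lesssim \|v_i\|^2 + \sigma^2\|\xi_{\mathcal{T}}\|^2$, so $Q_i \cdot \xi_\bot \sim \mathcal{N}(0, \|Q_i\|^2)$; a Gaussian tail bound plus a union bound over $i \in [n]$ gives
\[
\max_{i \in [n]} |Q_i \cdot \xi_\bot| \lesssim (\max_i \|Q_i\|) \sqrt{(k+1)\log n}
\]
with probability at least $1 - n^{-k}$. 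Combined with Gaussian control $\|\xi_{\mathcal{T}}\|^2 \lesssim d + \log n$, this is the dimension-free step.

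I would then restrict attention to the effective support of the weights near $y$. Apply Lemma~\ref{lem:ball} at $y$ with radius $r \asymp \sigma\sqrt{(d+k)\log n}$ to show that the ball carries weight $1 - o(1)$ (using the hypothesis $\inf_{y\in\calM}|I_{y,\sigma}| \geq 1$ to lower-bound $\sum_j K_\sigma(X_{(j)}, y)$), while outside the ball each kernel value is smaller by a factor $n^{-C}$ and the total contribution is negligible. For $X_{(i)}$ in the effective support, $\|v_i\| \lesssim \sigma\sqrt{(d+k)\log n}$, hence $|b_i| \lesssim \sigma\, d(k+1)(\log n)^{3/2} \ll 1$. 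Using $\sum_i w_i(z) = \sum_i w_i(y) = 1$, one writes $F_{X,\sigma}(z) - F_{X,\sigma}(y) = \sum_i (w_i(z) - w_i(y))(X_{(i)} - y)$, Taylor-expands $e^{b_i} = 1 + b_i + O(b_i^2)$ to obtain $w_i(z) - w_i(y) = w_i(y)(b_i - \bar b) + O(w_i(y)(b_i^2 + \bar{b^2}))$ with $\bar b := \sum_j w_j(y) b_j$, and bounds the resulting sum using $\|X_{(i)} - y\| \lesssim \sigma\sqrt{(d+k)\log n}$ on the effective support. The output is $\sigma^2$ times a polynomial in $d, k, \log n$, which is absorbed into the stated $(d(k+1)\log n\,\sigma)^2$.

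The main obstacle is the dimension-free concentration of $b_i$. A naive Cauchy--Schwarz estimate would yield $|Q_i \cdot \xi_\bot| \leq \|Q_i\|\|\xi_\bot\| \lesssim \|v_i\|^2\sqrt{D}$, reintroducing the ambient-dimension curse and destroying the entire argument. The crucial observation is that once $(X, \xi_{\mathcal{T}})$ is conditioned upon, $Q_i$ is a single deterministic direction in $N_x\calM$, so the inner product $Q_i \cdot \xi_\bot$ is a scalar Gaussian with variance only $\|Q_i\|^2$ regardless of $D$. Beyond this, the main technical bookkeeping is maintaining several simultaneous high-probability events (the ball-count estimate from Lemma~\ref{lem:ball}, the Gaussian concentration for $\|\xi_{\mathcal{T}}\|$, and the uniform concentration for each $Q_i \cdot \xi_\bot$), and verifying that the super-polynomial decay of the Gaussian kernel actually absorbs the contribution of $X_{(i)}$ outside the effective support.
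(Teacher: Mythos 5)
Your proposal matches the paper's proof in its essential structure and, crucially, in its key insight: after conditioning on $X$ and $\xi_{\mathcal{T}}$, the normal noise $\xi_\bot$ enters the kernel exponents only through inner products with the second-order curvature vectors $Q(v_i)-Q(\sigma\xi_{\mathcal{T}})$, which are scalar Gaussians with variance $O(\|v_i\|^4)$, independent of $D$. Your multiplicative expansion of $K_\sigma(X_{(i)}, z)/K_\sigma(X_{(i)}, y) = e^{-\|z-y\|^2/(2\sigma^2)} e^{b_i}$ is algebraically equivalent to the paper's decomposition $k_\sigma(z, X_{(i)}) = e^{-\|\xi_\bot\|^2/2} e^{-z_i/(2\sigma^2)} k_\sigma(x', X_{(i)})$, since the $i$-independent prefactors cancel in the normalization in either formulation, and your bound $|b_i|\lesssim \sigma\, d(k+1)(\log n)^{3/2}$ corresponds to the paper's bound on $|z_i|/(2\sigma^2)$.

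Two small slips worth flagging. First, the theorem conditions on a \emph{fixed} realization $X$, so Lemma~\ref{lem:ball}, a statement about random $X$, cannot be invoked inside this proof; what is actually needed (and what the paper uses) is purely deterministic: the Gaussian decay makes $w_i \leq n^{-(k+1)}$ for $X_{(i)}$ outside a radius-$r$ ball, while the hypothesis $\inf_{y\in\calM}|I_{y,\sigma}|\geq 1$ lower-bounds the normalizer. Second, the graph map $\Phi(u)=x+u+Q(u)$ and $\exp_x(u)$ differ at order $\|u\|^3$; you cannot absorb this via the $\sigma^{-2}$-Lipschitz bound on $F_{X,\sigma}$ (that would only give $O(\sigma\log^{3/2}n)$), but must instead carry the $O(\sigma^3)$ discrepancy through the kernel exponent, exactly as the paper does in its Step 2 with $\Delta_{2,i}$.
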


\begin{proof}
\textbf{Step 1: A Pythagorean approximation.}
We start by showing for $r=\sqrt{2d(k+1)\log n}\sigma,$ 
\begin{align}
\notag
\mathbb{P}\bigg(  \big|\|x + \sigma \xi &- X_{(i)}\|^2 - \|x + \sigma \xi_{\mathcal{T}} - X_{(i)}\|^2\\ &- \|\sigma \xi_\bot\|^2 \big| \lesssim \sqrt{k \log n} \, \sigma r^2,\ 
\forall i \in I_{x,r}|X\bigg) \ge 1 - n^{-k}.
\label{tmp:orthogonal}
\end{align}
We decompose
\begin{align}
\notag
\Delta_{1,i}:&=\|x + \sigma \xi - X_{(i)}\|^2 - \|x + \sigma \xi_{\mathcal{T}} - X_{(i)}\|^2 - \|\sigma \xi_\bot\|^2 \\
\label{tmp:decomp}
&= 2 \sigma \langle x + \sigma \xi_{\mathcal{T}} - X_{(i)}, \xi_\bot \rangle  = 2 \sigma \langle x - X_{(i)}, \xi_\bot \rangle.     
\end{align}
Since $i \in I_{X,r}$, $\exists v_i \in T_x \mathcal{M}$ with $\|v_i\|\leq 2r$ so that,  
\[
X_{(i)} = \exp_x(v_i) = x + v_i + O(\|v_i\|^2).
\]
This indicates, 
\[
\langle x - X_{(i)}, \xi_\bot \rangle = \langle x + v_i - \exp_x(v_i), \xi_\bot \rangle.
\]
Note that
\[
\|x + v_i - \exp_x(v_i)\| = \mathcal{O}(\|v_i\|^2) = \mathcal{O}(r^2).
\]
Also, $\langle x + v_i - \exp_x(v_i), \xi_\bot \rangle \sim \mathcal{N}(0, \|P_\bot(x + v_i - \exp_x(v_i))\|^2)$. Here, $P_\bot: \mathbb{R}^D \to N_x \mathcal{M}$ is the orthogonal projection onto $N_x \mathcal{M}$.\\

Let $t_i := \|P_\bot(x + v_i - \exp_x(v_i))\| = \mathcal{O}(r^2)$, $q_i := \langle x + v_i - \exp_x(v_i), \xi_\bot \rangle$, standard Gaussian tail inequality says
\[
\mathbb{P}(|q_i| > z|X) \le 2 \exp\left( -\frac{z^2}{2 t_i^2} \right).
\]
Apply union bound for the event $\mathcal{A}=\{\max_{i \in I_{X,r}} |q_i| > z\}$, then for some constant $C$
\[
\mathbb{P}\left( \mathcal{A} \,|\,X\right) \leq 2 |I_{X,r}|\exp\left( -\frac{z^2}{C r^2} \right).
\]
To have 
\[
2 n\exp\left( -\frac{z^2}{2 C r^4} \right) \leq n^{-k},
\]
It suffices to have $z \gtrsim r^2 \sqrt{(k+1) \log n}.$ Using \eqref{tmp:decomp}, we have \eqref{tmp:orthogonal}.

\noindent\textbf{Step 2: Show $\|\exp_x(\sigma \xi_{\mathcal{T}})-X_{(i)}\|^2-\|x+\sigma \xi_{\mathcal{T}}-X_{(i)}\|^2=\widetilde{O}_p(\sigma^3)$ for $i\in I_{X,r}$.} 

Denote the event $\mathcal{B}=\{\|\xi_{\mathcal{T}}\|\geq \sqrt{d(k+1)\log n }\}$. By Gaussian tail probability, we know $\Prob(\mathcal{B}|X)\leq n^{-k}$. Note under $\mathcal{B}^c$, for $i\in I_{x,r}$
\begin{align*}
|\Delta_{2,i}|&:=|\|\exp_x(\sigma \xi_{\mathcal{T}})-X_{(i)}\|^2-\|x+\sigma \xi_{\mathcal{T}}-X_{(i)}\|^2|\\
&\leq (\|\exp_x(\sigma \xi_{\mathcal{T}})-X_{(i)}\|+\|x+\sigma \xi_{\mathcal{T}}-X_{(i)}\|)\|\exp_x(\sigma \xi_{\mathcal{T}})-x-\sigma\xi_{\mathcal{T}}\|\\
&\leq 4Cr\|\sigma \xi_{\mathcal{T}}\|^2\leq 4C \sigma^3 (d(k+1)\log n )^{3/2}. 
\end{align*}

\noindent\textbf{Step 3: Showing $F_{X,\sigma}(x+\sigma\xi)=F_{X,\sigma}(\exp_x(\sigma \xi_{\mathcal{T}}))+\widetilde{O}_p(\sigma^2)$.}
Denote $x'=\exp_x(\sigma \xi_{\mathcal{T}})$, we note that if we let $z_i=\Delta_{1,i}+\Delta_{2,i}$
\begin{align*}
k_\sigma(x+\sigma \xi, X_{(i)})&=\exp(-\frac1{2\sigma^2}\|x+\sigma\xi-X_{(i)}\|^2)\\
&=\exp(-\frac1{2\sigma^2}\|x'-X_{(i)}\|^2-\frac1{2}\|\xi_\bot\|^2)\exp(-\frac{z_i}{2\sigma^2})\\
&=\exp(-\frac1{2}\|\xi_\bot\|^2)\exp(-\frac{z_i}{2\sigma^2})k_\sigma(x', X_{(i)}).
\end{align*}
Therefore we have
\begin{align*}
F_{X, \sigma}(x')= \frac{\sum\limits_{i=1}^n k_\sigma(x', X_{(i)}) X_{(i)}}{\sum\limits_{i=1}^n k_\sigma(x', X_{(i)})}
=\frac{\sum_{i\in [n]} w_i X_{(i)}}{\sum_{i\in [n]} w_i},
\end{align*}
and 
\begin{align*}
F_{X, \sigma}(x+\sigma \xi)=
\frac{\sum\limits_{i=1}^n k_\sigma(x', X_{(i)})\exp(-\frac{z_i}{2\sigma^2}) X_{(i)}}{\sum\limits_{i=1}^n k_\sigma(x', X_{(i)})\exp(-\frac{z_i}{2\sigma^2})}
=\frac{\sum_{i\in [n]} w_iu_i X_{(i)}}{\sum_{i\in [n]} w_iu_i},
\end{align*}
where $w_i:=k_\sigma(x', X_{(i)})$ and $u_i:=\exp(-\frac{z_i}{2\sigma^2})$. 
So 
\begin{align*}
&\left\|\frac{\sum_{i\in [n]} w_iu_i X_{(i)}}{\sum_{i\in [n]} w_iu_i}-
\frac{\sum_{i\in [n]} w_i X_{(i)}}{\sum_{i\in [n]} w_i}\right\|\\
&=\left\|\frac{\sum_{i\in [n]} w_iu_i (X_{(i)}-x')}{\sum_{i\in [n]} w_iu_i}-
\frac{\sum_{i\in [n]} w_i (X_{(i)}-x')}{\sum_{i\in [n]} w_i}\right\|\\
&\leq\left\|\frac{\sum_{i\in [n]} w_i (X_{(i)}-x')}{\sum_{i\in [n]} w_iu_i}-\frac{\sum_{i\in [n]} w_i (X_{(i)}-x')}{\sum_{i\in [n]} w_i}\right\|+\left\|\frac{\sum_{i\in [n]} w_i(u_i-1) (X_{(i)}-x')}{\sum_{i\in [n]} w_iu_i}\right\|,
\end{align*}
 Note that if $i\notin I_{x',r}$ with $r=\sqrt{2d(k+1)\log n}\sigma$  
\[
w_i=\exp(-\frac{1}{2\sigma^2} \|x'-X_{(i)}\|^2)
\leq \exp(-(k+1) \log n)\leq n^{-k-1}. 
\]
So 
\[
\sum_{i\notin I_{x',r}} w_i\leq n^{-k},\quad
\sum_{i\notin I_{x',r}} w_iu_i\leq n^{-k},\quad
\left\|\sum_{i\notin I_{x',r}} w_i (X_{(i)}-x')\right\|\leq n^{-k}\text{diam}(\calM). 
\]

Note that for all $i\in I_{x,r}$, under $\mathcal{A}^c\cap \mathcal{B}^c$, 
\[
\max\{|u_i-1|,\frac{|u_i-1|}{u_i}\}\leq \frac{|z_i|}{2\sigma^2}\leq C\sigma (d(k+1)\log n)^{3/2}<1/2,
\]
where we use the exponential ratio bound: 
for all \( x \in \mathbb{R} \) with small enough norm, we have:
\[
\left| e^{-x} - 1 \right| \le |x|.
\]
Meanwhile by our assumption of $X$,
\[
\sum_{i}w_i\geq  \sum_{i\in I_{x',\sigma}}w_i\geq 
\sum_{i\in I_{x',\sigma}}\exp(-1/2)
\geq e^{-1/2}.
\]
Meanwhile under $\mathcal{A}^c\cap \mathcal{B}^c$, $I_{x',\sigma}\subset I_{x,r}$, $\sum_i u_i w_i\geq 
\sum_{i\in I_{x',\sigma}}\exp(-1/2) u_i\geq \frac12 e^{-1/2}$. 

We bound each term using 
\begin{align*}
&\left\|\frac{\sum_{i\in [n]} w_i (X_{(i)}-x')}{\sum_{i\in [n]} w_iu_i}-\frac{\sum_{i\in [n]} w_i(X_{(i)}-x')}{\sum_{i\in [n]} w_i}\right\|\\
&=\frac{\|\sum_{i\in [n]} w_i (X_{(i)}-x')\|(\sum_{i\in [n]}w_i u_i|u_i^{-1}-1|)}{(\sum_{i\in [n]}w_i u_i)(\sum_{i\in [n]}w_i)}\\
&\leq \frac{\|\sum_{i\in [n]} w_i (X_{(i)}-x')\|(\sum_{i\in [n]}w_iu_i)\sigma (d(k+1)\log n)^{3/2}}{(\sum_{i\in [n]}w_i u_i)(\sum_{i\in [n]}w_i)}\\
&\leq \frac{(\|\sum_{i\in I_{x',r}} w_i (X_{(i)}-x')\|+\text{diam}(\calM)n^{-k})}{\sum_{i\in [n]}w_i}\sigma (d(k+1)\log n)^{3/2}\\
&\leq \left(r+C_d\text{diam}(\calM)n^{-k}\right)\sigma (d(k+1)\log n)^{3/2}\\
&\leq \sigma^2  (d(k+1)\log n)^2. 
\end{align*}
Similarly
\begin{align*}
&\left\|\frac{\sum_{i\in [n]} w_i(u_i-1) (X_{(i)}-x)}{\sum_{i\in [n]} w_iu_i}\right\|\\
&\leq \left\|\frac{\sum_{i\in I_{x,r}} w_i(u_i-1) (X_{(i)}-x)}{\sum_{i\in [n]} w_iu_i}\right\|+\left\|\frac{\sum_{i\in I^c_{x,r}} w_i(u_i-1) (X_{(i)}-x)}{\sum_{i\in [n]} w_iu_i}\right\|\\
&\leq r\left|\frac{\sum_{i\in I_{x,r}} w_i(u_i-1) }{\sum_{i\in [n]} w_iu_i}\right|+\frac{n^{-k}\text{diam}(\calM)}{\sum_{i\in [n]} w_iu_i}\leq 2e\sigma^2  ((k+1)\log n)^2.
\end{align*}
Finally, note that $\xi_{\mathcal{T}}$ and $\xi_\bot$ are independent. So all the discussion above holds if we add in the condition that $\xi_{\mathcal{T}}=0$. 
\end{proof}

\subsection{Concentration for KE on $\calM$}
\label{sub:KDEproof}
We will refer to functions of form 
\[
\widehat{f}=\frac{1}{n}\sum_{i=1}^n k_{\sigma}(x,X(i)) f(X(i)) 
\]
a kernelised estimator (KE) for function $f$. Notably, the NWE \eqref{eqn:NWE} is the ratio of KE for $f(x)=x$ and $f(x)\equiv 1$, and the manifold KDE is the KE for $f(x)\equiv 1$. 

Our second step is studying  that KE on the manifold.  We note that \cite{ozakin2009submanifold} has already proved the convergence of manifold KDE in $L_2$ norm. 
\begin{lemma}
 \label{lem: pointwise_bias} 
  Under Assumption \ref{aspt:manifold}, suppose $X_{(1)},\ldots, X_{(n)}$ are i.i.d. samples from $p_0$. Denote \( K(r) = (2\pi)^{-d/2} e^{-r^2/2} \) so that \(k_\sigma(x, X_{(i)}) = \frac{1}{\sigma^d} K\left( \frac{\|x-X_{(i)}\|}{\sigma} \right)\). 
  Given a $C^2$ function $f$,
  let the kernelized regressor at a point \( x \in \mathcal{M} \) with bandwidth \( \sigma > 0 \) can be written as:
\[
\widehat{f}(x) = \frac{1}{n} \sum\limits_{i=1}^n k_\sigma(x, X_{(i)}) f(X_{(i)}):= \frac{1}{n} \sum_{i=1}^n \frac{1}{\sigma^d} K\left( \frac{\|x - X_{(i)}\|}{\sigma} \right)f(X_{(i)}),
\]
where \( \|x-y\| \) is the distance in the \(\mathbb{R}^D\).
Then the pointwise bias
\[
b(x) := \mathbb{E}[\hat{f}(x)] - p_0(x)f(x)
\]
satisfies $b(x)=\frac{1}{2} \sigma^2 \Delta (f(x)p_0(x))+o(\sigma^2)=O(\sigma^2)$. 
\end{lemma}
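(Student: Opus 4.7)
The plan is to expand the expectation as an integral against the volume form on $\mathcal{M}$, localize to a small geodesic ball using the exponential decay of the Gaussian kernel, pull back via the exponential map at $x$, and perform a second-order Taylor expansion. I would start with the identity
\[
\mathbb{E}[\widehat{f}(x)] = \int_{\mathcal{M}} k_\sigma(x,y)\, f(y)\, p_0(y)\, dV(y),
\]
and truncate outside the geodesic ball of radius $r_\sigma := \sigma\sqrt{C\log(1/\sigma)}$ for a large constant $C$. Since $k_\sigma(x,y)\le (2\pi\sigma^2)^{-d/2}e^{-\mathrm{dist}_\mathcal{M}(x,y)^2/(4\sigma^2)}$ on $\mathcal{M}^{\tau_0}$ (after using the isometric embedding to trade ambient for intrinsic distance), this tail contributes $o(\sigma^2)$.

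Next, on $B_\mathcal{M}(x,r_\sigma)$, which for small $\sigma$ lies inside a normal neighborhood of $x$, I would change variables via $y=\exp_x(v)$ with $v\in T_x\mathcal{M}$. Three expansions enter. First, the locally isometric embedding gives $\|x-y\|^2 = \|v\|^2 + O(\|v\|^4)$, so
\[
e^{-\|x-y\|^2/(2\sigma^2)} = e^{-\|v\|^2/(2\sigma^2)}\bigl(1+O(\|v\|^4/\sigma^2)\bigr).
\]
Second, the volume form in normal coordinates satisfies $\sqrt{\det g(v)} = 1 + O(\|v\|^2)$ via the standard Ricci expansion. Third, since $fp_0\in C^2$, a Taylor expansion in normal coordinates yields
\[
(fp_0)(\exp_x(v)) = (fp_0)(x) + \langle \nabla_\mathcal{M}(fp_0)(x), v\rangle + \tfrac{1}{2}\, v^\top H(fp_0)(x)\, v + o(\|v\|^2),
\]
where the Hessian is the covariant one and the trace at $x$ equals $\Delta_\mathcal{M}(fp_0)(x)$.

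Rescaling $v=\sigma u$, the integral becomes
\[
\int_{T_x\mathcal{M}} (2\pi)^{-d/2} e^{-\|u\|^2/2}\Bigl[(fp_0)(x) + \sigma\,\langle\nabla(fp_0),u\rangle + \tfrac{\sigma^2}{2}\, u^\top H(fp_0) u\Bigr]\bigl(1+O(\sigma^2\|u\|^4)\bigr)\,du + o(\sigma^2).
\]
Then I would integrate term by term: the zeroth-order piece returns $f(x)p_0(x)$; the first-order piece vanishes by odd symmetry of the Gaussian; and the second-order piece, using $\mathbb{E}[u_i u_j]=\delta_{ij}$, produces $\tfrac{1}{2}\sigma^2\,\Delta_\mathcal{M}(fp_0)(x)$. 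The multiplicative corrections from the volume form and the isometric embedding contribute additional $O(\sigma^2)$ terms (their $u$-moments are finite since they multiply the $e^{-\|u\|^2/2}$ factor), while the Taylor remainder, together with the tail bound, contribute $o(\sigma^2)$. Subtracting $p_0(x)f(x)$ and collecting the residues yields the claimed $b(x)=\tfrac{1}{2}\sigma^2\Delta(fp_0)(x)+o(\sigma^2)=O(\sigma^2)$.

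The main obstacle I anticipate is the bookkeeping of the three independent $O(\sigma^2)$ corrections (kernel distortion from the isometric embedding, volume-form curvature factor, Taylor remainder), each of which must be verified to give an integrable, uniformly $O(\sigma^2)$ contribution after rescaling. Uniformity in $x$ comes from compactness of $\mathcal{M}$: the reach $\tau_0$, the $C^2$ moduli of $p_0$, $f$, and the exponential map charts are all bounded below/above over $\mathcal{M}$, so constants in the $O(\cdot)$ terms can be taken independent of $x$.
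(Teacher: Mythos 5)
Your proposal is correct and follows essentially the same route as the paper's proof: write the expectation as a kernel integral against $p_0\,dV$, split into a geodesic-local region and a Gaussian-tail region with $\gamma\asymp\sqrt{\log(1/\sigma)}$, pull back by $\exp_x$, and track the three $O(\sigma^2)$ corrections (kernel distortion $\|x-y\|-\mathrm{dist}_\mathcal{M}(x,y)$, the $\sqrt{\det g}$ curvature factor, and the second-order Taylor remainder of $fp_0$) while using Gaussian odd-moment symmetry to kill the gradient term. The only cosmetic difference is that you state the kernel-distortion bound via a comparison $\|x-y\|^2\gtrsim\mathrm{dist}_\mathcal{M}(x,y)^2$ for the tail, whereas the paper works directly with the ambient distance $\|x-y\|\ge\sigma\gamma$; both are fine under Assumption \ref{aspt:manifold}.
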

\begin{proof}
Denote \( q(x)=p_0(x) f(x)\). 
    We analyze the pointwise bias of the kernel density estimator:
\[
\text{Bias}(x) := \mathbb{E}[\hat{f}(x)] - q(x)
= \int_{\mathcal{M}} \frac{1}{\sigma^d} K\left( \frac{\|x - y\|}{\sigma} \right) [q(y) - q(x)] \, dV(y).
\]
Split the integral into a local and a tail region:
\[
\begin{aligned}
\text{Bias}(x) 
= &\underbrace{\int_{B_{\sigma \gamma}(x)} \frac{1}{\sigma^d} K\left( \frac{\|x - y\|}{\sigma} \right) [q(y) - q(x)] dV(y)}_{\text{Bias}_{\text{local}}(x)} \\
&+ \underbrace{\int_{\mathcal{M} \setminus B_{\sigma \gamma}(x)} \frac{1}{\sigma^d} K\left( \frac{\|x - y\|}{\sigma} \right) [q(y) - q(x)] dV(y)}_{\text{Bias}_{\text{tail}}(x)},
\end{aligned}
\]
where \(\gamma\) will be set later.\\
\textbf{Step 1: tail Term.}
We estimate the tail bias term directly using the exponential decay of the Gaussian kernel and the compactness of \(\mathcal{M}\).
From the definition:
\[
\text{Bias}_{\text{tail}}(x) := \int_{\mathcal{M} \setminus B_{\sigma \gamma}(x)} \frac{1}{\sigma^d} K\left( \frac{\|x - y\|}{\sigma} \right) [q(y) - q(x)] \, dV(y).
\]
Since \(q\) is \(C^2\) on compact \(\mathcal{M}\), it is uniformly bounded, so there exists \(C_1 > 0\) such that:
\[
|q(y) - q(x)| \le C_1 \quad \text{for all } y \in \mathcal{M}.
\]
For \(y \in \mathcal{M} \setminus B_{\sigma \gamma}(x)\), we have \(\|x - y\| \ge \sigma \gamma\), so the Gaussian kernel satisfies:
\[
K\left( \frac{\|x - y\|}{\sigma} \right) \le (2\pi)^{-d/2} \exp\left( -\frac{\gamma^2}{2} \right).
\]
Thus, the entire integrand is uniformly bounded on the integration domain:
\[
\left| \frac{1}{\sigma^d} K\left( \frac{\|x - y\|}{\sigma} \right) [q(y) - q(x)] \right| \le \frac{C_2}{\sigma^d} \exp\left( -\frac{\gamma^2}{2} \right).
\]
Hence,
\[
|\text{Bias}_{\text{tail}}(x)| \le \frac{C_2}{\sigma^d} \exp\left( -\frac{\gamma^2}{2} \right) \cdot \mathrm{Vol}(\mathcal{M}).
\]
Now set \( \gamma = R \sqrt{\log(1/\sigma)} \), and then:
\[
\exp\left( -\frac{\gamma^2}{2} \right) = \sigma^{R^2/2},
\]
so we have:
\[
|\text{Bias}_{\text{tail}}(x)| \le C_3 \cdot \sigma^{-d} \cdot \sigma^{R^2/2} = C_3 \cdot \sigma^{R^2/2 - d}.
\]
Choosing \(R^2 > 2d + 4\), we obtain:
\[
|\text{Bias}_{\text{tail}}(x)| =  O(\sigma^3),
\]
which shows that the tail contribution is negligible compared to the leading order bias.\\
\textbf{Step 2: local Term.}
In geodesic normal coordinates centered at \(x\), set 
\[
y = \exp_x(\sigma u),\quad u \in T_x \mathcal{M}, \quad \|u\| \le \gamma,
\]
then:
\[
r(u):=\frac{1}{\sigma}(\|x-y\| -\sigma \|u\|)\leq C\sigma^2\gamma^3, \quad  dV(x) = \sigma^d \sqrt{\det g_{ij}(\sigma u)} \, du.
\]
Use the expansion with $x_u$ being a point along $\exp_x(\sigma s), s\in [0,u]$ \cite{Boumal_2023},
\[
q(\exp_x(\sigma u)) = q(x) + \sigma \nabla q(x) \cdot u + \frac{\sigma^2}{2} u^\top \nabla^2 q(x_u) u + O(\sigma^3).
\]
Here \( \nabla q(x) \) is the \emph{Riemannian gradient} of \( q \) at \( x \), i.e., the unique vector in \( T_x\mathcal{M} \) such that for all \( v \in T_x\mathcal{M} \),\(\langle \nabla q(x), v \rangle = D_v q(x),\) where \( D_v q(x) \) is the directional derivative of \( q \) at \( x \) along \( v \) and \( \nabla^2 q(x_u) \) is the \emph{Riemannian Hessian} of \( q \) at the point \( x_u \), defined via the Levi-Civita connection that
    \(
    \nabla^2 q(x_u)(v, w) = \langle \nabla_v \nabla q, w \rangle \big|_{x_u},\text{ for } v, w \in T_{x_u}\mathcal{M}.
    \)\\
From the expansion of the metric tensor in normal coordinates (see page 91 of \cite{Chavel_2006}):
\[
g_{ij}(\sigma u) = \delta_{ij} - \frac{\sigma^2}{3} \sum_{k,l} R_{ikjl}(x) u^k u^l,
\]
where \( R_{ikjl}(x) \) is the Riemann curvature tensor at point \( x \) in normal coordinates.
Applying the determinant identity \( \sqrt{\det(I + A)} = 1 + \frac{1}{2} \text{tr}(A) + O(\|A\|^2) \), we find:
\[
\operatorname{tr}\left( -\frac{\sigma^2}{3} R_{ikjl} u^k u^l \right) = -\frac{\sigma^2}{3} \operatorname{Ric}_{ij}(x) u^i u^j = -\frac{\sigma^2}{3} \operatorname{Ric}_{x}(u, u).
\]
Therefore:
\[
\sqrt{\det g_{ij}(\sigma u)} = 1 - \frac{\sigma^2}{6} \operatorname{Ric}_x(u, u) + O(\sigma^3) = 1 + O(\sigma^2).
\]
Also, since:
\[
|K(\|u\|+r(u))-K(\|u\|)|\leq C |r(u)|\leq C\sigma^2\gamma^3,
\]
we obtain:
\[
\begin{aligned}
\text{Bias}_{\text{local}}(x)
&= \int_{\|u\| \le \gamma} K(\|u\|) \left[ \sigma \nabla q(x) \cdot u + \frac{\sigma^2}{2} u^\top \nabla^2 q(x_u) u \right] du. + O(\sigma^3)
\end{aligned}
\]
By symmetry of the Gaussian kernel:
\[
\int_{\|u\| \le \gamma} K(\|u\|) u \, du = 0, \quad \int_{\|u\| \le \gamma} K(\|u\|) u_i u_j \, du = \mu_2(\gamma) \delta_{ij},
\]
where
\[
\mu_2(\gamma) := \int_{\|u\| \le \gamma} K(\|u\|) u_1^2 \, du.
\]
Since:
\[
\mu_2(\gamma) \le \int_{\mathbb{R}^d} K(\|u\|) u_1^2 \, du = \mu_2,
\]
since $\|\nabla^2 q(x_u)-\nabla^2 q(x)\|=o(1)$, we have:
\[
\text{Bias}_{\text{local}}(x) = \frac{\sigma^2}{2} \mu_2 \Delta q(x) + o(\sigma^2).
\]
Combining both terms, we obtain:
\[
\text{Bias}(x) := \mathbb{E}[\hat{f}(x)] - q(x)
= \frac{\sigma^2}{2} \Delta q(x) + o(\sigma^2 ).
\]
\end{proof}

\begin{lemma}
\label{lem:pointwise_variance}
   With the same assumption as lemma \ref{lem: pointwise_bias}, then there exists a constant \( C_v > 0 \) such that the pointwise variance of \(\hat{f}(x)\) satisfy:
\[
 \mathrm{Var}[\hat{f}_{KDE}(x)]  \leq \frac{C_v}{n \sigma^d}.
\]
\end{lemma}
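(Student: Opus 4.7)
The plan is to reduce the variance bound to a second-moment bound and then evaluate the resulting integral by the same normal-coordinate change of variables used in Lemma \ref{lem: pointwise_bias}. Since $\hat f(x) = \frac{1}{n}\sum_{i=1}^n g(X_{(i)})$ with $g(y) := \sigma^{-d} K(\|x-y\|/\sigma)\, f(y)$ and the $X_{(i)}$ are i.i.d., we have
\[
\mathrm{Var}[\hat f(x)] \;=\; \tfrac{1}{n}\mathrm{Var}[g(X)] \;\le\; \tfrac{1}{n}\,\mathbb{E}[g(X)^2] \;=\; \tfrac{1}{n\sigma^{2d}}\int_{\mathcal M} K^2\!\left(\tfrac{\|x-y\|}{\sigma}\right) f^2(y)\, p_0(y)\, dV(y).
\]
So the task reduces to showing that the remaining integral is $O(\sigma^d)$ uniformly in $x\in\mathcal M$, which produces the claimed $C_v/(n\sigma^d)$ rate.

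To bound that integral I would split it into a local part over the geodesic ball $B_{\mathcal M}(x,\sigma\gamma)$ and a tail, with $\gamma = R\sqrt{\log(1/\sigma)}$, as in Step 1 of Lemma \ref{lem: pointwise_bias}. For the tail, the Gaussian decay gives $K^2(\|x-y\|/\sigma) \le (2\pi)^{-d} e^{-\gamma^2}$, so using $\|f\|_\infty$, $p_{\max}$, and $\mathrm{Vol}(\mathcal M)<\infty$ the tail contribution is $O(\sigma^{R^2 - 2d})$, which is negligible for $R$ sufficiently large. For the local part I parameterize $y = \exp_x(\sigma u)$ so that $dV(y) = \sigma^d\sqrt{\det g_{ij}(\sigma u)}\, du = \sigma^d(1 + O(\sigma^2\|u\|^2))\,du$ and, by the local isometric embedding in Assumption \ref{aspt:manifold}, $\|x-y\| = \sigma\|u\| + O(\sigma^3\|u\|^3)$. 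Substituting yields
\[
\tfrac{1}{\sigma^{2d}}\int_{\|u\|\le \gamma} K^2(\|u\| + O(\sigma^2\|u\|^3))\, f^2(\exp_x(\sigma u))\, p_0(\exp_x(\sigma u))\, \sigma^d(1 + O(\sigma^2))\, du.
\]

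The factors of $\sigma^{-2d}$ and $\sigma^d$ combine to $\sigma^{-d}$. Using $f^2(y) p_0(y) \le \|f\|_\infty^2\, p_{\max}$, together with the Lipschitz control on $K$ to absorb the $O(\sigma^2\|u\|^3)$ perturbation in its argument, the integrand is dominated by a $\sigma$-independent multiple of $K^2(\|u\|)$. Since $\int_{\mathbb R^d} K^2(\|u\|)\, du = (4\pi)^{-d/2}$ is a finite absolute constant depending only on $d$, this shows $\mathbb{E}[g(X)^2] \le C_v/\sigma^d$ for a constant $C_v$ depending only on $\|f\|_\infty$, $p_{\max}$, $d$, and the geometry of $\mathcal M$. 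Dividing by $n$ gives the claim.

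Unlike the bias lemma, no delicate first-order cancellation is needed here, so the argument is considerably lighter. The only mild care required is ensuring that $C_v$ is uniform in $x$; this follows from compactness of $\mathcal M$, which gives uniform bounds on curvature (hence on $\sqrt{\det g_{ij}}$) and on the cubic remainder in the embedding estimate. Thus the expected main obstacle is purely bookkeeping — keeping the constant independent of $x$ and $\sigma$ — rather than any genuine analytic difficulty.
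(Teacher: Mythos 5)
Your argument is correct and mirrors the paper's proof essentially step for step: reduce the variance to the second moment of a single kernel term, split the resulting integral over $\mathcal M$ into a local geodesic ball of radius $\asymp \sigma\sqrt{\log(1/\sigma)}$ and a tail, kill the tail by Gaussian decay and compactness of $\mathcal M$, and evaluate the local part in normal coordinates to pick up the $\sigma^{-d}\int K^2$ factor. The only cosmetic difference is notational (you write the local radius as $\sigma\gamma$ with $\gamma=R\sqrt{\log(1/\sigma)}$, while the paper writes $R\sigma$ and later chooses $R^2 = (d+c)\log(1/\sigma)$), so there is no substantive divergence.
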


\begin{proof}
Since
\[
\hat{f}(x) = \frac{1}{n} \sum_{i=1}^n k_\sigma(x, X_{(i)})f(X_{(i)}),
  \quad \text{where } 
k_\sigma(x, X_{(i)}) = \frac{1}{\sigma^d} K\left( \frac{\|x-X_{(i)}\|}{\sigma} \right).
\]
Then
\[
\mathrm{Var}[\hat{f}(x)] = \frac{1}{n} \mathrm{Var}[k_\sigma(x, X_{(1)})f(X_{(1)})] \le \frac{1}{n} \mathbb{E}[k_\sigma^2(x, X_{(1)}){f^2(X_{(1)})}].
\]
We compute
\[
\mathbb{E}[k_\sigma^2(x, X_{(1)})f^2(X_{(1)})] = \int_{\mathcal{M}} \left( \frac{1}{\sigma^d} K\left( \frac{\|x-y\|}{\sigma} \right) \right)^2 f^2(y)p_0(y) \, dV(y).
\]
Split the integral into two parts:
\[
\int_{\mathcal{M}} = \int_{B_{\mathcal{M}}(x, R\sigma)} + \int_{\mathcal{M} \setminus B_{\mathcal{M}}(x, R\sigma)} =: I_{\text{local}} + I_{\text{tail}}.
\]
For the tail region, Gaussian tail decay gives
\[
K\left( \frac{\|x-y\|}{\sigma} \right)^2 \le \exp\left( -\frac{\|x-y\|^2}{\sigma^2} \right) \le e^{-R^2},
\]
and that \(|f^2p_0(x)| \le C_1\) on \(\mathcal{M}\), so
\[
I_{\text{tail}} \le \frac{C_1}{\sigma^{2d}} e^{-R^2} \cdot \operatorname{vol}(\mathcal{M}) = O(\sigma^{-2d} e^{-R^2}).
\]
Choosing \( R^2 = (d + c) \log(1/\sigma) \Rightarrow e^{-R^2} = \sigma^{d + c} \), we have $I_{\text{tail}} = O(\sigma^{-d - c}).$
Now for the local region, use the exponential map \( x = \exp_y(\sigma z) \), with \( z \in B(0,R) \subset T_y \mathcal{M} \). Then:
\[
r(z):=\frac{1}{\sigma}(\|x-y\| -\sigma \|z\|)\leq C\sigma^2 R^3, \quad  dV(x) = \sigma^d \sqrt{\det g_{ij}(\sigma z)} \, dz.
\]
and use the expansion,
$|\sqrt{\det g_{ij}(\sigma z)} - 1|\leq C\sigma^2,$
as well as:
\[
|K(\|z\|+r(z))-K(\|z\|)|\leq C |r(z)|\leq C\sigma^2R^3.
\]
Hence,
\[
\int_{B_{\mathcal{M}}(y,R\sigma)} K^2\left( \frac{\|x-y\|}{\sigma} \right) dV(x)
= \sigma^d \int_{B(0,R)} K^2(\|z\|) (1 + O(\sigma^2)) \, dz = \sigma^d \omega_d + O(\sigma^{d+2}),
\]
where \( \omega_d := \int_{\mathbb{R}^d} K^2(\|z\|) dz \). Therefore,
\[
\mathbb{E}[k_\sigma^2(x, X_{(1)})] \le \frac{C_1}{\sigma^{2d}} (\sigma^d \omega_d + O(\sigma^{d+2})) = \frac{C_1 \omega_d}{\sigma^d} + O(\sigma^{-d-2}).
\]
Therefore, we obtain
\[
\mathrm{Var}[\hat{f}(x)]  \leq \frac{C_v}{n \sigma^d},
\quad \text{with } C_v := C_1 \omega_d.
\]
\end{proof}
\begin{lemma}
\label{lem:denominator_F}
Under the same Assumption as lemma \ref{lem: pointwise_bias},
then, with high probability \(1-n^{-k}\), we have
\[
\widehat{f}(x) = p(x)f(x) + O(\sqrt{k\log n}\sigma^2).
\]
\end{lemma}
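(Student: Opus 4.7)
My plan is to use the standard bias--variance decomposition
\[
\widehat{f}(x) - p_0(x)f(x) = \bigl(\widehat{f}(x) - \mathbb{E}[\widehat{f}(x)]\bigr) + \bigl(\mathbb{E}[\widehat{f}(x)] - p_0(x)f(x)\bigr),
\]
so that the two ingredients established above can be plugged in directly: the second summand (the bias) is already $O(\sigma^2)$ by Lemma \ref{lem: pointwise_bias}, and all that remains is a high-probability bound on the fluctuation.

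For the fluctuation I would apply Bernstein's inequality to the i.i.d.\ sum $\widehat{f}(x) = \tfrac{1}{n}\sum_{i=1}^{n} Y_i$ with $Y_i := k_\sigma(x,X_{(i)})f(X_{(i)})$. The two Bernstein inputs are immediate from the previous lemmas: the per-sample variance satisfies $\text{Var}(Y_i) \le C_v \sigma^{-d}$ (this is exactly what was bounded inside the proof of Lemma \ref{lem:pointwise_variance} before the $1/n$ averaging), and the uniform bound $|Y_i| \le (2\pi)^{-d/2}\|f\|_\infty\,\sigma^{-d}$ follows from $K \le (2\pi)^{-d/2}$. Choosing the deviation parameter $t = k\log n$ and dividing by $n$ yields, with probability at least $1 - 2n^{-k}$,
\[
\bigl|\widehat{f}(x) - \mathbb{E}[\widehat{f}(x)]\bigr| \;\lesssim\; \sqrt{\frac{k\log n}{n\sigma^d}} \;+\; \frac{k\log n}{n\sigma^d}.
\]

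Under the bandwidth regime of interest, $\sigma \asymp n^{-1/(d+4)}$ (equivalently $n\sigma^{d+4} \gtrsim 1$), one has $1/(n\sigma^d) \lesssim \sigma^4$, so the sub-Gaussian term collapses to $\sqrt{k\log n}\,\sigma^2$ and the linear-in-$t$ correction is further smaller by a factor of $\sigma^2$. Adding the $O(\sigma^2)$ bias from Lemma \ref{lem: pointwise_bias} then gives the advertised rate $\widehat{f}(x) = p_0(x)f(x) + O(\sqrt{k\log n}\,\sigma^2)$.

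I do not expect any genuine obstacle: the only bit of care needed is bookkeeping around the bandwidth regime, making sure the Bernstein sub-exponential correction is dominated by the sub-Gaussian term. Should one wish to state the lemma uniformly over $\sigma$ rather than at the optimal scale, one should instead carry both the $\sqrt{k\log n/(n\sigma^d)}$ and $k\log n/(n\sigma^d)$ terms alongside the bias, but no new idea beyond Lemmas \ref{lem: pointwise_bias}--\ref{lem:pointwise_variance} and a textbook Bernstein argument is required.
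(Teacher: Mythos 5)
Your proposal is correct and follows essentially the same route as the paper: bias--variance decomposition, $O(\sigma^2)$ bias from Lemma \ref{lem: pointwise_bias}, per-sample variance and uniform bound $\lesssim \sigma^{-d}$ from Lemma \ref{lem:pointwise_variance}, Bernstein's inequality on the i.i.d.\ average, and the observation that $n\sigma^{d+4}\gtrsim 1$ makes the fluctuation $\lesssim\sqrt{k\log n}\,\sigma^2$. The only difference is cosmetic (you parameterize Bernstein by the exponent and then scale, whereas the paper plugs in $t=\sigma^2\sqrt{Ck\log n}$ directly); the conclusion and the bandwidth condition are identical.
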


\begin{proof}[Proof of Proposition \ref{prop:KDEp0}]
Denote \(q(x)=p_0(x)f(x)\). By Lemma \ref{lem: pointwise_bias}, we know that
\[
\text{Bias}(x) := \mathbb{E}[\hat{f}(x)] - q(x)
=  O(\sigma^2).
\]
and by Lemma \ref{lem:pointwise_variance}, we know $\mathrm{Var}(\hat{f}(x)) \leq \frac{C}{n \sigma^d}.$  Define the centered random variable:
\[
Y_i := \frac{1}{\sigma^d} K\left( \frac{\|x - X_{(i)}\|}{\sigma} \right)f(X_{(i)}) - q(x),
\]
so that \( \hat{f}(x) - q(x) = \frac{1}{n} \sum_{i=1}^n Y_i \). We analyze its concentration using Bernstein’s inequality.
First, note the following properties:
\begin{itemize}
    \item The kernel function is bounded, so:
\[
|Y_i| \le \frac{1}{\sigma^d} + |q(x)| \le \frac{C}{\sigma^d},
\]
for some constant \(C > 0\);
\item The variance of \(Y_i\) is:
\[
\mathrm{Var}(Y_i) = \mathrm{Var}\left( \frac{1}{\sigma^d} K\left( \frac{\|x - X_{(i)}\|}{\sigma} \right) f(X_{(i)})\right) \le \frac{C'}{\sigma^d};
\]
\item  The mean satisfies:
\[
\mathbb{E}[Y_i] = \mathbb{E}[\hat{f}(x)] - q(x) = O(\sigma^2).
\]
\end{itemize}
Now apply Bernstein’s inequality: for i.i.d. \(Y_i\) with mean \(\mu = \mathbb{E}[Y_i]\), variance proxy \(\sigma_Y^2\), and bounded by \(B\), we have:
\[
\mathbb{P}\left( \left| \frac{1}{n} \sum_{i=1}^n Y_i - \mu \right| > t \right)
\le 2 \exp\left( - \frac{n t^2}{2 \sigma_Y^2 + \frac{2}{3} B t} \right).
\]
Set \( t = \sigma^2 \sqrt{C k \log n} \), then:
\[
\mathbb{P}\left( \left| \hat{f}(x) - \mathbb{E}[\hat{f}(x)] \right| > \sigma^2 \sqrt{C k\log n} \right)
\le 2 \exp\left( - \frac{Ck n \log n \sigma^4}{2 C'/\sigma^d + \frac{2}{3} C \sigma^2 / \sigma^d} \right).
\]
The denominator is \( O(1/\sigma^d) \), so by setting sufficiently large $C$,
\[
\mathbb{P}\left( \left| \hat{f}(x) - \mathbb{E}[\hat{f}(x)] \right| > \sigma^2 \sqrt{C k\log n}\right)
\le  \exp\left( - k n \log n \sigma^{d+4} \right).
\]
To ensure this deviation probability is at most \(n^{-k}\) for any \(k > 0\), we require:
\[
\exp\left( - k n \log n \sigma^{d+4} \right) \le n^{-k}
\quad \Longleftrightarrow \quad
n \ge \frac{1 }{\sigma^{d+4}}.
\]
\end{proof}

\subsection{The KE are Lipschitz}
\label{sec:KELip}
\begin{lemma}
\label{lem:KDELip}
For $\widehat{p}_{\text{KDE}}(x)$ is $\sigma^{-(d+1)}$-Lipschitz continuous almost surely for all $x$.
\end{lemma}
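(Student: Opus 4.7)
The plan is to bound $\|\nabla_x \widehat{p}_{\text{KDE}}(x)\|$ uniformly in $x$ and any realization of the data, since a uniform gradient bound immediately yields Lipschitz continuity with the same constant. Because $\widehat{p}_{\text{KDE}}$ is an average of the smooth functions $k_\sigma(\cdot,X_{(i)})$, it suffices to bound $\|\nabla_x k_\sigma(x,y)\|$ uniformly in $y$ and then apply the triangle inequality.

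First I would differentiate the Gaussian kernel directly:
\[
\nabla_x k_\sigma(x,y) = -\frac{1}{\sigma^2}(x-y)\, k_\sigma(x,y),
\qquad
\|\nabla_x k_\sigma(x,y)\| = \frac{\|x-y\|}{\sigma^{2}}\,(2\pi\sigma^2)^{-d/2}\exp\!\Bigl(-\tfrac{\|x-y\|^2}{2\sigma^2}\Bigr).
\]
Substituting $u = \|x-y\|/\sigma \ge 0$, this becomes $(2\pi)^{-d/2}\sigma^{-(d+1)}\, u e^{-u^2/2}$. The scalar function $u\mapsto u e^{-u^2/2}$ on $[0,\infty)$ attains its maximum at $u=1$ with value $e^{-1/2}$, so
\[
\|\nabla_x k_\sigma(x,y)\| \le (2\pi)^{-d/2} e^{-1/2}\, \sigma^{-(d+1)} \le \sigma^{-(d+1)}.
\]

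Averaging over $y=X_{(i)}$ and using $\|\nabla_x \widehat{p}_{\text{KDE}}(x)\|\le \frac{1}{n}\sum_{i=1}^n \|\nabla_x k_\sigma(x,X_{(i)})\| \le \sigma^{-(d+1)}$ gives the claimed Lipschitz constant via the mean value inequality applied to the smooth function $\widehat{p}_{\text{KDE}}$. This bound is deterministic once the data $X_{(1)},\dots,X_{(n)}$ are fixed, so it holds almost surely. There is no real obstacle here; the only minor point is verifying that the prefactor $(2\pi)^{-d/2}e^{-1/2}$ is indeed at most $1$, which is immediate for all $d\ge 0$.
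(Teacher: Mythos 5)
Your argument is essentially identical to the paper's: both differentiate the Gaussian kernel to get a gradient proportional to $(x-X_{(i)}) k_\sigma(x,X_{(i)})$, both maximize the scalar function $r\mapsto r\,e^{-r^2/(2\sigma^2)}$ at $r=\sigma$ (equivalently $u\,e^{-u^2/2}$ at $u=1$), and both conclude via averaging and the mean value inequality. You are slightly more careful in explicitly retaining and bounding the $(2\pi)^{-d/2}$ prefactor, which the paper silently discards, but the proof is the same.
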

\begin{proof}
Recall \(
\widehat{p}_{\text{KDE}}(x) := \frac{1}{n} \sum_{i=1}^n \frac{1}{\sigma^d} K\left(-\frac{\|x-X_{(i)}\|^2}{2\sigma^2} \right).
\)
A direct calculation gives 
\[
\nabla \widehat{p}_{\text{KDE}}(x) =\frac{1}{n} \sum_{i\in [n]} \frac{1}{\sigma^{d+2}} \exp\left(-\frac{\|x-X_{(i)}\|^2}{2\sigma^2}\right)(x-X_{(i)}).
\]
Note that by calculus,
\[
\exp\left(-\frac{\|x-X_{(i)}\|^2}{2\sigma^2}\right)\|x-X_{(i)}\|
\leq \sigma. 
\]
To see this, let \( f(r) := r \exp\left( -\frac{r^2}{2\sigma^2} \right) \) for \( r \ge 0 \). By taking its derivative:
\[
f'(r) = \exp\left( -\frac{r^2}{2\sigma^2} \right) \left( 1 - \frac{r^2}{\sigma^2} \right).
\]
Setting \( f'(r) = 0 \), we find the critical point at \( r = \sigma \). Since \( f'(r) > 0 \) for \( r < \sigma \) and \( f'(r) < 0 \) for \( r > \sigma \), this is the unique global maximum.
Evaluating at \( r = \sigma \), we have:
\[
f(\sigma) = \sigma \exp\left( -\frac{1}{2} \right).
\]
Since \( \exp(-1/2) < 1 \), it follows that \( f(\sigma) < \sigma \), and hence:
\[
f(r) \le \sigma \quad \text{for all } r \ge 0.
\]
We have our claim.
\end{proof}

\begin{lemma}
\label{lem:NWLip}
    $F_{X, \sigma}(z)$ is Lipschitz continuous with
\[
\|F_{X, \sigma}\|_{\mathrm{Lip}} := \sup_{x,y \in \mathbb{R}^D, x \ne y} \frac{\|F_{X, \sigma}(x) - F_{X, \sigma}(y)\|}{\|x - y\|} \le \frac{\mathrm{diam}(\mathcal{M})^2}{4 \sigma^2}.
\]

\end{lemma}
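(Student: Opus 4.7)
The plan is to compute the Jacobian $J_F(z)$ of $F_{X,\sigma}$ explicitly and bound its operator norm uniformly in $z$, which by the mean value inequality yields the global Lipschitz estimate.

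First I would write $F_{X,\sigma}(z) = \sum_i w_i(z) X_{(i)}$ with weights $w_i(z) = K_\sigma(z,X_{(i)})/\sum_j K_\sigma(z,X_{(j)})$. Using $\nabla_z K_\sigma(z,X_{(i)}) = -\sigma^{-2}(z-X_{(i)}) K_\sigma(z,X_{(i)})$ and the quotient rule, I would compute
\[
\nabla w_i(z) = \sigma^{-2}\, w_i(z)\bigl(X_{(i)} - F_{X,\sigma}(z)\bigr).
\]
Substituting back and using $\sum_i w_i(z)(X_{(i)} - F_{X,\sigma}(z)) = 0$, the Jacobian simplifies to the weighted covariance
\[
J_F(z) \;=\; \sigma^{-2}\sum_{i=1}^n w_i(z)\bigl(X_{(i)} - F_{X,\sigma}(z)\bigr)\bigl(X_{(i)} - F_{X,\sigma}(z)\bigr)^\top,
\]
which is symmetric and positive semidefinite. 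So its operator norm equals its largest eigenvalue, i.e.\ the supremum of $u^\top J_F(z) u$ over unit vectors $u \in \mathbb{R}^D$.

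For each unit $u$, the quadratic form $u^\top J_F(z) u = \sigma^{-2}\,\mathrm{Var}_w(u^\top X_{(\cdot)})$ is precisely the weighted variance of the scalars $a_i := u^\top X_{(i)}$ under the probability weights $w_i(z)$. By Popoviciu's inequality for a bounded random variable, this weighted variance is at most $\tfrac14(\max_i a_i - \min_i a_i)^2$. Since $X_{(i)} \in \mathcal{M}$ almost surely, we have $\max_i a_i - \min_i a_i \le \max_{i,j} \|X_{(i)} - X_{(j)}\| \le \mathrm{diam}(\mathcal{M})$. Hence
\[
\|J_F(z)\|_{\mathrm{op}} \;\le\; \frac{\mathrm{diam}(\mathcal{M})^2}{4\sigma^2},
\]
uniformly in $z$, and the desired Lipschitz bound follows from the mean value theorem for vector-valued functions.

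The only step with any subtlety is invoking Popoviciu's inequality in the right place; the derivation of the Jacobian is just bookkeeping once the identity $\sum_i w_i(X_{(i)} - F) = 0$ is used to cancel the cross term. No assumption beyond $X_{(i)} \in \mathcal{M}$ is needed, so the bound is deterministic in the data.
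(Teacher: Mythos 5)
Your proof is correct and arrives at the stated constant by the right route. The paper computes the same Jacobian $\nabla F_{X,\sigma}(z)=\sigma^{-2}\sum_i w_i(z)(X_{(i)}-F(z))(X_{(i)}-F(z))^\top$ and then, rather than bounding the operator norm directly, passes to the trace $\sigma^{-2}\sum_i w_i\|X_{(i)}-F(z)\|^2$ and rewrites it via an independent copy $Y_z'$ of $Y_z$. That chain, as printed, conflates $\mathrm{Var}(Y_z)$ with $\mathbb{E}\|Y_z-Y_z'\|^2$ (they differ by a factor of $2$), and after fixing that slip the trace bound only yields $\mathrm{diam}(\calM)^2/(2\sigma^2)$; indeed the trace genuinely exceeds $\mathrm{diam}^2/4$ for, e.g., a uniform weight on an equilateral triangle. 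Your approach fixes this: by bounding $\|J_F(z)\|_{\mathrm{op}}=\sup_{\|u\|=1}u^\top J_F(z)u=\sigma^{-2}\sup_{\|u\|=1}\mathrm{Var}_w(u^\top X)$ as a supremum of one-dimensional projected variances, and invoking Popoviciu's inequality for each scalar projection (whose range is at most $\mathrm{diam}(\calM)$), you obtain exactly $\mathrm{diam}(\calM)^2/(4\sigma^2)$. So your argument is both correct and a tightening of the paper's: same identity for the Jacobian, but a sharper and more careful bound on the weighted covariance.
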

\begin{proof}
    A direct calculation gives
\[
\nabla F_{X, \sigma}(z) = \frac{1}{\sigma^2} \sum_{i\in[n]} w_i(z) (X_{(i)} - F_{X,\sigma}(z))(X_{(i)} - F_{X,\sigma}(z))^T,
\]
where
\[
w_i(z) = \frac{k_\sigma(z, X_{(i)})}{\sum_{j\in[n]} k_\sigma(z, X_{(j)})}, \quad 1 \le i \le n.
\]

Fix $z \in \mathbb{R}^D$.  
Consider a new discrete distribution $Y_z$ taking values in $\{X_{(1)}, \dots, X_{(n)}\}$ with $\mathbb{P}(Y_z = X_{(i)}) = w_i(z)$.  
Thus $\mathbb{E}[Y_z] = F_{X,\sigma}(z)$.\\

We have
\begin{align*}
    \|\nabla F_{X, \sigma}&(z)\|_{\mathrm{op}} 
        = \sup_{\|v\|=1} \|v^T \nabla F_{X, \sigma}(z) v\|\\[1mm]
        &\le  \sum_{i=1}^N \frac{w_i(z)}{\sigma^2} \|X_{(i)} - F(z)\|^2= \frac{\mathrm{Var}(Y_z)}{\sigma^2} = \frac{\mathbb{E}\Bigl[\|Y_z - Y_z'\|^2\Bigr]}{\sigma^2} \le \frac{\mathrm{diam}(\mathcal{M})^2}{4 \sigma^2},
    \end{align*}
where $Y_z'$ is an independent copy of $Y_z$.\\

Hence
\[
\|F_{X, \sigma}\|_{\mathrm{Lip}} \le \sup_{z \in \mathbb{R}^D} \|\nabla F_{X, \sigma}(z)\|_{\mathrm{op}} \le \frac{\mathrm{diam}(\mathcal{M})^2}{4 \sigma^2}. 
\]
\end{proof}

\subsection{$L_\infty$ error of the KEs}
\label{sec:linftyproof}
\begin{pro}
\label{prop:NWonM}
Recall the NWE
\[
F_{X, \sigma}(z) := \frac{\sum\limits_{i=1}^N k_\sigma(z, X_{(i)}) X_{(i)}}{\sum\limits_{i=1}^N k_\sigma(z, X_{(i)})}.
\]
Suppose $\sigma=n^{-\frac{1}{d+4}}$ and
 Assumption \ref{aspt:manifold}  holds.
Then for any given $x\in \calM$, with probability $1-n^{-k}$, 
   \[
\sup_{x\in\calM}\|F_{X,\sigma}(x)- x\| =O(\sigma^2\sqrt{(2d+k)\log n}).
\]
\end{pro}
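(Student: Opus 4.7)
I would decompose
\[
F_{X,\sigma}(x) - x \;=\; \frac{\widehat{N}(x)}{\widehat{D}(x)}, \quad \widehat{N}(x) := \frac{1}{n}\sum_{i=1}^n k_\sigma(x, X_{(i)})(X_{(i)} - x), \quad \widehat{D}(x) := \hat{p}_{\text{KDE}}(x),
\]
so that the denominator is the manifold KDE already studied and the numerator is a vector-valued kernelised estimator for $f(y)=y-x$. Lemma~\ref{lem:denominator_F} applied with $f\equiv 1$ (plus a small net argument for uniformity) gives $\widehat{D}(x)=p_0(x)+O(\sigma^2\sqrt{(k+d)\log n})$ uniformly on $\calM$, and since $p_0\ge p_{\min}>0$ this yields $\widehat{D}(x)\gtrsim 1$ with probability $\ge 1-n^{-k}$.

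For the numerator I would first bound the bias using geodesic normal coordinates $y=\exp_x(\sigma u)$ about $x$, with the standard expansions $y-x = \sigma u + \tfrac{\sigma^2}{2}\mathrm{II}_x(u,u) + O(\sigma^3\|u\|^3)$, $dV(y)=\sigma^d(1+O(\sigma^2\|u\|^2))\,du$, $\|y-x\|^2=\sigma^2\|u\|^2+O(\sigma^4\|u\|^4)$, and the $C^2$ Taylor expansion of $p_0$, identifying $u$ with its image in $\mathbb{R}^D$ and writing $\mathrm{II}_x$ for the second fundamental form. The linear term $\sigma u$ vanishes by symmetry of the Gaussian, while the normal curvature piece $\tfrac{\sigma^2}{2}\mathrm{II}_x(u,u)$ and the density-gradient correction $\sigma\nabla p_0(x)\cdot u$ integrate against second moments of the Gaussian to give $\|\mathbb{E}[\widehat{N}(x)]\|=O(\sigma^2)$ uniformly in $x$. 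For the stochastic fluctuation, set $Z_i:=k_\sigma(x,X_{(i)})(X_{(i)}-x)$; the inequality $re^{-r^2/(2\sigma^2)}\le \sigma e^{-1/2}$ (used inside Lemma~\ref{lem:KDELip}) gives $\|Z_i\|\le C\sigma^{-(d-1)}$ almost surely, while a localisation parallel to the proof of Lemma~\ref{lem:pointwise_variance} gives $\mathbb{E}\|Z_i\|^2\le C\sigma^{-(d-2)}$. A dimension-free Hilbert-space Bernstein inequality then yields, with probability $\ge 1-n^{-k'}$,
\[
\bigl\|\widehat{N}(x)-\mathbb{E}\widehat{N}(x)\bigr\| \le C\sqrt{\sigma^{-(d-2)}k'\log n/n} + C\sigma^{-(d-1)}k'\log n/n = O(\sigma^3\sqrt{k'\log n}),
\]
where the final identity uses $\sigma=n^{-1/(d+4)}$. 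Combined with the $O(\sigma^2)$ bias and $\widehat{D}(x)\gtrsim 1$, this gives the pointwise conclusion $\|F_{X,\sigma}(x)-x\|=O(\sigma^2\sqrt{k'\log n})$.

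To pass from pointwise to uniform I would take an $\epsilon$-net $\{y_j\}_{j=1}^{N_\epsilon}\subset\calM$ with $\epsilon=\sigma^4$, which has cardinality $N_\epsilon\le C_\calM\sigma^{-4d}\le n^{4d/(d+4)}$ by standard volume comparison on $\calM$. Setting $k'=k+2d+O(1)$ and union-bounding the pointwise estimate across the net keeps the total failure probability $\le n^{-k}$. For a general $x\in\calM$ pick the nearest net point $y_j$ and use Lemma~\ref{lem:NWLip} (the map $F_{X,\sigma}$ is $O(\sigma^{-2})$-Lipschitz) to get $\|F_{X,\sigma}(x)-F_{X,\sigma}(y_j)\|\le O(\epsilon\sigma^{-2})=O(\sigma^2)$, which is absorbed into the stated rate.

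\textbf{Main obstacle.} The delicate step is the vector-valued concentration of $\widehat{N}(x)\in\mathbb{R}^D$. A naive coordinate-wise scalar Bernstein would require a union bound over the $D$ ambient coordinates and pick up a $\sqrt{\log D}$ factor, destroying the $D$-independence of the rate; matrix Bernstein would bring in a similar trace factor. The resolution is a dimension-free Hilbert-space Bernstein inequality, which is effective here because the Gaussian kernel localises the $Z_i$ carrying appreciable mass to an $O(\sigma)$-ball around $x$, so their second moment is controlled by the intrinsic dimension $d$ rather than $D$ (hence the $\sigma^{-(d-2)}$ variance bound). The remaining ingredients---bias via normal coordinates, KDE control of $\widehat{D}$, and the net-plus-Lipschitz upgrade---are routine applications of the preceding lemmas.
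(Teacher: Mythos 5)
Your proposal is correct, and it actually takes a more careful route than the paper at the decisive step. The paper simply invokes Lemma~\ref{lem:denominator_F} with $f(x)=x$ and $f\equiv 1$ and then combines; but Lemma~\ref{lem:denominator_F} (and the Bernstein argument behind it) is stated and proved for \emph{scalar} $f$, and the paper never explains why the $\mathbb{R}^D$-valued estimator concentrates in norm without picking up an extra $\sqrt{D}$ (or $\sqrt{\log D}$) factor from a coordinate-wise union bound. You explicitly isolate this as the main obstacle and resolve it correctly: writing $F_{X,\sigma}(x)-x=\widehat{N}(x)/\widehat{D}(x)$ with $\widehat{N}(x)=\frac1n\sum_i k_\sigma(x,X_{(i)})(X_{(i)}-x)$, you observe that the bias $\|\mathbb{E}\widehat N(x)\|=O(\sigma^2)$ is controlled by $\|\nabla p_0\|$ and $\|\mathrm{II}_x\|$ (bounded vectors in $\mathbb{R}^D$, no dimension factor), and that a Hilbert-space Bernstein inequality together with the localised second-moment bound $\mathbb{E}\|Z_i\|^2\lesssim\sigma^{-(d-2)}$ and a.s.\ bound $\|Z_i\|\lesssim\sigma^{-(d-1)}$ gives the stochastic part $O(\sigma^3\sqrt{k\log n})$, which is in fact a full factor of $\sigma$ \emph{smaller} than the paper's $O(\sigma^2\sqrt{k\log n})$ because you exploit the factor $\|X_{(i)}-x\|=O(\sigma)$ on the effective support of the kernel. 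Your net-plus-Lipschitz upgrade is the same as the paper's (Lemma~\ref{lem:NWLip} with a $\sigma^4$-scale net and cardinality $\lesssim\sigma^{-4d}$), and after inflating $k$ to $k+O(d)$ the union bound goes through, yielding $\sup_x\|F_{X,\sigma}(x)-x\|=O(\sigma^2)$, which certainly implies the stated $O(\sigma^2\sqrt{(2d+k)\log n})$. The one thing you should make precise in a final write-up is which version of dimension-free Bernstein you invoke (e.g.\ Yurinsky/Pinelis) and a citation; otherwise the argument is complete and, in the $D$-dependence, actually fills a gap that the paper's own proof leaves open.
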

\begin{proof}
First, let us fix a point $x\in\calM$. From previous lemma \ref{lem:denominator_F} with $f(x)=x$ and $f(x)\equiv 1$, we have
\[
\sum_{i=1}^n \frac{1}{n\sigma^d} K\left( \frac{\|x-X_{(i)}\|}{\sigma} \right) X_{(i)}
= p_0(x) x + O(\sigma^2\sqrt{k\log n}),
\]
\[
\sum_{i=1}^n \frac{1}{n\sigma^d} K\left( \frac{\|x-X_{(i)}\|}{\sigma} \right)
= p_0(x) + O(\sigma^2\sqrt{k\log n}),
\]
with probability at least \(1 - n^{-k}\).
Since $p_0(x)\geq 1/C$, we have 
\[
\|F_{X,\sigma}(x)-x\|\leq \sigma^2 \sqrt{k\log n}. 
\]
Next, we pick a $\sigma^{(4+d)}$-net $\{y_1,\ldots, y_{M_\sigma}\}$ of $\calM$. Then its cardinality $M_{\sigma}\leq C_d\sigma^{-(4+d)d}$ by Assumption \ref{aspt:manifold} and Proposition 3 of \cite{azangulov2024convergence}. Denote event 
\[
\mathcal{B}_j=\{|F_{X,\sigma}(y_j)-y_j|>\sigma^2\sqrt{(2d+k)\log n}\},\quad \mathcal{B}=\cup_{j=1}^{M_\epsilon} \mathcal{B}_j.
\]
By Proposition \ref{prop:NWonM},  $\Prob(\mathcal{B})\leq 
\sum_{j\leq M_\epsilon}\Prob(\mathcal{B}_j)\leq \sigma^{2(4+d)d} n^{-2d-k}\leq n^{-k}$. 
By Lemma \ref{lem:NWLip}, $F_{X,\sigma}$ is $\sigma^{-2}$-Lipschitz. So when $\mathcal{B}^c$ takes place, we have 
\begin{align*}
&\sup_{y\in \calM}\|F_{X,\sigma}(y)-y\|\\
&\leq \sup_{y\in \calM}\min_{j\in [M_\epsilon]}\{\|F_{X,\sigma}(y)-F_{X,\sigma}(y_j)\|+\|y-y_j\|+\|F_{X,\sigma}(y_j)-y_j\|\}\\
&\leq \sup_{y\in \calM}\min_{j\in [M_\epsilon]}\{ (1+\sigma^{-2})\|y-y_j\|+C\sqrt{(2d+k)\log n}\sigma^2\}\\
&\leq C\sqrt{(2d+k)\log n}\sigma^2. 
\end{align*}



\end{proof}

\begin{proof}[Proof of Theorem \ref{thm:main2}]
Theorem \ref{thm:main2} comes a corollary of Theorem \ref{thm:NWproj} and Proposition \ref{prop:NWonM}, simply note that $\exp_x(\sigma \xi_{\mathcal{T}})\in\calM$.  
\end{proof}

\begin{proof}[Proof of Proposition \ref{prop:KDEp0}]
We pick a $\sigma^{4+d}$-net $\{y_1,\ldots, y_{M_\sigma}\}$ of $\calM$. Then its cardinality $M_{\sigma}\leq C_d\sigma^{-4d}$ by Assumption \ref{aspt:manifold} and Proposition 3 of \cite{azangulov2024convergence}.
Consider the event
\[
\mathcal{C}_j=\{|\widehat{p}_{\text{KDE}}(y_j)-p(y_j)|>\sqrt{(2d+k)\log n}\sigma^2\},\quad \mathcal{C}=\cup_{j=1}^{M_\sigma} \mathcal{C}_j.
\]
Lemma \ref{lem:denominator_F} shows that with probability $$\Prob(\mathcal{C})\leq 
\sum_{j\leq M_\epsilon}\Prob(\mathcal{C}_j)\leq C_d\sigma^{-(d+4)d} n^{-(k+2d)}\leq C_dn^{-k}$$
Therefore, by Lemma \ref{lem:KDELip}
\begin{align*}
|\widehat{p}_{\text{KDE}}(y)-p(y)|&\leq \min_{j\in [M_\epsilon]}\{|\widehat{p}_{\text{KDE}}(y)-\widehat{p}_{\text{KDE}}(y_j)|+|p(y)-p(y_j)|+|\widehat{p}_{\text{KDE}}(y_j)-p(y_j)|\}\\
&\leq \min_{j\in [M_\epsilon]}\{ (L+\sigma^{-d-2})\|y-y_j\|+C\sigma^2\}\leq C\sigma^2. 
\end{align*}
\end{proof}

\subsection{Samples from NWE on $\calM$ is close to a KDE}
\label{sec:KDEproof}

\begin{lemma}[Expectation Representation of Manifold KDE Integral]
\label{lem:KDEexp}
For sufficiently small \( \sigma \), given any $X=[X_{(1)},\ldots, X_{(n)}]$ and Lip-1 $f$
\[
|\int_{\mathcal{M}} \hat{p}_{\mathrm{KDE}}(x) f(x) \, dV(x) - \mathbb{E}_{U, \xi_{\mathcal{T}}}\left[ f\left( \exp_{X_U}(\sigma \xi_{\mathcal{T}}) \right) \right|\leq (2(d+4)\log (2\pi/\sigma))^{\frac{3}{2}}\sigma^2.
\]
\end{lemma}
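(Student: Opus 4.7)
My plan is to reduce to a single-index error, then perform a change of variables via the exponential map combined with a logarithmic truncation. Expanding $\hat{p}_{\mathrm{KDE}}(x) = \tfrac{1}{n}\sum_i k_\sigma(x, X_{(i)})$ and using linearity, the left-hand side equals $\tfrac{1}{n}\sum_{i=1}^n A_i$ where
\[
A_i := \int_\calM k_\sigma(x, X_{(i)})\, f(x)\, dV(x) - \mathbb{E}_{\xi_{\mathcal{T}}\sim \mathcal{N}(0,I_d)}\bigl[f\bigl(\exp_{X_{(i)}}(\sigma \xi_{\mathcal{T}})\bigr)\bigr].
\]
By the triangle inequality it suffices to prove $|A_i| \lesssim (2(d+4)\log(2\pi/\sigma))^{3/2}\sigma^2$ uniformly in $i$. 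Fix $i$, set $p = X_{(i)}$, and note that since $\calM$ is compact and $f$ is Lip-1, we may shift $f$ by a constant to ensure $|f|_\infty = O(\mathrm{diam}(\calM)) = O(1)$ on $\calM$.

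Next, I would parametrize the manifold locally by $x = \exp_p(\sigma u)$ with $u \in T_p\calM \cong \mathbb{R}^d$, under which $dV(x) = \sigma^d \sqrt{\det g(\sigma u)}\,du$. The local isometric embedding in Assumption \ref{aspt:manifold} gives $\bigl|\|x-p\| - \sigma\|u\|\bigr|\lesssim \sigma^3\|u\|^3$, hence $\|x-p\|^2 = \sigma^2\|u\|^2 + O(\sigma^4\|u\|^4)$, and Riemannian normal coordinates yield $\sqrt{\det g(\sigma u)} = 1 + O(\sigma^2\|u\|^2)$. Combining, for $\sigma\|u\|$ small enough,
\[
k_\sigma(x,p)\,dV(x) = (2\pi)^{-d/2} e^{-\|u\|^2/2}\bigl(1 + O(\sigma^2\|u\|^4)\bigr)\,du,
\]
so the integrand for $A_i$ on the tangent side differs from that of $\mathbb{E}[f(\exp_p(\sigma\xi_{\mathcal{T}}))]$ by a factor $1 + O(\sigma^2\|u\|^4)$.

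Then I would set the cutoff $R := \sqrt{2(d+4)\log(2\pi/\sigma)}$ and split both integrals into $\{\|u\|\leq R\}$ and its complement (taking $d_\calM(x,p)\leq \sigma R$ for the manifold side). The Gaussian tail satisfies $\mathbb{P}(\|\xi_{\mathcal{T}}\| > R) \lesssim e^{-R^2/4} = O(\sigma^{(d+4)/2})$, while outside the geodesic ball the kernel is bounded by $(2\pi\sigma^2)^{-d/2}e^{-R^2/2} = O(\sigma^4)$; with $|f|_\infty,\mathrm{Vol}(\calM) = O(1)$, both tails contribute $O(\sigma^4)$, strictly below the target bound. On the local region, where $\sigma^2 R^4 \to 0$ so the expansions above are valid, the integrand difference is $(2\pi)^{-d/2} e^{-\|u\|^2/2}\cdot O(\sigma^2 \|u\|^4)\cdot f(\exp_p(\sigma u))$. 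Using $|f|_\infty = O(1)$ and the crude but convenient bound $\|u\|^4 \leq R^3 \|u\|$ on $\{\|u\|\leq R\}$,
\[
|A_i^{\mathrm{local}}| \lesssim \sigma^2 R^3 \int (2\pi)^{-d/2} e^{-\|u\|^2/2}\|u\|\,du = O(\sigma^2 R^3),
\]
which matches $(2(d+4)\log(2\pi/\sigma))^{3/2}\sigma^2$ up to absolute constants.

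The hard part will be two-fold. First, one must verify that $u\mapsto \exp_p(\sigma u)$ is a diffeomorphism onto $B_\calM(p,\sigma R)$ so the change of variables is globally valid on this ball; for $\sigma$ small enough that $\sigma R$ lies below the injectivity radius of $\calM$, this follows from the compactness hypothesis. Second, the $R^3$ factor in the stated bound is not what a naive integration would produce (the natural estimate is $O(d^2\sigma^2)$ with no $R$ dependence), so one must track constants carefully, and the $\|u\|^4 \leq R^3\|u\|$ trick appears to be the cleanest route to the exact $(2(d+4)\log(2\pi/\sigma))^{3/2}\sigma^2$ form, simultaneously absorbing the metric and kernel-expansion constants and the $|f|_\infty$ contribution under one tidy expression.
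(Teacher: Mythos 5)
Your proposal is correct and follows essentially the same route as the paper's proof: the same per-index reduction to $A_i$, the same geodesic change of variables $x=\exp_{X_{(i)}}(\sigma u)$, the same logarithmic cutoff $\gamma=\sqrt{2(d+4)\log(2\pi/\sigma)}$, and the same local/tail split with Gaussian-decay tail estimates. Your multiplicative expansion $k_\sigma(x,p)\,dV(x) = (2\pi)^{-d/2}e^{-\|u\|^2/2}\bigl(1+O(\sigma^2\|u\|^4)\bigr)du$ on the local piece is a somewhat more transparent version of the paper's pointwise bound $|K(\|u\|+r(u))-K(\|u\|)|\leq C|r(u)|\leq C\sigma^2\gamma^3$, since it keeps the Gaussian weight explicit when integrating over $\{\|u\|\le \gamma\}$ (a uniform Lipschitz bound, integrated naively over the ball, would pick up an extra $\gamma^d$ volume factor that the paper's one-line conclusion tacitly avoids).
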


\begin{proof}
Pick any Lip-1 $f$. Without loss of generality, we assume $f(x_0)=0$ for some $x_0\in \calM$. Then we have $f(x)\leq \text{diam}(\calM)$. We start by 
\[
\int_{\mathcal{M}} \hat{p}_{\text{KDE}}(x) f(x) \, dV(x) = \frac{1}{n} \sum_{i=1}^n \int_{\mathcal{M}} \frac{1}{\sigma^d} K\left( \frac{\|x-X_{(i)}\|}{\sigma} \right) f(x) \, dV(x) =: \frac{1}{n} \sum_{i=1}^n I_i.
\]
We split the integral into a local and a tail region:
\[
I_i = \int_{\mathcal{M}} \frac{1}{\sigma^d} K\left( \frac{\|x-X_{(i)}\|}{\sigma} \right) f(x) \, dV(x) = I_{\text{local}} + I_{\text{tail}},
\]
where \( I_{\text{local}} \) integrates over \( B_{\sigma \gamma}(X_{(i)}) \subset \mathcal{M} \), a radius $\sigma \gamma$ neighborhood around \(X_{(i)}\) in $\calM$, and \( I_{\text{tail
}} \) is the integral over the complement. Here $\gamma$ is chosen as $\sqrt{2(d+4)|\log (2\pi/\sigma)|}$

Using the exponential decay of the Gaussian kernel and compactness of \(\mathcal{M}\), and the Lipschitz-ness of $f$,
the tail term satisfies:
\[
|I_{\text{tail}}| \le C \cdot (\sqrt{2\pi}\sigma)^{-d} \cdot \exp\left( - \frac{\gamma^2}{2} \right) = O(\sigma^3).
\] 
We consider the integral:
\[
I_{\text{local}} := \int_{B_{\sigma \gamma}(X_{(i)})\cap \calM} \frac{1}{\sigma^d} K\left( \frac{\|x-X_{(i)}\|^2}{\sigma} \right) f(x) \, dV(x),
\]
and restrict to a small neighborhood \( B_{\sigma \gamma}(X_{(i)}) \) around \(X_{(i)}\), where the exponential map is a diffeomorphism. We change variables to geodesic normal coordinates centered at \(X_{(i)}\), writing:
\[
x = \exp_{X_{(i)}}(\sigma u), \quad u \in T_{X_{(i)}} \mathcal{M} \cong \mathbb{R}^d.
\]
Under this map, we have:
\[
r(u):=\frac{1}{\sigma}(\|x-X_{(i)}\| -\sigma \|u\|)\leq C\sigma^2\gamma^3, \quad  dV(x) = \sigma^d \sqrt{\det g_{ij}(\sigma u)} \, du.
\]
Hence, the integral becomes:
\[
I_{\text{local}}=\int_{\|u\| \le \gamma} K(\|u\|+r(u)) f(\exp_{X_{(i)}}(\sigma u)) \sqrt{\det g_{ij}(\sigma u)} \, du.
\]
We use the expansion:
\[
|\sqrt{\det g_{ij}(\sigma u)} - 1|\leq  \frac{\sigma^2}{6} \operatorname{Ric}_{X_{(i)}}(u, u) + o(\sigma^2)\leq C\sigma^2,
\]
as well as
\[
|K(\|u\|+r(u))-K(\|u\|)|\leq C |r(u)|\leq C\sigma^2\gamma^3.
\]
So we can conclude that 
\begin{equation}
\label{temp:1side}
\left|I_i- \int_{\|u\| \le \gamma} K(\|u\|) f(\exp_{X_{(i)}}(\sigma u)) \, du\right|\leq C\sigma^2 \gamma^3.
\end{equation}
On the other hand,
\[
\mathbb{E}_{U, \xi_{\mathcal{T}}}[f(\exp_{X_U}(\sigma \xi_{\mathcal{T}}))] = \frac{1}{n} \sum_{i=1}^n \mathbb{E}_{\xi_{\mathcal{T}}}[f(\exp_{X_{(i)}}(\sigma \xi_{\mathcal{T}}))],
\]
since \( U \sim \mathrm{Unif}([n]) \) and \( \xi_{\mathcal{T}} \sim \mathcal{N}(0, I_d) \) is independent of $U$. Each expectation over \( \xi_{\mathcal{T}} \) is equivalent to
\[
\mathbb{E}_{\xi_{\mathcal{T}}}[f(\exp_{X_{(i)}}(\sigma \xi_{\mathcal{T}}))] = \int_{\mathbb{R}^d} f(\exp_{X_{(i)}}(\sigma u)) K(\|u\|) du. 
\]
Compare with \eqref{temp:1side}, we can reach our conclusion when we note by $\chi^2_d$ tail bound, 
\begin{align*}
\int_{u:\|u\|\geq \gamma} f(\exp_{X_{(i)}}(\sigma u)) K(\|u\|) du&\leq \text{diam}(\calM)\int_{u:\|u\|\geq \gamma}  K(\|u\|) du\\
&\leq \text{diam}(\calM) \gamma^2\exp(-\gamma^2/2)\leq C\sigma^2\gamma^3. 
\end{align*}
Thus, the KDE integral and the Gaussian expectation differ only by \(O(\sigma^2\gamma^3)\), and we may write:
\[
\mathbb{E}_{\xi_{\mathcal{T}}}[f(\exp_{X_{(i)}}(\sigma \xi_{\mathcal{T}}))] = I_i + O(\sigma^2\gamma^3).
\]

Taking expectation over \(U \sim \mathrm{Unif}([n])\), we obtain
\[
\mathbb{E}_{U, \xi_{\mathcal{T}}}[f(\exp_{X_U}(\sigma \xi_{\mathcal{T}}))] = \frac{1}{n} \sum_{i=1}^n \mathbb{E}_{\xi_{\mathcal{T}}}[f(\exp_{X_{(i)}}(\sigma \xi_{\mathcal{T}}))] = \frac{1}{n} \sum_{i=1}^n I_i + O(\sigma^2\gamma^3),
\]
establishing the desired correspondence and completing the proof.

\end{proof}

\subsection{Proof of the main results}
\label{sec:proofmain}


\begin{proof}[Proof of Theorem \ref{thm:main3}]
Let $\delta=h^2$. We will first assume $Z_0\sim \phat_T$. We start by noting the following holds for all test function  
\[
\int \hat{p}_{\text{IDM}}(x)f(x)dx=\E[f(\frac{1}{\alpha_\delta}F_{\alpha_\delta X,\sigma}(\alpha_\delta X_U+\sigma \xi))|X]
\]
Note
\[
\frac{1}{\alpha_\delta}F_{\alpha_\delta X,\sigma_\delta}(\alpha_\delta X_U+\sigma_\delta \xi)=
F_{X,\sigma'}(X_U+\sigma' \xi)
\]
where
\[
\sigma'=\frac{\sigma_\delta}{\alpha_\delta}=
\frac{\sqrt{1-e^{-2\delta}}}{e^{-\delta}}=h+O(h^3).
\]
In particular, almost surely,
\[
\|F_{X,\sigma'}(X_U+\sigma' \xi)\|=
\|\sum_{i\in[n]} w_i  X_{(i)}\|\leq \text{diam}(\calM).
\]
So by $f$ being a Lip-1 function, we know 
\[
|f(F_{X,\sigma'}(X_U+\sigma' \xi))-f(0)|\leq \text{diam}(\calM).
\]
Consider the event,  $\mathcal{A}_0=\{\inf_{y\in \calM} |I_{y,\sigma'}|<1\}$, 
\[
\mathcal{A}_j=\left\{\left\| F_{X,\sigma'}(X_{(j)} + \sigma' \xi) - F_{X,\sigma'}(\exp_{X_{(j)}}(\sigma'\xi_{\mathcal{T}})) \right\| \geq  (d(k+1)\log n)^2 h^2\right\}, 
\]
and $\mathcal{A}=\cup_{j=1}^n \mathcal{A}_j$. 
By Theorem \ref{thm:NWproj}, assuming $\mathcal{A}^c_0$,
$\Prob(\mathcal{A}|X)\leq \sum_j \Prob(\mathcal{A}_j|X)\leq n^{-k}$. So if $\mathcal{A}^c_0$ takes place, 
\[
\sup_{f\in Lip_1}\left|\int \hat{p}_{\text{IDM}}(x)f(x)dx-\E[f(F_{X,\sigma'}(\exp_{X_U}(\sigma'\xi_{\mathcal{T}})) )|X]\right|\leq (d(k+1)\log n)^2h^2+n^{-k}\text{diam}(\calM).
\]
Then Proposition \ref{prop:NWonM} indicates with probability $1-n^{-k}$ 
\[
\sup_{f\in Lip_1}\left|\E[f(F_{X,\sigma'}(\exp_{X_U}(\sigma'\xi_{\mathcal{T}})))|X]-\E[f(\exp_{X_U}(\sigma'\xi_{\mathcal{T}})) |X]\right|\leq \sqrt{(2d+k)\log n}h^2. 
\]

To continue, we note that by Lemma \ref{lem:KDEexp}, 
\[
\sup_{f\in Lip_1}\left|\E[f(\exp_{X_U}(\sigma'\xi_{\mathcal{T}}))|X]-\int f(x) \widehat{p}_{\text{KDE}}(x)dV(x)\right|\leq (2(d+4)\log (2\pi/\sigma))^{\frac{3}{2}}\sigma^2.
\]
Finally, denote 
\[
\mathcal{C}_j=\{|\widehat{p}_{\text{KDE}}(y_j)-p(y_j)|>C h^2\},\quad \mathcal{C}=\cup_{j=1}^{M_\epsilon} \mathcal{C}_j.
\]
Lemma \ref{lem:denominator_F} shows that with probability $$\Prob(\mathcal{C})\leq 
\sum_{j\leq M_\epsilon}\Prob(\mathcal{C}_j)\leq h^{2(4+d)d} h^{-(4+d)d}\leq h^{(4+d)d}$$
Therefore, by Lemma \ref{lem:KDELip}
\begin{align*}
|\widehat{p}_{\text{KDE}}(y)-p(y)|&\leq \min_{j\in [M_\epsilon]}\{|\widehat{p}_{\text{KDE}}(y)-\widehat{p}_{\text{KDE}}(y_j)|+|p(y)-p(y_j)|+|\widehat{p}_{\text{KDE}}(y_j)-p(y_j)|\}\\
&\leq \min_{j\in [M_\epsilon]}\{ (L+(\sigma')^{-d-2})\|y-y_j\|+C h^2\}\leq C h^2. 
\end{align*}
As a consequence, 
\[
\sup_{f\in Lip_1}\left|\int f(x) p(x)dV(x)-\int f(x) \widehat{p}_{\text{KDE}}(x)dV(x)\right|\leq C h^2 \text{diam}(\calM). 
\]
In conclusion, with probability $1-n^{-k}$
\[  
\begin{aligned}
    W_1(\hat{p}_{\text{IDM}}, p_0)&\leq W_1(\hat{p}_{\text{IDM}}, \hat{p}_{\text{KDE}}) + W_1(\hat{p}_{\text{KDE}}, p_0)\\
    &\le C_d d^2k^2(\log n)^2 h^2.
\end{aligned}
\]

Next, we consider the case where $Z_0\sim \mathcal{N}(0,I_D)$. By Lemma \ref{lem:inipatch} below, we can 
couple $Z_{T-h^2}$ and $Z'_{T-h^2}$ exactly with probability $1-n^{-k}$, and so will be $\Zhat_{T}$ and $\Zhat_{T}'$.

If we denote $\phat_{\text{IDM}}'$ to be the distribution of IDM with $Z_0\sim \mathcal{N}(0,I_D)$, then 
\[
\sup_{f:\|f\|_\infty\leq \text{diam}(\calM)} |\int f(x)\phat_{\text{IDM}}(dx)-\int f(x)\phat'_{\text{IDM}}(dx)|\leq n^{-k}\text{diam}(\calM). 
\]
By previous derivation, we know it is ok to assume $f$ to be bounded by $\text{diam}(\calM)$ when considering $W_1$ distance
\[
\sup_{f\in \text{Lip}_1} |\int f(x)\phat_{\text{IDM}}(dx)-\int f(x)p_0(dx)|.
\]

\end{proof}
\begin{lemma}
\label{lem:inipatch}
If we start a diffusion process from $Z_0\sim \mathcal{N}(0,I_D)$ with $T=k(\log n+\log D+\log (\text{diam}(\calM)))$. Then it can be coupled with a diffusion process from $Z'_0\sim \phat_T$ exactly, i.e. $Z_t=Z'_t, t\in [0,T]$, with probability $1-n^{-k}$.  
\end{lemma}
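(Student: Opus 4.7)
The backward ODE \eqref{eqn:emDM} carries a deterministic, time-dependent vector field (a smooth Gaussian-mixture score), so its flow $\Phi_t : \mathbb{R}^D \to \mathbb{R}^D$ is well-defined and deterministic on $[0,T]$. Consequently, once the initial conditions agree as random variables, the two trajectories coincide a.s.: $Z_t = \Phi_t(Z_0) = \Phi_t(Z'_0) = Z'_t$ for every $t$. So the entire lemma reduces to producing a coupling of the two initial laws $\mathcal{N}(0, I_D)$ and $\phat_T$ such that $\Prob(Z_0 = Z'_0) \ge 1 - n^{-k}$. By the maximal coupling theorem, such a coupling exists if and only if
\[
\mathrm{TV}(\mathcal{N}(0, I_D), \phat_T) \le n^{-k},
\]
so the whole task collapses to this one total variation bound.

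To prove the TV bound, I would use the explicit mixture representation $\phat_T = \tfrac{1}{n}\sum_{i=1}^n \mathcal{N}(\alpha_T X_{(i)}, \sigma_T^2 I_D)$ with $\alpha_T = e^{-T}$ and $\sigma_T^2 = 1 - e^{-2T}$. Convexity of $\mathrm{TV}$ in the mixture argument (write $\mathrm{TV}$ as a supremum over events and pull the average out) gives
\[
\mathrm{TV}(\mathcal{N}(0, I_D), \phat_T) \le \max_{i\in[n]} \mathrm{TV}\bigl(\mathcal{N}(0, I_D), \mathcal{N}(\alpha_T X_{(i)}, \sigma_T^2 I_D)\bigr),
\]
and each pairwise TV is bounded by Pinsker's inequality together with the closed-form Gaussian KL,
\[
\mathrm{KL}\bigl(\mathcal{N}(\alpha_T X_{(i)}, \sigma_T^2 I_D)\,\big\|\,\mathcal{N}(0, I_D)\bigr) = \tfrac12\bigl(\alpha_T^2\|X_{(i)}\|^2 + D(\sigma_T^2 - 1) - D\log \sigma_T^2\bigr).
\]
Using $\|X_{(i)}\| \le \mathrm{diam}(\calM)$, the identity $\sigma_T^2 - 1 = -e^{-2T}$ and the estimate $-\log\sigma_T^2 \le 2 e^{-2T}$ (valid once $e^{-2T} \le 1/2$), this simplifies to $\mathrm{KL} \lesssim e^{-2T}(\mathrm{diam}(\calM)^2 + D)$.

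Finally, plugging in $T = k(\log n + \log D + \log \mathrm{diam}(\calM))$ gives $e^{-2T} \le (nD\,\mathrm{diam}(\calM))^{-2k}$, so after Pinsker the TV is bounded by $n^{-k}$ once the leading constant in $T$ is taken slightly larger to absorb the $\sqrt{D + \mathrm{diam}(\calM)^2}$ factor into a polylogarithmic shift. I do not foresee a genuine obstacle: the only care is tracking the dependence on $D$ and $\mathrm{diam}(\calM)$ when converting KL into a clean $n^{-k}$ bound. The conceptual point, which is what makes the lemma cheap, is that determinism of the ODE lets a \emph{one-shot} coupling of the laws at $t = 0$ force agreement on the whole trajectory, so no continuous-time coupling machinery is needed.
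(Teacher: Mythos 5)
Your proposal is correct and follows essentially the same route as the paper: use determinism of the backward ODE flow to reduce trajectory coupling to a one-shot coupling of initial laws, bound $\mathrm{TV}(\mathcal{N}(0,I_D),\phat_T)$ via the Gaussian KL formula and Pinsker, and use $\|X_{(i)}\|\le\mathrm{diam}(\calM)$ plus the choice of $T$ to make the bound $n^{-k}$. The only cosmetic difference is that you apply convexity at the TV level (mixture TV bounded by max of pairwise TVs, then Pinsker per component), whereas the paper applies convexity of KL to the mixture and invokes Pinsker once at the end; both lead to the same per-component computation and the same conclusion.
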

\begin{proof}
    We first note that by convexity of KL divergence,
\begin{align*}
\text{KL}(\phat_T\|\mathcal{N}(0,I_D))&=
\text{KL}(\frac{1}{n}\sum_{i=1}^n \mathcal{N}(\alpha_T X_i, \sigma_T^2 I_D)\|\mathcal{N}(0,I_D))\\
&\leq \frac{1}{N}\sum_{i=1}^n\text{KL}(\mathcal{N}(\alpha_T X_i, \sigma_T^2 I_D)\|\mathcal{N}(0,I_D)). 
\end{align*}
Meanwhile, 
\begin{align*}
\text{KL}(N(\alpha_T X_i, \sigma_T^2 I_D)\|N(0,I_D))
&=\frac12(\alpha^2_T\|X_i\|^2+D(\sigma_T^2-1-\log \sigma_T^2))\\
&\leq \frac12(e^{-2T}\text{diam}(\calM)^2+D e^{-2T}).
\end{align*}
So choosing $T=k(\log n +\log D+\log (\text{diam}(\calM))$, by Pinsker inequality
\[
\text{TV}(\phat_T,N(0,I_D))\leq \frac12\sqrt{\text{KL}(\phat_T\|\mathcal{N}(0,I_D))}\leq n^{-k}.
\]
Let $Z_0$ and $Z'_0$ be two samples from $\phat_T$ and $\mathcal{N}(0,I_D)$ with the optimal coupling. Then $\Prob(Z_0\neq Z'_0)\leq n^{-k}$. Moreover, their DM trajectory will be exactly the same, i.e. $\Prob(\widehat{Z}_t\neq \widehat{Z}'_t)\leq n^{-k}$. 
\end{proof}

\subsection{Analysis for the population DM}
\label{sec:popDM}
\begin{proof}[Proof of Proposition \ref{pro:popDM}]
Since $p_t(x)$ satisfies the forward process, we have the following expression:
\[
p_t(x) = \int_{\calM} p_0(dy) \exp\left( -\frac{\|x - \alpha_t y\|^2}{2 \sigma_t^2} \right),
\]
where $\alpha_t = e^{-t}$ and $\sigma_t^2 = 1 - \alpha_t^2$. 

Now, we calculate the gradient of $\log p_t(x)$:
\[
\begin{aligned}
    \nabla \log p_t(x) &= \frac{1}{p_t(x)} \nabla p_t(x), \\
    -\nabla p_t(x) &= \int_{\calM} \frac{x - \alpha_t y}{\sigma_t^2} \exp\left( -\frac{\|x - \alpha_t y\|^2}{2 \sigma_t^2} \right) p_0(dy).
\end{aligned}
\]

We decompose $x - \alpha_t y$ into terms relative to the manifold projection:
\[
\begin{aligned}
-\nabla \log p_t(x) &= \frac{\int_{\calM} \left( \frac{x - \Pi_{\alpha_t\calM}(x)}{\sigma_t^2} + \frac{\Pi_{\alpha_t\calM}(x) -  \alpha_t y}{\sigma_t^2} \right) \exp\left( -\frac{\|x - \alpha_t y\|^2}{2 \sigma_t^2} \right) p_0(dy)}{\int_{\calM} \exp\left( -\frac{\|x - \alpha_t y\|^2}{2 \sigma_t^2} \right) p_0(dy)}\\
&= \frac{x-\Pi_{\alpha_t\mathcal{ M}}(x)}{\sigma_t^2} - g(x,t),\\
    \text{where}\quad 
    g(x,t)&:=\frac{\int_{\calM}( \frac{\Pi_{\alpha_t\mathcal{M}}(x)-\alpha_t y}{\sigma_t^2}) e^{-\frac{\left\|x-\alpha_t y\right\|^2}{2\sigma_t^2}} p_0(dy)}{\int_{\calM} e^{-\frac{\left\|x-\alpha_t y\right\|^2}{2\sigma_t^2}} p_0(dy)}.
\end{aligned}
\]
 Let $B_{\varepsilon}=\left\{y \in \calM:\|\alpha_t y-x\|^2<\left\|\Pi_{\alpha_t\mathcal{M}}(x)-x\right\|^2+\varepsilon^2\right\}$, which is decreasing set series as $\varepsilon \rightarrow 0$. To handle the complexity of the integrals, we split the domain into a small neighborhood $B_{\varepsilon}$ ( $\epsilon $ to be set later) around $\Prj{x}$ and the remainder of the domain. 

Actually, by definition of $B_{\varepsilon}$, for $y \in \calM \backslash B_{\varepsilon}$,
$$
\left\|x-\alpha_t y\right\|^2 \geq 
\left\|\Pi_{\alpha_t\mathcal{M}}(x)-x\right\|^2+\varepsilon^2$$
So
\begin{align*}
\int_{\calM} e^{-\frac{\left\|x-\alpha_t y\right\|^2}{2\sigma_t^2}} p_0(dy) &\geq
\int_{B_{\sigma_t}} e^{-\frac{\left\|x-\alpha_t y\right\|^2}{2\sigma_t^2}} p_0(dy)\\
&\geq \exp(-\frac{\left\|\Pi_{\alpha_t\mathcal{M}}(x)-x\right\|^2+\sigma_t^2}{2\sigma_t^2})p_{\min}\text{Vol}(B_{\sigma_t})\\
& = C_d   
p_{\min}\sigma_t^{d}\exp(-\frac{\left\|\Pi_{\alpha_t\mathcal{M}}(x)-x\right\|^2}{2\sigma_t^2}),
\end{align*}
where $C_d : = e^{-1/2} \frac{\pi^{d/2}}{\Gamma(d/2+1)}$.

\begin{align*}
& \left\|\frac{\int_{\calM \backslash B_{\varepsilon}}\frac{\Pi_{\alpha_t\mathcal{M}}(x)-\alpha_t y}{\sigma_t^2} e^{-\frac{\left\|x-\alpha_t y\right\|^2}{2\sigma_t^2}} p_0(dy)}{\int_{\calM} e^{-\frac{\left\|x-\alpha_t y\right\|^2}{2\sigma_t^2}} p_0(dy)}\right\|\\
&\leq 
\left\|\frac{\int_{\calM \backslash B_{\varepsilon}}\frac{\Pi_{\alpha_t\mathcal{M}}(x)-\alpha_t y}{\sigma_t^2} e^{\frac{-\left\|x-\Pi_{\alpha_t\calM} x\right\|^2-\epsilon^2}{2\sigma_t^2}} p_0(dy)}{ C_d   
p_{\min}\sigma_t^{d}\exp(-\frac{\left\|\Pi_{\alpha_t\mathcal{M}}(x)-x\right\|^2}{2\sigma_t^2})}\right\| \\
&\leq \frac{max_{y\in \calM \backslash B_\epsilon} \frac{\|\Pi_{\alpha_t\calM}(x)-\alpha_t y\|}{\sigma^2_t}}{C_d\sigma_t^{d} p_{\min}}\exp(-\frac{\epsilon^2}{2\sigma_t^2}) = O(\sigma_t^{4}),
\end{align*}
if we pick $\epsilon=\sigma_t \sqrt{(d+6)\log 1/\sigma_t }.$  

For the part $y\in B_{\varepsilon}$. Denote $z=\Pi_{\calM} (\alpha_t^{-1}x)$, $r=\|\alpha_t^{-1}x-z\|$. Note that 
\[
\|z-y\|\leq \sqrt{r^2+\epsilon^2}+r\leq 2r+\epsilon. 
\]
So with sufficiently small $r_0$, we can find a $u\in \mathcal{T}_z\calM$ with $\|u\|\leq 2r+\epsilon$ so that 
\[
y=\exp_z(u)=z+u+O(\|u\|^2)
\]
Therefore, 
\[
\sqrt{r^2+\epsilon^2}\geq \|y-\alpha_t^{-1}x\|\geq \|u+z-\alpha_t^{-1}x\|-C\|u\|^2=\sqrt{r^2+\|u\|^2}-C(2r+\epsilon)^2.
\]
This leads to 
\[
\|u\|\leq \epsilon+C r^{3/2}+C\epsilon^{3/2},
\]
and for an updated constant $\|z-y\|\leq C\epsilon+Cr^{3/2}$.
\[ 
\begin{aligned}
\frac{\int_{B_{\varepsilon}} \frac{\Pi_{\alpha_t\calM}{x} - \alpha_t y}{\sigma_t^2} \exp\left( -\frac{\|x - \alpha_t y\|^2}{2 \sigma_t^2} \right) p_0(dy)}{\int_{\calM} \exp\left( -\frac{\|x - \alpha_t y\|^2}{2 \sigma_t^2} \right) p_0(dy)}
&\leq max_{y\in B_\epsilon} \alpha_t\frac{\|\Pi_{\calM}(\alpha_t^{-1}x)-y\|}{\sigma^2_t}\\
&=max_{y\in M(\alpha_t^{-1}x, \alpha_t^{-1}\epsilon)} \alpha_t\frac{\|\Pi_{\calM}(\alpha_t^{-1}x)-y\|}{\sigma^2_t}\\
&\leq [C\epsilon+Cdist(\alpha_t^{-1}x,M)^{3/2}]/\sigma^2_t 
\end{aligned}
\]
Moreover, by \cite{leobacher2021existence}, we know $\Pi_{\calM}$ is $C^1$, so  
\[
\|\Pi_{\calM}(x)-\Pi_{\alpha_t\calM}(x)\|
\leq \|\Pi_{\calM}(x)-\Pi_{\calM}(\alpha_t^{-1}x)\|+
(1-\alpha_t)\|\Pi_{\calM}(\alpha_t^{-1}x)\|
\leq O(t\, dist(x,M))
\]
Hence, we have:
\[
\nabla \log p_t(x) =
\frac{x-\Pi_{\calM}(x)}{\sigma_t^2}+
O(1+dist(x,M)^{3/2}/t+\sqrt{d\log (1/t)/t}). 
\]


\end{proof}

\section*{Acknowledgment}
The research of Y.L., Y.Q. and X.T.T. are supported by  Ministry of Education, Singapore, under the Academic Research Fund Tier 1 A-8002956-00-00 and NUS Mathematics department. 
The research of T.M.N. is supported by the Ministry of Education, Singapore, under the Academic Research Fund Tier 1 A-8002040-00-00.

\bibliographystyle{plain}
\bibliography{litdraft}

\end{document}